\numberwithin{equation}{section}
\newtheorem{theorem}{\bf Theorem}[section]
\newtheorem{lemma}[theorem]{\bf Lemma}
\newtheorem{proposition}[theorem]{\bf Proposition}
\def \bR {\Bbb R}
\def \bN {\Bbb N}
\def\DD{\mathbb{D}} 
\def\cB{{\cal{B}}}
\def\cD{{\cal{D}}}
\def\cF{{\cal{F}}}
\def\cH{{\cal{H}}}
\def\cL{{\cal{L}}}
\def\cN{{\cal{N}}}
\def\cP{{\cal{P}}}
\def\cQ{{\cal{Q}}}
\def\ba{{\bf{a}}}
\def\bbb{{\bf{b}}}
\def\bc{{\bf{c}}}
\def\bd{{\bf{d}}}
\def\ff{{\bf{f}}}
\def\bg{{\bf{g}}}
\def\bw{{\bf{w}}}
\def\be{{\bf{e}}}
\def\bx{{\bf{x}}}
\def\by{{\bf{y}}}
\def\bz{{\bf{z}}}
\def\bA{{\bf{A}}}
\def\bB{{\bf{B}}}
\def\bD{{\bf{D}}}
\def\bI{{\bf{I}}}
\def\bJ{{\bf{J}}}
\def\bS{{\bf{S}}}
\def\bP{{\bf{P}}}
\def\bQ{{\bf{Q}}}
\def\bW{{\bf{W}}}
\def\bX{{\bf{X}}}
\def\bZ{{\bf{Z}}}
\newenvironment{proof}{\noindent{\em Proof:}}{\quad \hfill$\Box$\vspace{2ex}}
\newtheorem{definition}[theorem]{\bf Definition}
\renewcommand{\theequation}{\arabic{section}.\arabic{equation}}
\renewcommand{\thefigure}{\thesection.\arabic{figure}}
\renewcommand{\thetable}{\Roman{table}}
\begin{document}


\title{\bf Uniform Convergence of Deep Neural Networks with Lipschitz Continuous Activation Functions and Variable Widths}
\author{Yuesheng Xu\thanks{Department of Mathematics and Statistics, Old Dominion University, Norfolk, VA 23529, USA. E-mail address: {\it y1xu@odu.edu}. Supported in part by US National Science Foundation under grants DMS-1912958  and DMS-2208386, and by the US National Institutes of Health under grant
R21CA263876. Corresponding author.  } \ and Haizhang Zhang\thanks{
School of Mathematics (Zhuhai), Sun Yat-sen University, Zhuhai, P.R. China. E-mail address: zhhaizh2@sysu.edu.cn.
Supported in part by National Natural Science Foundation of China under grants 11971490 and 12126610.}}

\date{}
\maketitle

%
%

\begin{abstract}
We consider deep neural networks with a Lipschitz continuous activation function and with weight matrices of variable widths. We establish a uniform convergence analysis framework in which sufficient conditions on weight matrices and bias vectors together with the Lipschitz constant are provided to ensure uniform convergence of the deep neural networks to a meaningful function as the number of their layers tends to infinity. In the framework, special results on uniform convergence of deep neural networks with a fixed width, bounded widths and unbounded widths are presented. In particular,  
as convolutional neural networks are special deep neural networks with weight matrices of increasing widths, we put forward conditions on the mask sequence which lead to uniform convergence of resulting convolutional neural networks.
The Lipschitz continuity assumption on the activation functions allows us to include in our theory most of commonly used activation functions in applications.


\medskip

\end{abstract}

\noindent{\bf Keywords:} uniform convergence, deep neural network, convolutional neural network, Lipschitz continuous activation function, variable widths


\section{Introduction}

The last decade has witnessed the immense success of deep learning \cite{Goodfellow, LeCun}. As we all know, a major part of such successes are due to the powerful expressiveness of the deep neural networks in representing a function. In other words, the deep neural network is the engine of deep learning. In order to explain why deep learning works so well, there is need to understand deep neural networks from rigorous mathematical viewpoints. 
As deep neural networks have advanced in machine learning, they have gained much attention in the applied mathematics community \cite{Daubechies, Devore1, E, Montanelli1, Montanelli2, Shen1, Shen2, Shen3, Xu2022, Xu2023, Zhou1} and have gone beyond machine learning. 
Deep neural networks, considered as a function class to represent or approximate a function, have exhibited  superiority in many aspects to classical approximation functions such as polynomials, trigonometric polynomials, splines, finite elements, wavelets, and kernel functions in approximation and numerical analysis. Unlike the classical function approximation in which a function is approximated by a linear combination of basis functions, deep neural networks approximate a given function by compositions of functions with a networks. Specifically, a neural network is a vector-valued function defined through consecutive function compositions of a given activation function with parameters consisting of weight matrices and bias vectors. A deep neural network of a given function may be determined by finding the parameters that minimize the difference between it and the given function. Mathematically, one would expect that as the number of layers of the deep neural network increases, the difference diminishes and eventually goes to zero as the number tends to infinity. In a special case when the activation function is the rectified linear unit (ReLU), this question was investigated in a number of studies \cite{Daubechies, Shen3, Zhou1}. 

A closely related mathematical question, even more basic, is when a deep neural network converges to a meaningful function as its layer number tends to infinity. This question was studied recently for the ReLU activation function with a fixed width in \cite{XuZhang2021}, for the ReLU activation function with a convolution network in \cite{XuZhang2022} and for contractive activation functions with a fixed width in \cite{HuangXuZhang}. Although the contractivity hypothesis covers interesting activation functions such as sigmoid, there are many activation functions frequently used in applications that are not contractive, for example, ReLU, parametric rectified linear unit (PReLU), exponential linear unit (ELU) and scaled exponential linear unit (SELU), to name a few. Therefore, there is a need to understand uniform convergence of deep neural networks defined by a non-contractive activation function.  This paper will study uniform convergence of deep neural networks of Lipschitz continuous activation functions with weight matrices of variable widths (including bounded and unbounded widths).

Main difficulty in analyzing convergence of deep neural networks is a result of the nonlinearity of the activation function. This was overcome in \cite{XuZhang2021} for the ReLU activation function by re-expressing the functional application of the activation function in terms of matrix-vector multiplication with activation matrices. Pointwise convergence of neural networks with the ReLU activation function was then analyzed by using the matrix-vector multiplication. When a general Lipschitz continuous activation function is chosen, one can take the advantage of its Lipschitz continuity to overcome the difficulty caused by its nonlinearity. We propose a condition that intertwines the Lipschitz constant of the activation function with the weight matrices to ensure the uniform convergence of the resulting neural networks. The main contribution of this work lies in laying out a general framework for uniform convergence analysis of deep neural networks with general activation functions and pooling operators, both of which are Lipschitz continuous.

Understanding conditions that ensure convergence of deep neural networks is not only theoretically interesting, 
but also practically advantageous in guiding their training in applications. 
As we know, a deep learning model usually possesses a large number of hidden layers and a massive amount of parameters. For example, Residual Networks (ResNets) can reach over 1,000 layers \cite{KaimingHe}. A ResNet with only 50 layers has over 23 million parameters, and the overwhelming ChatGPT (GPT-3) model has approximately 175 billion parameters. In applications, the parameters of a DNN are determined via a training process by minimizing a loss function on given training data. 
It is desirable to figure out whether or not a DNN system with so many parameters can eventually converge to a meaningful function in a rigorous mathematical sense. Mathematical conditions on the parameter that ensure convergence of the DNNs as the number of their layers increases will be beneficial to the interpretability of the DNNs. Such conditions can also be helpful in training a DNN. For instance, if such conditions are available, people can generate initial parameters of a DNN that satisfy or nearly satisfy the conditions. With such an initial deployment of parameters, the DNN will be inclined to converge more quickly. Finally, uniform convergence rates of DNNs will be applicable to mathematical analysis on the generalization ability of DNN models \cite{HuangYang}.

We organize this paper in seven sections. In section 2, we describe the setting of deep neural networks. Section 3 is devoted to developing a general framework for uniform convergence analysis of deep neural
networks with pooling. A key ingredient that ensures uniform convergence of the deep neural networks is a condition that  intertwines the Lipschitz constants of the activation function and the pooling operator with the norm of the weight matrices. In sections 4 and 5, we present uniform convergence results for deep neural networks with weight matrices of  fixed widths and bounded widths, respectively. While in section 6, we consider deep neural networks with weight matrices of unbounded widths. Finally, in section 7 we present uniform convergence theorems for convolutional neural networks.

\section{Deep Neural Networks}

\setcounter{equation}{0}
In this section, we recall the definition of deep neural networks.

We now describe deep neural networks with width $m_n\in\bN$ at the $n$-th level, for $n\in\bN$, from input space $\bR^s$ to the output space $\bR^t$, where $s, t\in \bN$.
For a given univariate activation function $\sigma: \bR\to \bR$,
we define the vector-valued function
\begin{equation}\label{activationF}
    \sigma(\bx):=[\sigma(x_1), \sigma(x_2), \dots, \sigma(x_s)]^\top, \ \ \mbox{for}\ \ \bx:=[x_1,x_2,\dots,x_s]^\top\in \bR^s.
\end{equation}
As in \cite{XuZhang2021}, for $n$ vector-valued functions $f_j$, $j\in\bN_n:=\{1,2,\dots,n\}$, such that the range of $f_i$ is contained in the domain of $f_{i+1}$, $i\in\bN_{n-1}$, the consecutive composition of $f_i$, $i\in\bN_n$, is denoted by
\begin{equation}\label{consecutive_composition}
    \bigodot_{i=1}^n f_i:=f_n\circ f_{n-1}\circ\cdots\circ f_2\circ f_1,
\end{equation}
whose domain is that of $f_1$.
For each $n\in\bN$, by $\bW_n$ we denote the weight matrix, and by $\bbb_n$ the bias vector, of the $n$-th hidden layer. Clearly,  $\bW_n\in\bR^{m_n\times m_{n-1}}$, for $n\in\bN$ with $m_0:=s$, and $\bbb_n\in\bR^{m_n}$ for $n\in\bN$. 
The deep neural network is a function defined by
\begin{equation}\label{DNN}
    \cN_n(\bx):=\left(\bigodot_{i=1}^n \sigma(\bW_i \cdot+\bbb_i)\right)(\bx),\ \ \bx\in\bR^s.
\end{equation}
Given the weight matrix $\bW_{o}\in \bR^{t\times m_n}$ and bias vector  $\bbb_{o}\in\bR^t$, of the output layer, the output of the DNN is then given by
$$
\by=\bW_{o}\cN_n(\bx)+\bbb_{o},\ \ \bx\in\bR^s.
$$
We are concerned with convergence of the functions $\cN_n$ determined by the deep neural network as $n$ increases to infinity. Because the output layer is a linear function of $\cN_n$ and thus, it does not affect the convergence. Hence, we will consider convergence of the function sequence $\cN_n$, as $n\to\infty$.
For a fixed $n\in \bN$, $\cN_n$ is a vector-valued function.

It is convenient to express the neural network $\cN_n$ in recursion in $n$.
From \eqref{DNN} and the definition \eqref{activationF}, we have the recursion
\begin{equation}\label{Step1}
    \cN_1(\bx):=\sigma(\bW_1 \bx+\bbb_1)
\end{equation}
and
\begin{equation}\label{Recursion}
    \cN_{n+1}(\bx)=\sigma(\bW_{n+1}\cN_n(\bx)+\bbb_{n+1}), \ \ \bx\in \bR^s, \ \ \mbox{for all} \ \ n\in \bN.
\end{equation}

The goal of this paper is to understand what conditions are required for the weight matrices $\bW_n$ and the bias vectors $\bbb_n$ to ensure convergence of the deep neural network for a general activation function.
For this purpose, we suppose that $\sigma$ is Lipschitz continuous with the Lipschitz constant $L$. That is,
\begin{equation}\label{Lip-cont-1D}
    |\sigma(x)-\sigma(y)|\leq L|x-y|, \ \ \mbox{for all}\ \ x, y\in \bR.
\end{equation}
When $L< 1$, $\sigma$ is contractive and when $L=1$, $\sigma$ is non-expansive. Convergence of deep neural networks with a non-expansive activation function and uniform convergence of deep neural networks with a contractive activation function were established in a recent paper \cite{HuangXuZhang}. We are particularly interested in understanding conditions that ensure uniform convergence of deep neural networks with a Lipschitz continuous activation function with $L>1$.
Many commonly used activation functions are Lipschitz continuous. Among them, some have their Lipschitz constants $L>1$. For example, the parametric rectified linear unit (PReLU) \cite{He2015}
$$
\sigma(x):=\begin{cases}
\alpha x,\ \ & x< 0,\\
x,\ \ & x\geq 0
\end{cases}
$$
is Lipschitz continuous with the Lipschitz constant $L:=\max\{\alpha, 1\}$. When $\alpha>1$, PReLU is expanding.
The exponential linear unit (ELU) \cite{Clevert} is defined by
$$
\sigma(x):=
\begin{cases}
\alpha(e^x-1),\ \ & x\leq 0,\\
x,\ \ & x> 0.
\end{cases}
$$
It can be verified that ELU is Lipschitz continuous with the Lipschitz constant $L:=\max\{\alpha, 1\}$. When $\alpha>1$, ELU is expanding. Moreover,
the scaled exponential linear unit (SELU) was proposed in \cite{Klambauer} to construct self-normalized neural networks and it has the form
$$
\sigma(x):=\lambda
\begin{cases}
\alpha(e^x-1),\ \ & x<0,\\
x,\ \ & x\geq 0,
\end{cases}
$$
with parameter $\lambda =1.0507$ and $\alpha =1.67326$. Clearly, SELU is Lipschitz continuous with the Lipschitz constant $L:=\lambda\alpha$, which is greater than 1. Hence, SELU is again expanding.


Convergence of deep neural networks is measured by a vector norm.
We say that  a vector norm $\|\cdot\|$ on $\bR^l$ satisfies the extension invariant condition if
\begin{equation}\label{vectornorm-zero}
    \|\tilde{\ba}\|=\|\ba\|,\ \ \mbox{for}\ \ \tilde{\ba}:=
\left[\begin{array}{c}
    \ba
    \\
    {\bf 0}
    \end{array}\right],
     \ \mbox{with}\ \ba\in \bR^\nu,\ {\bf 0}\in \bR^{l-\nu} \ \mbox{for any}\ \nu \in\bN_l
\end{equation}
and satisfies the monotonicity condition if
\begin{equation}\label{nondecreasingvectornorm}
    \|\ba\|\le\|\bbb\| \mbox{ whenever }|a_i|\le |b_i|,\ i\in\bN_l,\  \mbox{for}\ \ba:=[a_1,\dots,a_l]^T, \bbb:=[b_1,\dots,b_l]^T\in\bR^l.
\end{equation}
For each $\ba\in \bR^l$, we recall the $\ell_p$-norms as
$$
\|\ba\|_p:=\left(\sum_{j=1}^l|a_j|^p \right)^\frac{1}{p}, \ \ \mbox{for}\ \ 1\leq p<+\infty
$$
and
$$
\|\ba\|_\infty:=\max\{|a_j|: j\in \bN_l\}, \ \ \mbox{for}\ \ p=+\infty.
$$
It can be confirmed that the $\ell_p$-norms for all $1\le p\le +\infty$ satisfy both of these conditions. We also need a matrix norm $\|\cdot\|$ on $\bR^{k\times l}$, which we require to be induced by a vector norm, that is,
$$
\|\bA\|=\sup_{\bx\in\bR^l,x\ne0}\frac{\|\bA\bx\|}{\|\bx\|},\ \ \mbox{for}\ \ \bA\in\bR^{k\times l}.
$$
Clearly, this matrix norm has the property that
\begin{equation}\label{matrixcon1}
\|\bA\bB\|\le \|\bA\|\|\bB\|, \ \mbox{ for all}\ \bA, \bB \ \mbox{with}\ \bA\bB \ \mbox{well-defined}.
\end{equation}

It follows from \eqref{activationF}  and \eqref{nondecreasingvectornorm} that
\begin{equation}\label{Lip-cont}
    \|\sigma(\bx)-\sigma(\by)\|\leq L\|\bx-\by\|, \ \ \mbox{for all}\ \ \bx,\by\in \bR^s.
\end{equation}
When $\sigma$ is Lipschitz continuous with the Lipschitz constant $L$, the neural network $\cN_n$ is also Lipschitz continuous with the Lipschitz constant $L^n\prod_{j=1}^n\|\bW_j\|$. In fact, from the recursion \eqref{Recursion} and the Lipschitz continuity \eqref{activationF}, we observe that
$$
\|\cN_n(\bx)-\cN_n(\by)\|\leq \left(L^n\prod_{j=1}^n\|\bW_j\|\right)\|\bx-\by\|, \ \ \mbox{for all}\ \ \bx,\by\in \bR^s.
$$

When the weight matrices $\bW_n$ have variable widths, the resulting neural networks $\cN_n(\bx)$ have variable dimensions. Considering convergence of such a sequence of neural networks requires special care. We first assume that the weight matrices $\bW_n$ have a fixed width $l\in\bN$.
In this case, we say that the deep neural network $\cN_n$ defined by \eqref{DNN} via $\bW_n$, $\bbb_n$, $n\in\bN$, and an activation function $\sigma$ converges uniformly in a bounded set $\DD\subseteq \bR^s$ to a limit function $\cN: \DD\to \bR^l$ if for any $\epsilon>0$, there exists $N\in \bN$ such that 
$$
\sup_{\bx\in \DD}\|\cN_n(\bx)-\cN(\bx)\|<\epsilon, \ \ \mbox{whenever} \ \ n\geq N.
$$
By $C_l(\DD)$, we denote the space of continuous vector-valued functions $\cN: \DD\to \bR^l$ defined on $\DD\subseteq \bR^s$ with 
$$
\sup\{\|\cN(\bx)\|: \bx\in\DD\}<+\infty,
$$
where $\|\cdot\|$ is a vector norm on $\bR^l$. For any vector norm, $C_l(\DD)$ is complete.

We now consider the case when the matrix widths $m_n$, $n\in \bN$, of deep neural networks are variable but bounded. Specifically, we let $l:=\max\{m_n: n\in \bN\}$. Then, we have that $l<+\infty$,  $1\leq m_n\leq l$ for all $n\in\bN$, and $m_{n_0}=l$ for some $n_0\in \bN$. We define the deep neural network $\cN_n(\bx)$ by \eqref{DNN}. Clearly,  $\cN_n(\bx)$ is a vector-valued function in $\bR^{m_n}$. The size of $\cN_n(\bx)$ varies according to $n$. 
Considering convergence of such a sequence requires us to extend the weight matrices $\bW_n\in \bR^{m_n\times m_{n-1}}$ and bias vectors $\bbb_n\in \bR^{m_n}$ to $\tilde\bW_n\in \bR^{l\times l}$ and $\tilde\bbb_n\in \bR^l$, respectively.

When the widths $m_n$, $n\in\bN$, of neural networks are unbounded, there exists a subsequence $m_{n_i}$, $i\in \bN$, with $\lim_{i\to +\infty}m_{n_i}=+\infty$. Due to the unboundedness of the widths, we will extend all vectors and matrices to elements in sequence spaces $\ell_p(\bN)$ and $\ell_p(\bN^2)$, respectively, and consider uniform convergence of the neural networks that result from the extension in the sequence spaces.
We now review the notion of sequence spaces. 
For $1\leq p\leq +\infty$, by $\ell_p(\bN)$ we denote the space of sequences $\bx$ with $\|\bx\|_{\ell_p(\bN)}<+\infty$, where
$$
\|\bx\|_{\ell_p(\bN)}:=\left(\sum_{j=1}^\infty|x_j|^p\right)^\frac{1}{p}, \ \ \mbox{for}\ \ 1\leq p<+\infty
$$ 
and
$$
\|\bx\|_{\ell_\infty(\bN)}:=\sup\{|x_j|: j\in \bN\}.
$$ 
Throughout this paper, we reserve $\|\cdot\|_p$ as the vector norm in $\bR^\mu$ for a $\mu<\infty$.
We also need the notion of spaces of semi-infinite matrices. For $1\leq p\leq +\infty$, we let $\ell_p(\bN^2):=\ell_p(\bN\times \bN)$ denote the spaces of semi-infinite matrices $\bW: \ell_p(\bN)\to\ell_p(\bN)$, viewed as operators, with $\|\bW\|_{\ell_p(\bN^2)}<+\infty$, where $\|\cdot\|_{\ell_p(\bN^2)}$ are operator norms induced from the $\ell_p(\bN)$ norms. 
We extend matrices $\bW_n\in \bR^{m_n\times m_{n-1}}$ to $\tilde{\bW}_n\in \ell_p(\bN\times\bN_s)$ for $n=1$ and $\tilde{\bW}_n\in \ell_p(\bN^2)$ for $n>1$, and extend vectors $\bbb_n\in\bR^{m_n}$ to $\tilde\bbb_n\in \ell_p(\bN)$. For $1\leq p\leq +\infty$, we also define $C_{\ell_p(\bN)}(\DD)$ to be the space of continuous sequence-valued functions $\cN: \DD\to \ell_p(\bN)$ defined on $\DD\subseteq \bR^s$ with 
$$
\sup\{\|\cN(\bx)\|_{\ell_p(\bN)}: \bx\in\DD\}<+\infty.
$$

When the matrix widths $m_n$, $n\in\bN$, of neural networks are variable, either bounded or unbounded, 
we define $\tilde\cN_n(\bx)$ by \eqref{DNN} with $\bW_n$ and $\bbb_n$ being replaced by $\tilde\bW_n$ and $\tilde\bbb_n$, respectively. 
The neural networks $\tilde\cN_n(\bx)$ are extensions of $\cN_n(\bx)$ and satisfy the same recursion as \eqref{Step1} and \eqref{Recursion} with  $\tilde\bW_n$ and $\tilde\bbb_n$. We then consider uniform convergence of the sequence $\tilde\cN_n(\bx)$ in either $C_l(\DD)$ or $C_{\ell_p(\bN)}(\DD)$. We say the sequence $\cN_n(\bx)$ converges uniformly in $C_l(\DD)$ (resp. $C_{\ell_p(\bN)}(\DD)$) if $\tilde\cN_n(\bx)$ converges uniformly to $\cN\in C_l(\DD)$ (resp. $\cN\in C_{\ell_p(\bN)}(\DD)$) when the matrix widths are bounded (resp. unbounded). Note that when the widths are not fixed, the uniform convergence depends on how the weight matrices and bias vectors are extended. Unambiguous  extensions of the weight matrices will be specified in later sections.




\section{Uniform Convergence Analysis Framework}

This section is devoted to establishing an analysis framework for uniform convergence of deep neural networks with pooling in a general setting. We will apply this framework to various scenarios in later sections to produce convergence results for deep neural networks of various types.

We now describe the setting. Let $l,\mu\in\bN$.
By $\hat\bW_n$, $n\in \bN$, we denote a sequence of weight matrices in $\bR^{(l+\mu)\times l}$ (resp.  $\ell_p(\bN^2)$) and by $\hat\bbb_n$, $n\in \bN$, a sequence of bias vectors in $\bR^l$ (resp. $\ell_p(\bN)$). Let $\cP_\mu: \bR^{l+\mu}\to \bR^l$ (resp. $\ell_p(\bN)\to\ell_p(\bN)$) denote a pooling operator (linear or nonlinear). We assume that the pooling operator $\cP_\mu$ is Lipschitz continuous with the Lipschitz constant $P$. 
When $\cP_\mu$ is linear, its Lipschitz constant is identical to its norm. Specific pooling operators will be discussed later.
Suppose that $\cN_n\in \bR^l$, $n\in \bN$, satisfy the recursion
\begin{equation}\label{Step1-hat}
    \hat\cN_1(\bx):=\sigma(\cP_\mu(\hat\bW_1 \bx)+\hat\bbb_1)
\end{equation}
and
\begin{equation}\label{Recursion-hat}
    \hat\cN_{n+1}(\bx)=\sigma(\cP_\mu(\hat\bW_{n+1}\hat\cN_n(\bx))+\hat\bbb_{n+1}), \ \ \bx\in \bR^s, \ \ \mbox{for all} \ \ n\in \bN,
\end{equation}
associated with the weight matrices $\hat\bW_n$, the bias vectors $\hat\bbb_n$, and an activation function $\sigma$. If $\cP_\mu$ is the identity operator, then $\mu=0$ and there is no pooling.
When $\hat\bW_n:\bR^{l+\mu}\to\bR^l$, $\hat\bbb_n\in \bR^l$, $n>1$, we have that $\hat\cN_n(\bx)\in \bR^l$, and when $\hat\bW_n:\ell_p(\bN)\to\ell_p(\bN)$, $\hat\bbb_n\in \ell_p(\bN)$, $n>1$, we have that $\hat\cN_n(\bx)\in \ell_p(\bN)$. The goal of this section is to establish results on uniform convergence of $\hat\cN_n(\bx)$, $n\in \bN$. 

Throughout this section, we assume that the norms for appropriate vectors/sequences/matrices involved are well-defined and  the neural networks $\hat\cN_n(\bx)$, $n\in \bN$, satisfy the recursion \eqref{Step1-hat} and \eqref{Recursion-hat} without further mentioning. We adopt the convention for the notation of the product of numbers
$$
\prod_{j=\mu}^\nu a_j:=1, \ \ \mbox{if}\ \ \mu>\nu.
$$
The next lemma prepares us for convergence analysis of neural networks. To this end, for $m,k\in\bN$, we define $e_{m,k}:=\|\hat\bbb_{m+k}-\hat\bbb_k\|$ and $E_{m,k}:= \|\hat\bW_{m+k}-\hat\bW_{k}\|$.

\begin{lemma}\label{ThreeTerms-hat} If the activation function $\sigma$ and the pooling operator $\cP_\mu$ are Lipschitz continuous with the Lipschitz  constants $L$ and $P$, respectively, then
for all $n, m\in\bN$,
\begin{align}\label{Basic-Estimate-hat}
    \|\hat\cN_{n+m}(\bx)-\hat\cN_n(\bx)\|\leq&L \sum_{i=0}^{n-1}\Lambda_{n+m}^{i-1}e_{m,n-i}+LP\sum_{i=0}^{n-2}\Lambda_{n+m}^{i-1}\|\hat\cN_{n-1-i}(\bx)\|E_{m,n-i} 
    \nonumber\\
   &+LP\Lambda_{n+m}^{n-2}\|\hat\bW_{m+1}\hat\cN_m(\bx)-\hat\bW_1\bx\|, \ \ \bx\in \bR^s,
\end{align}
where
\begin{equation}\label{Lambda}
    \Lambda_n^i:=\prod_{j=0}^{i}LP\|\hat\bW_{n-j}\|, \ \ n\in\bN, \ i<n.
\end{equation}
\end{lemma}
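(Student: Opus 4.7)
The plan is to set $D_n := \|\hat\cN_{n+m}(\bx) - \hat\cN_n(\bx)\|$ and derive a single-step recursive inequality from the recursion \eqref{Recursion-hat}, then unfold it down to the base case $n=1$, carefully tracking which products of Lipschitz constants and weight-matrix norms decorate each term. I expect the whole argument to be a careful bookkeeping exercise: there are no deep ideas beyond the triangle inequality and the Lipschitz assumptions, but the indexing has to match the definition of $\Lambda_{n+m}^i$ in \eqref{Lambda} exactly.

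First I would apply the Lipschitz continuity of $\sigma$ (with constant $L$) to \eqref{Recursion-hat} for indices $n+m$ and $n$, then the Lipschitz continuity of $\cP_\mu$ (with constant $P$), and split the bias and weight contributions by the triangle inequality to obtain
\[
D_n \;\le\; LP\,\bigl\|\hat\bW_{n+m}\hat\cN_{n+m-1}(\bx)-\hat\bW_n\hat\cN_{n-1}(\bx)\bigr\| + L\,e_{m,n}.
\]
The second key step is to add and subtract $\hat\bW_{n+m}\hat\cN_{n-1}(\bx)$ in the first norm, use \eqref{matrixcon1}, and arrive at the one-step recursion
\[
D_n \;\le\; LP\,\|\hat\bW_{n+m}\|\,D_{n-1} + LP\,E_{m,n}\,\|\hat\cN_{n-1}(\bx)\| + L\,e_{m,n}, \qquad n\ge 2.
\]

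Next I would iterate this inequality $n-1$ times. At each unfolding step the leading factor gets multiplied by $LP\|\hat\bW_{n+m-j}\|$, which after $i$ unfoldings assembles exactly the product $\Lambda_{n+m}^{i-1}$ from \eqref{Lambda} (using the convention $\Lambda_{n+m}^{-1}=1$ from the empty-product convention stated before the lemma). This produces the first sum (in $e_{m,n-i}$ with $i$ ranging from $0$ to $n-1$) and the second sum (in $E_{m,n-i}\|\hat\cN_{n-1-i}(\bx)\|$ with $i$ ranging from $0$ to $n-2$) of \eqref{Basic-Estimate-hat}.

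Finally, the recursion terminates at $D_1$, which I handle separately from \eqref{Step1-hat}: applying the Lipschitz constants of $\sigma$ and $\cP_\mu$ directly gives
\[
D_1 \;\le\; LP\,\bigl\|\hat\bW_{m+1}\hat\cN_m(\bx)-\hat\bW_1\bx\bigr\| + L\,e_{m,1}.
\]
The $L\,e_{m,1}$ piece is the $i=n-1$ term of the first sum, while the remaining contribution, carrying the accumulated prefactor $\Lambda_{n+m}^{n-2}$, yields the last term of \eqref{Basic-Estimate-hat}. The main obstacle, as noted, is purely notational: making sure the two sums have the correct ranges ($0$ to $n-1$ versus $0$ to $n-2$) and that the indices inside $\Lambda_{n+m}^{i-1}$, $e_{m,n-i}$, $E_{m,n-i}$, and $\|\hat\cN_{n-1-i}(\bx)\|$ line up cleanly with each unfolding step.
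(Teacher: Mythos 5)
Your proposal is correct and is essentially the paper's own argument in a different dress: the one-step estimate $D_n \le LP\|\hat\bW_{n+m}\|D_{n-1} + LP\,E_{m,n}\|\hat\cN_{n-1}(\bx)\| + L\,e_{m,n}$ and the base case for $D_1$ are exactly the inequalities the paper uses, the only difference being that you unfold the recursion explicitly while the paper packages the same bookkeeping as an induction on $n$ (with the index shift $i\mapsto i+1$ absorbing your $\Lambda_{n+m}^{i-1}$ accounting). Your treatment of the ranges ($i$ up to $n-1$ for the bias sum, up to $n-2$ for the weight sum, with $\Lambda_{n+m}^{-1}=1$ by the empty-product convention) lines up with \eqref{Basic-Estimate-hat}, so there is nothing to fix.
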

\begin{proof}
We prove inequality \eqref{Basic-Estimate-hat} by induction on $n$. We first consider the case $n=1$. Since both the activation function  $\sigma$ and the pooling operator are Lipschitz continuous with the Lipschitz constants $L$ and $P$, respectively, by recursion \eqref{Step1-hat} and \eqref{Recursion-hat}, and the Lipschitz conditions, we obtain for all  $m\in\bN$ that
$$
\|\hat\cN_{m+1}(\bx)-\hat\cN_1(\bx)\|\leq Le_{m,1} +LP\|\hat\bW_{m+1}\hat\cN_m(\bx)-\hat\bW_1\bx\|.
$$
That is, inequality \eqref{Basic-Estimate-hat} holds for $n=1$. We now assume that inequality \eqref{Basic-Estimate-hat} holds for $n=k$. 
We next proceed for the case $n=k+1$. Again, by the Lipschitz continuity of the activation function $\sigma$ and the pooling operator $\cP_\mu$ with the Lipschitz constants $L$ and $P$, respectively, and by recursion \eqref{Recursion-hat}, for all $m\in\bN$, we obtain that 
\begin{align}\label{CaseOfk+1}
    \|\hat\cN_{k+1+m}(\bx)-\hat\cN_{k+1}(\bx)\|\leq& Le_{m,k+1}
    +LP\|\hat\cN_k(x)\|E_{m,k+1}+LP\|\hat\bW_{k+1+m}\|\|\hat\cN_{k+m}(\bx)-\hat\cN_k(\bx)\|.
\end{align}
Substituting the induction hypothesis into the third term on the right-hand-side of inequality \eqref{CaseOfk+1} and noting for all $i,k,m\in\bN$ that
$
\Lambda_{k+m}^{i-1}=\prod_{j=1}^{i}\|\hat\bW_{k+1+m-j}\|,
$
we find for all $m\in\bN$ that
\begin{align}\label{CaseOfk+1*}
    \|\hat\cN_{k+1+m}(\bx)-\hat\cN_{k+1}(\bx)\|\leq& Le_{m,k+1}
    +LP\|\hat\cN_k(x)\|E_{m,k+1}+L \sum_{i=0}^{k-1}\Lambda_{k+1+m}^ie_{m,k-i}\nonumber\\ 
    &+LP\sum_{i=0}^{k-2}\Lambda_{k+1+m}^i\|\hat\cN_{k-1-i}(x)\|E_{m,k-i}\nonumber\\ 
    &+LP\Lambda_{k+1+m}^{k-1}\|\hat\bW_{m+1}\hat\cN_m(\bx)-\hat\bW_1\bx\|.
\end{align}
By replacing $i+1$ with $i$,  the third and fourth terms of the right-hand-side of inequality \eqref{CaseOfk+1*} become
$$
L\sum_{i=1}^{k}\Lambda_{k+1+m}^{i-1}e_{m,k+1-i}
\ \
\mbox{and}\ \
LP\sum_{i=1}^{k-1}\Lambda_{k+1+m}^{i-1}\|\hat\cN_{k+1-1-i}(x)\|E_{m,k+1-i},
$$
respectively. Substituting them into the third and fourth terms of the right-hand-side of 
\eqref{CaseOfk+1*} leads to
inequality \eqref{Basic-Estimate-hat} with $n=k+1$. Thus, the induction principle ensures that inequality \eqref{Basic-Estimate-hat} holds for all positive integers $n$ and $m$.
\end{proof}

Lemma \ref{ThreeTerms-hat} has pointed the direction for establishing the analysis framework for uniform convergence of deep neural networks. 
Along this line, we now derive two additional technical lemmas that help estimate the products and sums appearing in \eqref{Basic-Estimate-hat}.

\begin{lemma}\label{Two_Bounds}
If $\alpha_n$, $n\in\bN$, is a sequence of non-negative numbers satisfying the condition
\begin{equation}\label{less-than1}
    \lim_{n\to \infty}\alpha_n=\alpha<1,
\end{equation}
then there exists a positive constant $c$ such that 
\begin{equation}\label{Prod1ton}
\prod_{j=1}^{n}\alpha_j\leq c, \ \ \mbox{for all}\ \ n\in \bN
\end{equation}
and
\begin{equation}\label{sum-prod}
    \sum_{j=1}^{n}
\left(\prod_{i=j+1}^{n}\alpha_i\right)\leq c, \ \ \mbox{for all}\ \ n\in \bN.
\end{equation}
\end{lemma}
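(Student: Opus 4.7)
The plan is to exploit the fact that $\alpha_n \to \alpha < 1$ lets us split the index set into a finite ``initial segment'' on which we only need boundedness, and a ``tail'' on which we have strict geometric decay. Concretely, since $\alpha < 1$ we fix some $\beta \in (\alpha, 1)$. By the definition of the limit, there exists $N \in \bN$ such that $\alpha_n \leq \beta$ for every $n \geq N$. Moreover, any convergent sequence of non-negative numbers is bounded, so there is some $M \geq 1$ with $\alpha_n \leq M$ for all $n \in \bN$. Set $K := M^{N-1}$, which dominates every partial product up to index $N-1$.

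For the first inequality \eqref{Prod1ton}, I would handle two cases. If $n < N$, then $\prod_{j=1}^{n}\alpha_j \leq M^{n} \leq K$. If $n \geq N$, then
\[
\prod_{j=1}^{n}\alpha_j \;=\; \Bigl(\prod_{j=1}^{N-1}\alpha_j\Bigr)\Bigl(\prod_{j=N}^{n}\alpha_j\Bigr) \;\leq\; K \cdot \beta^{\,n-N+1} \;\leq\; K.
\]
So \eqref{Prod1ton} holds with any $c \geq K$.

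For \eqref{sum-prod} I would split the outer sum at $j = N-1$ (using the convention that the empty product equals $1$). For the ``tail'' terms $j \geq N-1$, every factor $\alpha_i$ in $\prod_{i=j+1}^{n}\alpha_i$ has index $i \geq N$, so this product is bounded by $\beta^{n-j}$ (and equals $1$ when $j = n$). Summing these gives a geometric-series bound
\[
\sum_{j=N-1}^{n}\beta^{\,n-j} \;\leq\; \sum_{k=0}^{\infty}\beta^{k} \;=\; \frac{1}{1-\beta}.
\]
For the ``initial'' terms $j < N-1$ (of which there are at most $N-1$), I split $\prod_{i=j+1}^{n}\alpha_i = \prod_{i=j+1}^{N-1}\alpha_i \cdot \prod_{i=N}^{n}\alpha_i \leq K \cdot \beta^{\,n-N+1} \leq K$, so this part contributes at most $(N-1)K$. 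Adding the two pieces bounds the sum by $(N-1)K + \tfrac{1}{1-\beta}$, uniformly in $n$ (the cases $n < N$ are handled similarly, or even more trivially, by dropping empty ranges). Taking $c$ to be the maximum of $K$ and this bound gives a single constant satisfying both inequalities.

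The proof is essentially bookkeeping, and I do not foresee a genuine obstacle; the only care needed is to respect the convention that empty products equal $1$ (so the $j = n$ term in \eqref{sum-prod} contributes $1$, not $0$) and to treat the small values of $n$ separately so that the statement ``for all $n \in \bN$'' is not compromised by the use of the asymptotic threshold $N$.
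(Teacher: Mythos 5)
Your proof is correct and follows essentially the same strategy as the paper's: split both the product and the sum at the threshold $N$ beyond which $\alpha_n$ is bounded by a fixed $\beta<1$ (the paper's $\gamma$), bound the tail by a geometric series $\tfrac{1}{1-\beta}$, and bound the finitely many initial terms by a constant. The only cosmetic difference is that you use an explicit uniform bound $M$ on the sequence to control the initial segment, whereas the paper bounds it by $B_{N-1}$ and takes a maximum over finitely many values at the end.
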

\begin{proof}
We introduce the notation
$$
A_n:=\prod_{j=1}^{n}\alpha_j,
$$
and
$$
B_n:=\sum_{j=1}^{n}
\left(\prod_{i=j+1}^{n}\alpha_i\right).
$$
Since condition \eqref{less-than1} is satisfied, there exist $N\in \bN$ and $\alpha\leq \gamma<1$ such that 
\begin{equation}\label{lessthangamma}
\alpha_n\leq \gamma, \ \ \mbox{for all} \ \ n\geq N.
\end{equation}
The sequence $A_{n}$, $n\in\bN$, is decreasing if $n$ is sufficiently large. Therefore, it must be bounded and thus, \eqref{Prod1ton} holds true.

It remains to show that there exists a positive constant $c$ such that $B_n\leq c$ for all $n\in \bN$. Let $N\in\bN$ be the integer chosen so that inequality \eqref{lessthangamma} holds.
For all $n\geq N$, we write
\begin{equation}\label{Write-inTwoSums}
B_n=\sum_{j=1}^{N-1}
\left(\prod_{i=j+1}^{n}\alpha_i\right)+\sum_{j=N}^{n}
\left(\prod_{i=j+1}^{n}\alpha_i\right).
\end{equation}
For the first sum of the right-hand-side of equation \eqref{Write-inTwoSums}, by \eqref{lessthangamma} we have that
\begin{align*}
    \sum_{j=1}^{N-1}
\left(\prod_{i=j+1}^{n}\alpha_i\right)&=\sum_{j=1}^{N-1}
\left(\prod_{i=j+1}^{N-1}\alpha_i\right)\left(\prod_{i=N}^{n}\alpha_i\right)\\
&\leq \gamma^{n-N+1}\sum_{j=1}^{N-1}
\left(\prod_{i=j+1}^{N-1}\alpha_i\right).
\end{align*}
The inequality above together with $0<\gamma<1$ leads to
\begin{equation}\label{TheFirstSum}
    \sum_{j=1}^{N-1}
\left(\prod_{i=j+1}^{n}\alpha_i\right)< B_{N-1}.
\end{equation}
For the second sum of the right-hand-side of equation \eqref{Write-inTwoSums}, once again, according to \eqref{lessthangamma}, we have that
\begin{align*}
\sum_{j=N}^{n}
\left(\prod_{i=j+1}^{n}\alpha_i\right)&=1+
\sum_{j=N}^{n-1}
\left(\prod_{i=j+1}^{n}\alpha_i\right)\\
&
\leq \sum_{j=0}^{n-N}\gamma^j<\frac{1}{1-\gamma}.
\end{align*}
Substituting this inequality and the estimate \eqref{TheFirstSum} into the right-hand-side of equation \eqref{Write-inTwoSums} leads to 
$$
B_n<B_{N-1}+\frac{1}{1-\gamma}, \ \ \mbox{for all}\ \ n\geq N.
$$
Defining
$$
c:=\max\left\{B_1, B_2, \dots, B_{N-2}, B_{N-1}+\frac{1}{1-\gamma}\right\},
$$
we conclude that $B_n\leq c$, for all $n\in \bN$.
\end{proof}

Here comes the second technical lemma.

\begin{lemma}\label{Zero-Limit-Lemma}
Let $\alpha_n$, $\beta_n$, $n\in \bN$, be two sequences of non-negative numbers. If 
\begin{equation}\label{Two-Limits}
  \lim_{n\to\infty} \alpha_n<1, \ \ \mbox{and} \ \ \lim_{n\to\infty}\beta_n=0,
\end{equation}
then for any positive $\epsilon>0$, there exits $N\in\bN$ such that 
\begin{equation}\label{Zero-Limit}
\sum_{i=0}^{n-1}\left(\prod_{j=0}^{i-1}\alpha_{n-j}\right)\beta_{n-i}<\epsilon, \ \ \mbox{for all}\ \ n>N.
\end{equation}
\end{lemma}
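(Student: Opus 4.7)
My plan is to reduce the nested product--sum in \eqref{Zero-Limit} to a form that Lemma \ref{Two_Bounds} can directly handle. The substitution $k := n-i$ rewrites the sum as
$$
S_n \;:=\; \sum_{k=1}^{n}\beta_k \prod_{m=k+1}^{n}\alpha_m,
$$
where the $i=0$ term corresponds to $k=n$ and yields $\beta_n$ by the empty-product convention stated just before Lemma \ref{ThreeTerms-hat}. In this reindexed form it is transparent that we are summing values of $\beta_k$ weighted by the "tail products" $\prod_{m=k+1}^{n}\alpha_m$, which is precisely the structure bounded uniformly in \eqref{sum-prod}.

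The core of the argument is then a standard head--tail split. Since $\lim_{n\to\infty}\alpha_n = \alpha < 1$, I would fix some $\gamma\in(\alpha,1)$ and $N_0\in\bN$ with $\alpha_n\le\gamma$ for $n\ge N_0$, and set $M:=\max\{1,\sup_n \alpha_n\}$ (finite since the sequence converges). Lemma \ref{Two_Bounds} then gives a constant $c>0$ with $\sum_{k=1}^{n}\prod_{m=k+1}^{n}\alpha_m \le c$ for every $n\in\bN$. Given $\epsilon>0$, I would use $\beta_n\to 0$ to choose $K\ge N_0$ such that $\beta_k < \epsilon/(2c)$ for all $k>K$.

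With this $K$ fixed, split $S_n = S_n^{\mathrm{head}} + S_n^{\mathrm{tail}}$ at $k=K$. The tail piece is uniform in $n$:
$$
S_n^{\mathrm{tail}} \;=\; \sum_{k=K+1}^{n}\beta_k\prod_{m=k+1}^{n}\alpha_m \;\le\; \frac{\epsilon}{2c}\sum_{k=K+1}^{n}\prod_{m=k+1}^{n}\alpha_m \;\le\; \frac{\epsilon}{2}.
$$
For the head piece, because $k$ now lies in the fixed range $\{1,\dots,K\}$ while $n$ may grow, I would factor $\prod_{m=k+1}^{n}\alpha_m = \bigl(\prod_{m=k+1}^{K}\alpha_m\bigr)\bigl(\prod_{m=K+1}^{n}\alpha_m\bigr)$; the first factor is at most $M^K$, while every index of the second factor satisfies $m\ge K+1\ge N_0$, so the second factor is bounded by $\gamma^{n-K}$. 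Hence
$$
S_n^{\mathrm{head}} \;\le\; M^K\,\gamma^{n-K}\sum_{k=1}^{K}\beta_k,
$$
and since $\gamma<1$ and the sum $\sum_{k=1}^{K}\beta_k$ is a fixed constant, this tends to $0$ as $n\to\infty$. Choosing $N>K$ so that $S_n^{\mathrm{head}} < \epsilon/2$ for all $n>N$ and combining the two bounds yields \eqref{Zero-Limit}.

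I do not expect a conceptual obstacle here; the reindexation $k=n-i$ unlocks the structure, Lemma \ref{Two_Bounds} disposes of the tail uniformly, and geometric decay $\gamma^{n-K}$ kills the finitely many head terms. The only care needed is to commit to $K$ \emph{before} choosing $N$, so that the constant $M^K\sum_{k\le K}\beta_k$ appearing in the head bound is frozen while $\gamma^{n-K}\to 0$ drives the whole piece to zero.
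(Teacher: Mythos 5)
Your proof is correct and follows essentially the same route as the paper: after the reindexing $k=n-i$, your split of the sum at a fixed index $K$ into a tail (large $k$, where $\beta_k$ is small and Lemma \ref{Two_Bounds} gives the uniform bound $c$) and a head (finitely many terms killed by the geometric factor $\gamma^{n-K}$) is exactly the paper's decomposition of $A_n$ into $A_n^1$ and $A_n^2$, just written in the transformed variable. The only cosmetic difference is that you bound the frozen head constant by $M^K\sum_{k\le K}\beta_k$ where the paper invokes Lemma \ref{Two_Bounds} a second time; both are valid.
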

\begin{proof}
For $n\in\bN$, we define the sequence
$$
A_n:= \sum_{i=0}^{n-1}\left(\prod_{j=0}^{i-1}\alpha_{n-j}\right)\beta_{n-i}.
$$
Let $\epsilon>0$ be arbitrary. By the second limit of \eqref{Two-Limits}, there exists $N_1\in \bN$ such that  $\alpha_n\le 1$ for all $n\ge N_1$ and 
\begin{equation}\label{Beta}
    \beta_n<\epsilon,\ \ \mbox{for all}\ \ n\geq N_1. 
\end{equation}
We split $A_n$ into two terms according to $N_1$. That is,
\begin{equation}\label{TWO-SUMS}
    A_n= \sum_{i=0}^{n-N_1}\left(\prod_{j=0}^{i-1}\alpha_{n-j}\right)\beta_{n-i}+ \sum_{i=n-N_1+1}^{n-1}\left(\prod_{j=0}^{i-1}\alpha_{n-j}\right)\beta_{n-i}.
\end{equation}
We denote by $A^1_n$ and $A_n^2$ the first and second sums of \eqref{TWO-SUMS}, respectively. By using the inequality \eqref{Beta}, we obtain for all $n\geq N_1$ that
$$
A_n^1<\epsilon\sum_{i=0}^{n-N_1}\left(\prod_{j=0}^{i-1}\alpha_{n-j}\right).
$$
By changing the index $k:=n-j$, we have that
$$
\left(\prod_{j=0}^{i-1}\alpha_{n-j}\right)=\left(\prod_{k=n-i+1}^n\alpha_k\right).
$$
Hence, we get that
\begin{equation}\label{S1}
A_n^1<\epsilon\sum_{i=0}^{n-N_1}\left(\prod_{k=n-i+1}^n\alpha_k\right).
\end{equation}
Letting $j:=n-i$, using the non-negativity of the numbers $\alpha_k$ and employing  \eqref{sum-prod} of Lemma \ref{Two_Bounds},  there exists a positive constant $c_1$ such that for all $n\geq N_1$,
\begin{align*}
    \sum_{i=0}^{n-N_1}\left(\prod_{k=n-i+1}^n\alpha_k\right)&=\sum_{j=N_1}^n\left(\prod_{k=j+1}^n\alpha_k\right)\\
    &\leq \sum_{j=1}^n\left(\prod_{i=j+1}^n\alpha_i\right)\leq c_1.
\end{align*}
This together with \eqref{S1} yields that $A_n^1<c_1\epsilon$ for all $n\geq N_1$.

We next estimate $A_n^2$. Clearly, there exists a positive constant $c_2$ such that $\beta_n\leq c_2$ for all $n\in \bN$.  Using this fact and the change of indices $k:=n-j$, we obtain that 
\begin{align*}
    A_n^2&\leq c_2\sum_{i=n-N_1+1}^{n-1}\left(\prod_{k=n-i+1}^{n}\alpha_k\right).
\end{align*}
We then perform an additional change of indices $j:=n-i+1$ and find that
\begin{align}\label{A_n2}
    A_n^2&\leq 
    c_2\sum_{j=2}^{N_1}\left(\prod_{k=j}^{n}\alpha_k\right).
\end{align}
The first limit of \eqref{Two-Limits} implies that there exist $0<\gamma<1$ and $N_2\in \bN$ with $N_2\geq N_1$ such that $\alpha_n\leq \gamma$ for all $n> N_2$. It follows from \eqref{A_n2} that
\begin{align*}
    A_n^2&\leq c_2\sum_{j=2}^{N_1}\left(\prod_{k=j}^{N_2-1}\alpha_k\right)\left(\prod_{k=N_2}^{n}\alpha_k\right)\\
    &\leq c_2\gamma^{n-N_2+1}\sum_{j=2}^{N_1}\left(\prod_{k=j}^{N_2-1}\alpha_k\right)\\
    &=c_2\gamma^{n-N_2+1}\sum_{j=1}^{N_1-1}\left(\prod_{k=j+1}^{N_2-1}\alpha_k\right).
\end{align*}
Again, by the non-negativity of the numbers $\alpha_k$ and estimate \eqref{sum-prod} of Lemma \ref{Two_Bounds}, we observe for all $n>N_2$ that
$$
\sum_{j=1}^{N_1-1}\left(\prod_{k=j+1}^{N_2-1}\alpha_k\right)
\leq \sum_{j=1}^n\left(\prod_{k=j+1}^n\alpha_k\right)\leq c_1.
$$
This implies for all $n>N_2$ that
$$
A_n^2\leq c_1c_2\gamma^{n-N_2+1}. 
$$
Since $0<\gamma<1$, there exists $N_3\in \bN$ such that for all $n>N_3$, $\gamma^n<\epsilon$. Therefore, we conclude that $A_n^2<c_1c_2\epsilon$, for all $n>N_2+N_3$. Adding this estimate to the estimate for $A_n^1$, we have that 
$$
A_n<(c_1+c_1c_2)\epsilon,\ \  \mbox{for all} \ \ n>N_2+N_3,
$$
proving  \eqref{Zero-Limit}.
\end{proof}

With the help of Lemmas \ref{ThreeTerms-hat} and \ref{Zero-Limit-Lemma}, we establish the following result on uniform convergence of the sequence $\hat\cN_n(\bx)$ generated by the recursion \eqref{Step1-hat} and \eqref{Recursion-hat} in a general setting. We say that $\hat\bbb_n$, $n\in\bN$, converges if there exists $\bbb^*\in \bR^l$ (resp. $\bbb^*\in \ell_p(\bN)$) such that $\lim_{n\to\infty}\|\hat\bbb_n-\bbb^*\|=0$ when $\hat\bbb_n\in \bR^l$ (resp. $\hat\bbb_n\in \ell_p(\bN)$), $n\in\bN$, and $\|\cdot\|$ denotes a norm in $\bR^l$ (resp. in $\ell_p(\bN)$). Likewise, we say that $\hat\bW_n$, $n\in\bN$, converges if there exists $\bW^*\in \bR^{l\times l}$ (resp. $\bW^*\in \ell_p(\bN^2)$) such that $\lim_{n\to\infty}\|\hat\bW_n-\bW^*\|=0$ when $\hat\bW_n\in \bR^{l\times l}$ (resp. $\hat\bW_n\in \ell_p(\bN^2)$), $n\in\bN$, and $\|\cdot\|$ denotes a norm in $\bR^{l\times l}$ (resp. in $\ell_p(\bN^2)$). Here, we do not distinguish the notation of a vector norm from a matrix norm  since they can be clearly differentiated from the context.  It is known that the sequence $\hat\bbb_n$, $n\in \bN$, converges if and only if it is Cauchy in $\bR^l$, and  the sequence $\hat\bW_n$, $n\in \bN$, converges if and only if it is Cauchy in $\ell_p(\bN^2)$.

\begin{theorem}\label{Fixed-widths-hat}
Suppose that the activation function $\sigma$ and the pooling operator $\cP_\mu$ are Lipschitz continuous with the Lipschitz  constants $L$ and $P$, respectively,
and that $\DD\subset \bR^s$ is bounded. 
If the sequences $\hat\bbb_n$, $\hat\bW_n$, $n\in\bN$, converge,
the sequence $\|\hat\bW_n\|$, $n\in\bN$, satisfies the condition
\begin{equation}\label{BasicAssumption-hat}
  \omega:=  \lim_{n\to\infty}LP\|\hat\bW_n\|<1,
\end{equation}
and there exists a positive constant $\rho$ such that 
\begin{equation}\label{UniformBoundednessofDNN}
    \sup_{\bx\in\DD}\|\hat\cN_n(\bx)\|\leq \rho, \ \ \mbox{for all}\ \  n\in\bN,
\end{equation}
then the sequence $\hat\cN_n$ converges  uniformly to $\cN\in C_l(\DD)$ (resp.  $\cN\in C_{\ell_p(\bN)}(\DD)$) if $\hat\bW_n\in\bR^{l\times l}$ (resp.  $\hat\bW_n\in\ell_p(\bN^2)$).
\end{theorem}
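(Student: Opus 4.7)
The plan is to prove that $\{\hat\cN_n\}_{n\in\bN}$ is a Cauchy sequence in the complete space $C_l(\DD)$ (respectively $C_{\ell_p(\bN)}(\DD)$), so that it converges uniformly to some limit $\cN$ in that space; continuity and boundedness of $\cN$ then follow from the standard fact that a uniform limit of continuous bounded functions is continuous and bounded. Starting from the three-term bound of Lemma \ref{ThreeTerms-hat}, the task reduces to showing that each of the three terms on its right-hand side tends to zero as $n\to\infty$ uniformly in $m\in\bN$ and $\bx\in\DD$. I will first record what the hypotheses give: $\alpha_k:=LP\|\hat\bW_k\|$ is bounded and satisfies $\alpha_k\to\omega<1$, so there exist $K\in\bN$ and $\gamma\in(\omega,1)$ with $\alpha_k\leq\gamma$ for all $k\geq K$; the Cauchy property of $\hat\bbb_n$ and $\hat\bW_n$ forces $e_{m,k}$ and $E_{m,k}$ to tend to zero as $k\to\infty$ uniformly in $m$ while keeping them uniformly bounded; and the factor $\|\hat\bW_{m+1}\hat\cN_m(\bx)-\hat\bW_1\bx\|$ is uniformly bounded by $(\sup_n\|\hat\bW_n\|)(\rho+\sup_{\bx\in\DD}\|\bx\|)$ via \eqref{matrixcon1}, \eqref{UniformBoundednessofDNN}, and the boundedness of $\DD$.

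For the third term I will estimate $\Lambda_{n+m}^{n-2}=\prod_{k=m+2}^{n+m}\alpha_k$ by splitting off the at most $K-2$ factors with $k<K$ (each bounded by $\sup_k\alpha_k$) from the remaining factors with $k\geq K$ (each $\leq\gamma$); this yields a bound of the form $C\gamma^{n-K+1}$, independent of $m$ and $\bx$, which decays to zero. For the first two terms I plan to invoke Lemma \ref{Zero-Limit-Lemma} with the $m$-indexed sequences $\alpha_n^{(m)}:=LP\|\hat\bW_{n+m}\|$ and $\beta_n^{(m)}:=e_{m,n}$ (respectively $\beta_n^{(m)}:=\rho\,E_{m,n}$, after replacing $\|\hat\cN_{n-1-i}(\bx)\|$ by $\rho$ through \eqref{UniformBoundednessofDNN} and enlarging the upper summation index from $n-2$ to $n-1$), which recasts each sum into the exact form $\sum_{i=0}^{n-1}(\prod_{j=0}^{i-1}\alpha_{n-j}^{(m)})\beta_{n-i}^{(m)}$ treated by that lemma.

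The main obstacle is that Lemma \ref{Zero-Limit-Lemma} as stated produces a threshold $N$ that could a priori depend on the particular sequences, whereas we need uniformity in $m$. I will address this by inspecting the proof of Lemma \ref{Zero-Limit-Lemma}: the constants it produces depend only on (i) an index past which $\alpha_n^{(m)}\leq 1$, (ii) an index past which $\alpha_n^{(m)}\leq\gamma$, and (iii) a uniform bound on and tail estimate of $\beta_n^{(m)}$, each of which can be chosen independently of $m$ from the preliminary observations above. Assembling the three uniform estimates gives $\sup_{\bx\in\DD}\|\hat\cN_{n+m}(\bx)-\hat\cN_n(\bx)\|\to 0$ uniformly in $m$ as $n\to\infty$, which is the Cauchy criterion needed to conclude.
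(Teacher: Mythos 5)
Your proof follows essentially the same route as the paper: Lemma \ref{ThreeTerms-hat} splits the Cauchy difference into the three terms $J_{n,1}$, $J_{n,2}(\bx)$, $J_{n,3}(\bx)$, the first two are controlled by Lemma \ref{Zero-Limit-Lemma} with $\alpha_n:=LP\|\hat\bW_{n+m}\|$ and $\beta_n:=e_{m,n}$ (resp. $E_{m,n}$), and the third by the geometric decay of $\Lambda_{n+m}^{n-2}$ forced by \eqref{BasicAssumption-hat}. The one place you go beyond the paper is in explicitly verifying that the threshold produced by Lemma \ref{Zero-Limit-Lemma} can be chosen uniformly in $m$ (via the uniform Cauchy tails of $\hat\bbb_n$, $\hat\bW_n$ and the $m$-independent index past which $LP\|\hat\bW_{n+m}\|\leq\gamma$) — a point the paper's proof asserts without comment — and your justification of that uniformity is correct.
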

\begin{proof}
It suffices to show that the sequence $\cN_n$, $n\in\bN$, is Cauchy in space $C_l(\DD)$ or $C_{\ell_p(\bN)}(\DD)$. Let $\epsilon>0$ be arbitrary. We wish to prove that there exist $N\in \bN$ and $c>0$ such that
\begin{equation}\label{Cauchy-hat}
 \sup_{\bx\in\DD}   \|\hat\cN_{n+m}(\bx)-\hat\cN_n(\bx)\|<c\epsilon, \ \  \mbox{for all}\ \ 
 n\ge N,m\in\bN.
\end{equation}
Motivated by Lemma \ref{ThreeTerms-hat}, we introduce
$$
J_{n,1}:=L\sum_{i=0}^{n-1}\Lambda_{n+m}^{i-1}e_{m,n-i},\ \
J_{n,2}(\bx):=LP\sum_{i=0}^{n-2}\Lambda_{n+m}^{i-1}\|\hat\cN_{n-1-i}(\bx)\|E_{m,n-i}, \ \ \bx\in \DD,
$$
and
$$
J_{n,3}(\bx):=LP\Lambda_{n+m}^{n-2}\|\hat\bW_{m+1}\hat\cN_m(\bx)-\hat\bW_1\bx\|, \ \ \bx\in \DD.
$$
By Lemma \ref{ThreeTerms-hat}, for all $n,m\in\bN$
we have that
\begin{equation}\label{ThreeTermE-hat}
  \sup_{\bx\in\DD}   \|\hat\cN_{n+m}(\bx)-\hat\cN_n(\bx)\|\leq J_{n,1}+ \sup_{\bx\in\DD}   J_{n,2}(\bx)+ \sup_{\bx\in\DD}   J_{n,3}(\bx).
\end{equation}
We next show that the three terms on the right-hand-side of inequality \eqref{ThreeTermE-hat} tend to zero as $n\to\infty$ for all $m\in\bN$. 

We first consider $J_{n,1}$. For $n\in\bN$, we let
$$
\alpha_n:=LP\|\hat\bW_{n+m}\|\ \ \mbox{and}\ \ \beta_n:=e_{m,n}, \ \ m\in\bN.
$$
By hypothesis \eqref{BasicAssumption-hat}, we have that $\lim_{n\to\infty}\alpha_n=\omega<1$.
Since the sequence $\hat\bbb_n$, $n\in\bN$, converges, we note that $\lim_{n\to\infty}\beta_n=0$.  Applying Lemma \ref{Zero-Limit-Lemma} and noting the definition \eqref{Lambda} of $\Lambda_{n+m}^{i-1}$, we conclude that there exists $N_1\in\bN$ such that
\begin{equation}\label{J_n1}
J_{n,1}<L\epsilon, \ \ \mbox{for all}\ \  n>N_1, \ m\in\bN.
\end{equation}


For the second term on the right-hand-side of \eqref{ThreeTermE-hat}, by the uniform boundedness assumption \eqref{UniformBoundednessofDNN} on $\hat\cN_n(\bx)$, we see that
$$
\sup_{\bx\in\DD}J_{n,2}(\bx)\leq \rho LP\sum_{i=0}^{n-2}\Lambda_{n+m}^{i-1}E_{m,n-i}, \ \ m\in\bN.
$$
For $n\in\bN$, we define
$$
\alpha_n:=LP\|\hat\bW_{n+m}\|\ \ \mbox{and}\ \ \beta_n:=E_{m,n}, \ \ m\in\bN.
$$ 
By  hypothesis \eqref{BasicAssumption-hat}, we observe that $\lim_{n\to\infty}\alpha_n<1$ and by the convergence of sequence $\hat\bW_n$, $n\in\bN$, we find that $\lim_{n\to\infty}\beta_n=0$.
Again, by employing Lemma \ref{Zero-Limit-Lemma}, there exists $N_2\in\bN$ such that
\begin{equation}\label{J_n2}
    \sup_{\bx\in\DD}J_{n,2}(\bx)<\rho LP\epsilon, \ \ \mbox{for all}\ \ n>N_2.
\end{equation}

It remains to estimate the third term on the right-hand-side of \eqref{ThreeTermE-hat}. To this end, we assume that $\DD\subset \bR^s$ is bounded by $D>0$. Again, due to the uniform boundedness of $\|\hat\cN_n(\bx)\|$, $n\in\bN$, and convergence of $\hat\bW_n$, $n\in\bN$,  which implies that $\|\hat\bW_n\|\leq w$ for all $n\in\bN$ and for some $w>0$, we reach that
\begin{equation}\label{J_n3}
\sup_{\bx\in\DD}J_{n,3}(\bx)\leq LPw(\rho+D)\Lambda_{n+m}^{n-2}.
\end{equation}
By the definition \eqref{Lambda} of $\Lambda_{n+m}^{n-2}$ with the change of indices $k:=n+m-j$, we note that
\begin{equation}\label{ProductIdentity}
   \Lambda_{n+m}^{n-2}=\prod_{k=m+2}^{n+m}LP\|\hat\bW_k\|.
\end{equation}
By virtue of hypothesis \eqref{BasicAssumption-hat}, there exist $N_3\in\bN$ and $\omega_0\in[\omega, 1)$ such that $LP\|\hat\bW_k\|<\omega_0$, for all $k>N_3$. 
Therefore, by \eqref{J_n3} and \eqref{ProductIdentity}, 
for all $m\ge1$ and $n\geq 2$,
\begin{equation}\label{J_n3-new}
\sup_{\bx\in\DD}J_{n,3}(\bx)\leq LP\omega(\rho+D)\omega_0^{n+m-N_3}\omega^{N_3-m-1}.. 
\end{equation}
In view of $0<\omega_0<1$, there exists $N_4\in\bN$ such that $\omega_0^{n-1}<\epsilon$, for all $n>N_4$. Upon substituting this result into the right-hand-side of \eqref{J_n3-new} yields
\begin{equation}\label{J_n3-newNew}
\sup_{\bx\in\DD}J_{n,3}(\bx)\leq LPw(\rho+D)\epsilon, \ \ \mbox{for all}\ \ 
m\ge1, \ n>N_4. 
\end{equation}
Now, we choose $N:=\max\{N_i: i=1,2,3,4\}$. Upon substituting 
estimates \eqref{J_n1}, \eqref{J_n2} and \eqref{J_n3-newNew} into the right-hand-side of \eqref{ThreeTermE-hat} gives rise to \eqref{Cauchy-hat} with $c:=L+\rho LP+LPw(\rho+D)$.
%
%
\end{proof}

It was indicated in \cite{HuangXuZhang} that when the activation function is contractive, the neural networks
converge exponentially as the layer number tends to infinity if both the sequence of weight matrices and that of bias vectors converge exponentially. Our next task is to establish an exponential convergence theorem for the deep neural network with general Lipschitz continuous activation functions, not necessarily contractive, and general Lipschitz continuous  poolings.

\begin{lemma}\label{LemmaA}
Suppose that the activation function $\sigma$ and the pooling operator $\cP_\mu$ are Lipschitz continuous with the Lipschitz  constants $L$ and $P$, respectively,
and that $\DD\subset \bR^s$ is bounded by $D>0$.
If  $\lim_{n\to\infty}\hat\bbb_n=\bbb^*$, $\lim_{n\to\infty}\hat\bW_n=\bW^*$, and satisfies \eqref{BasicAssumption-hat}, and there exists a positive constant $\rho$ such that \eqref{UniformBoundednessofDNN} holds, then there exists a function $\cN\in C_l(\DD)$ (resp. $\cN\in C_{\ell_p(\bN)}(\DD)$) if $\hat\bW_n\in\bR^{l\times l}$ (resp. $\hat\bW_n\in \ell_p(\bN^2)$) such that for all $n\in\bN$, 
\begin{equation}\label{Bound2}
\sup_{\bx\in\DD}    \|\hat\cN_n(\bx)-\cN(\bx)\|\leq L\left[\sum_{i=0}^{n-1}\omega_0^{i}e_{n-i}
    +\rho LP\sum_{i=0}^{n-2}\omega_0^{i}E_{n-i}+LPw(\rho+D)\omega_0^{n-1}
    \right], 
\end{equation}
where $e_{n}:=\|\hat\bbb_{n}-\bbb^*\|$, $E_{n}:=\|\hat\bW_{n}-\bW^*\|$, $w$ is an upper bound of the sequence $\|\hat\bW_n\|$, $n\in\bN$, and $\omega\leq\omega_0<1$. 
\end{lemma}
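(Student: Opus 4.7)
The hypotheses here include all those of Theorem~\ref{Fixed-widths-hat}, so that theorem already yields existence of a uniform limit $\cN$ in the appropriate space $C_l(\DD)$ or $C_{\ell_p(\bN)}(\DD)$. What remains is to derive the quantitative estimate \eqref{Bound2}. My strategy is to start from Lemma~\ref{ThreeTerms-hat} for fixed $n$ and arbitrary $m\in\bN$, replace the tail products $\Lambda_{n+m}^{i-1}$ and $\Lambda_{n+m}^{n-2}$ by the cleaner majorants $\omega_0^{i}$ and $\omega_0^{n-1}$, and then pass to the limit $m\to\infty$.

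First, by hypothesis \eqref{BasicAssumption-hat} together with $\omega_0\in[\omega,1)$, I fix $N_3\in\bN$ such that $LP\|\hat\bW_k\|\le\omega_0$ for every $k>N_3$. Whenever $m>N_3$, every index $n+m-j$ appearing in $\Lambda_{n+m}^{i-1}$ (for $0\le i\le n-1$) or in $\Lambda_{n+m}^{n-2}$ exceeds $N_3$, so by definition \eqref{Lambda} one has $\Lambda_{n+m}^{i-1}\le\omega_0^{i}$ and $\Lambda_{n+m}^{n-2}\le\omega_0^{n-1}$. Substituting these bounds into \eqref{Basic-Estimate-hat}, together with $\|\hat\cN_{n-1-i}(\bx)\|\le\rho$ from \eqref{UniformBoundednessofDNN} and the elementary estimate $\|\hat\bW_{m+1}\hat\cN_m(\bx)-\hat\bW_1\bx\|\le w(\rho+D)$ (which uses $\|\hat\bW_k\|\le w$, $\|\hat\cN_m(\bx)\|\le\rho$, and $\|\bx\|\le D$), produces, for $m>N_3$ and all $\bx\in\DD$, a bound on $\|\hat\cN_{n+m}(\bx)-\hat\cN_n(\bx)\|$ whose right-hand side has exactly the structure of \eqref{Bound2} but with $e_{m,n-i}$ and $E_{m,n-i}$ in place of the target $e_{n-i}$ and $E_{n-i}$.

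Next I take the supremum over $\bx\in\DD$ and send $m\to\infty$. On the left, the uniform convergence $\hat\cN_{n+m}\to\cN$ on $\DD$ afforded by Theorem~\ref{Fixed-widths-hat}, together with the reverse triangle inequality, gives
$$\sup_{\bx\in\DD}\|\hat\cN_{n+m}(\bx)-\hat\cN_n(\bx)\|\to\sup_{\bx\in\DD}\|\cN(\bx)-\hat\cN_n(\bx)\|.$$
On the right, since $\hat\bbb_n\to\bbb^*$ and $\hat\bW_n\to\bW^*$, we have $e_{m,k}=\|\hat\bbb_{m+k}-\hat\bbb_k\|\to\|\bbb^*-\hat\bbb_k\|=e_k$ and similarly $E_{m,k}\to E_k$; because $n$ is fixed, only finitely many such terms appear, so convergence of the finite sums is immediate. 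These two limits together yield \eqref{Bound2}.

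The only step that requires real attention is the passage to the limit of the left-hand supremum, which reduces to the uniform convergence already supplied by Theorem~\ref{Fixed-widths-hat}. The rest is essentially bookkeeping on top of Lemma~\ref{ThreeTerms-hat}: the one substantive idea is that, since $\omega<1$, each factor $LP\|\hat\bW_k\|$ is eventually dominated by $\omega_0<1$, so that the variable-length products $\Lambda_{n+m}^{\cdot}$ can be replaced by a common geometric upper bound uniformly in all sufficiently large $m$.
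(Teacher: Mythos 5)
Your proposal is correct and follows essentially the same route as the paper's own proof: invoke Theorem \ref{Fixed-widths-hat} for the existence of the limit $\cN$, dominate $\Lambda_{n+m}^{i-1}$ and $\Lambda_{n+m}^{n-2}$ by $\omega_0^{i}$ and $\omega_0^{n-1}$ for all sufficiently large $m$, substitute into \eqref{Basic-Estimate-hat} together with the uniform bounds $\rho$, $w$, $D$, and let $m\to\infty$. Your write-up is in fact slightly more careful than the paper's (it makes explicit the convergence $e_{m,k}\to e_k$, $E_{m,k}\to E_k$ and the passage to the limit of the supremum), but no new idea is involved.
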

\begin{proof}
Theorem \ref{Fixed-widths-hat} ensures that the deep neural networks $\hat\cN_n(\bx)$ converges to a function $\cN$. By Lemma \ref{ThreeTerms-hat}, for all positive integers $n$, $m$, inequality \eqref{Basic-Estimate-hat} holds. Due to hypothesis \eqref{BasicAssumption-hat}, there exist $N\in\bN$ and $\omega\leq \omega_0<1$ such that 
$LP\|\hat\bW_m\|\leq \omega_0$, for all $m\geq N$.
It follows that
$\Lambda_{n+m}^{i-1}\leq \omega_0^i$,  $i=0,1,\dots, n-1$, for all $m\geq N, n\in\bN$
and $\Lambda_{n+m}^{n-2}\leq \omega_0^{n-1}$, for all $m\geq N, n\in\bN$.
Substituting these bounds into the right-hand-side of the inequality \eqref{Basic-Estimate-hat},
letting $m\to \infty$ in the both sides of the resulting inequality, and using the limits of $\hat\bbb_n$ and $\hat\bW_n$, and the uniform boundedness of $\hat\cN_n(\bx)$, we obtain for $\bx\in \DD$ the error bound \eqref{Bound2}.
\end{proof}

Lemma \ref{LemmaA} allows us to establish the following exponential convergence result.

\begin{theorem}\label{Order-Fixed-widths-hat}
Suppose that the activation function $\sigma$ and the pooling operator $\cP_\mu$ are Lipschitz continuous with the Lipschitz  constants $L$ and $P$, respectively, and $\DD\subseteq \bR^s$ is bounded.
If $\hat\bbb_n$ and $\hat\bW_n$ converge, respectively, to $\bbb^*$ and $\bW^*$ exponentially with \eqref{BasicAssumption-hat}, and there exists a positive constant $\rho$ such that $\|\hat\cN_n(\bx)\|\leq \rho$, for all $\bx\in\DD$ and all $n\in\bN$, then the neural networks $\hat\cN_n$ converge to a function 
$\cN\in C_l(\DD)$ (resp. $\cN\in C_{\ell_p(\bN)}(\DD)$) if $\hat\bW_n\in\bR^{l\times l}$ (resp. $\hat\bW_n\in \ell_p(\bN^2)$)
exponentially and uniformly in $\DD$.
\end{theorem}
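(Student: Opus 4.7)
\medskip

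The plan is to read off the exponential rate directly from the bound supplied by Lemma~\ref{LemmaA}. Since the hypotheses of Lemma~\ref{LemmaA} are satisfied (exponential convergence in particular implies convergence), we already have the explicit estimate
$$
\sup_{\bx\in\DD}\|\hat\cN_n(\bx)-\cN(\bx)\|\leq L\left[\sum_{i=0}^{n-1}\omega_0^{i}e_{n-i}+\rho LP\sum_{i=0}^{n-2}\omega_0^{i}E_{n-i}+LPw(\rho+D)\omega_0^{n-1}\right]
$$
with $0\leq\omega_0<1$. The exponential decay hypothesis means there exist constants $C_1,C_2>0$ and rates $r_1,r_2\in(0,1)$ such that $e_n\leq C_1 r_1^n$ and $E_n\leq C_2 r_2^n$ for all $n\in\bN$. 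Thus the only nontrivial task is to show that a convolution-type sum of the form $S_n:=\sum_{i=0}^{n-1}\omega_0^{i}r^{n-i}$, where $r\in(0,1)$, decays exponentially in $n$.

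First I would set $\tau:=\max\{\omega_0,r\}<1$ and dominate each term in $S_n$ by $\tau^n$, yielding $S_n\leq n\tau^n$. Next, pick any $\tau^\ast\in(\tau,1)$; since $(n\tau^n)/(\tau^\ast)^n=n(\tau/\tau^\ast)^n\to0$, there is a constant $K=K(\tau,\tau^\ast)$ with $n\tau^n\leq K(\tau^\ast)^n$ for every $n\in\bN$. Applying this to the two sums in the Lemma~\ref{LemmaA} bound (once with $r=r_1$, once with $r=r_2$), and taking $\tau^\ast\in(\max\{\omega_0,r_1,r_2\},1)$ large enough to also absorb the third term $\omega_0^{n-1}$, I would conclude that there exist a constant $C>0$ and a rate $q\in(0,1)$ (concretely $q=\tau^\ast$) such that
$$
\sup_{\bx\in\DD}\|\hat\cN_n(\bx)-\cN(\bx)\|\leq C q^n,\qquad n\in\bN.
$$
This is exactly the claimed uniform exponential convergence on $\DD$.

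There is no serious obstacle: Lemma~\ref{LemmaA} already contains essentially all of the analytic work (convergence of $\hat\cN_n$ to a limit $\cN$ in the appropriate function space, uniform boundedness, and the geometric factor $\omega_0^i$ generated by condition \eqref{BasicAssumption-hat}). The only mildly delicate point is that one cannot in general hope for decay rate exactly $\max\{\omega_0,r_1,r_2\}$ because the convolution sums may carry a polynomial prefactor $n$; this is precisely the reason for the slight inflation to $\tau^\ast>\tau$, at the cost of enlarging the constant. The same argument works verbatim in both the bounded-width setting $\hat\bW_n\in\bR^{l\times l}$ and the unbounded-width setting $\hat\bW_n\in\ell_p(\bN^2)$, since Lemma~\ref{LemmaA} and the norm estimates used are stated simultaneously in both cases.
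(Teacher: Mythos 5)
Your proposal is correct and follows essentially the same route as the paper: both invoke Lemma~\ref{LemmaA}, substitute the exponential bounds on $e_n$ and $E_n$ into \eqref{Bound2}, dominate the convolution sums by $n\,\tau^n$ with $\tau:=\max\{\omega_0,r\}$, and then absorb the polynomial prefactor by slightly inflating the rate to some $\tau^\ast\in(\tau,1)$ at the cost of a larger constant. The only cosmetic difference is that you allow separate rates $r_1,r_2$ for the bias and weight sequences whereas the paper takes a common rate $r$, which changes nothing.
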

\begin{proof}
First of all, Theorem \ref{Fixed-widths-hat} ensures that the neural networks $\hat\cN_n$ converge to a function $\cN$ uniformly in $\DD$. It remains to prove that the convergence rate of $\hat\cN_n$ to $\cN$ is exponential.
For this purpose, by the hypothesis of this theorem, we conclude that there exist $0<r<1$ and $c>0$ such that 
$e_n\leq cr^n$
and 
$E_n\leq cr^n.$
Substituting these estimates into the right-hand-side of  \eqref{Bound2} in Lemma \ref{LemmaA}, we find that
$$
\sup_{\bx\in\DD}\|\hat\cN_n(\bx)-\cN(\bx)\|\leq (cL+c\rho L^2P)nr_0^{n}+\frac{L^2Pw(\rho+D)}{\omega_0}r_0^{n},
$$
where $r_0:=\max\{r, \omega_0\}$ and $D>0$ is a bound of $\DD$. 
Thus,  there exist $N\in\bN$ and $r_1\in(r_0,1)$ such that for all $n>N$,
$$
\sup_{\bx\in\DD}\|\hat\cN_n(\bx)-\cN(\bx)\|\leq \left[cL+c\rho L^2P+\frac{L^2Pw(\rho+D)}{\omega_0}\right]r_1^{n}.
$$
That is, the neural networks $\hat\cN_n$ converge to the function $\cN$ exponentially and uniformly in $\DD$.
\end{proof}

In Theorems \ref{Fixed-widths-hat} and \ref{Order-Fixed-widths-hat}, we assume that the sequence $\|\hat\cN_n(\bx)\|$ is uniformly bounded in a bounded set $\DD$. This can be derived from the next lemma and an additional hypothesis on the vector/sequence norm. In the next lemma, we assume that either $\bW_n\in\bR^{(m_n+\mu)\times m_{n-1}}$ with $m_0:=s$, and $\bbb_n\in\bR^{m_n}$, $n\in\bN$, or $\bW_1\in\ell_p(\bN\times \bN_s)$, $\bW_n\in\ell_p(\bN^2)$, $n>1$, and $\bbb_n\in\ell_p(\bN)$, $n\in\bN$. 

\begin{lemma}\label{BOUND-hat} If the activation function $\sigma$ and the pooling operator $\cP_\mu$ are Lipschitz continuous with the Lipschitz constants $L$ and $P$, respectively, 
then for all $n\in\bN$
\begin{equation}\label{cN-bound-hat}
    \|\hat\cN_n(\bx)\|\leq \left(\prod_{j=1}^nLP\|\hat\bW_j\|\right)\|\bx\|+\sum_{j=1}^n
    \left(\prod_{i=j+1}^nLP\|\hat\bW_i\|\right)(L\|\hat\bbb_j\|+\|(\sigma\circ\cP_\mu)({\bf 0}_j)\|),
\end{equation}
where ${\bf 0}_j\in \bR^{m_j+\mu}$ are zero vectors for the finite dimensional case and ${\bf 0}_j\in \ell_p(\bN)$ are zero vectors for the infinite dimensional case.
\end{lemma}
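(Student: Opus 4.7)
The plan is to prove the bound by straightforward induction on $n$, using the Lipschitz continuity of $\sigma$ and $\cP_\mu$ together with the standard add-and-subtract trick to handle the nonlinearity of $\sigma\circ\cP_\mu$ at the origin. The identity that does the work is, for any admissible input $\bu$,
\begin{equation*}
\|\sigma(\cP_\mu(\bu)+\hat\bbb)\|\le \|\sigma(\cP_\mu(\bu)+\hat\bbb)-\sigma(\cP_\mu(\mathbf 0))\|+\|(\sigma\circ\cP_\mu)(\mathbf 0)\|,
\end{equation*}
after which the Lipschitz bound on $\sigma$ with constant $L$, then on $\cP_\mu$ with constant $P$, and the submultiplicativity \eqref{matrixcon1} of the induced matrix norm, produce
\begin{equation*}
\|\sigma(\cP_\mu(\bu)+\hat\bbb)\|\le LP\|\hat\bW\|\,\|\bu\|+L\|\hat\bbb\|+\|(\sigma\circ\cP_\mu)(\mathbf 0)\|,
\end{equation*}
whenever $\bu=\hat\bW\bx$ or $\bu=\hat\bW\hat\cN_{n-1}(\bx)$.

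For the base case $n=1$, I would apply the inequality above directly with $\hat\bW=\hat\bW_1$, $\hat\bbb=\hat\bbb_1$ and $\bu=\hat\bW_1\bx$, which yields exactly \eqref{cN-bound-hat} for $n=1$, recalling the convention that an empty product equals $1$. For the inductive step, assuming \eqref{cN-bound-hat} at level $n$, I would apply the same inequality to the recursion \eqref{Recursion-hat} with $\bu=\hat\bW_{n+1}\hat\cN_n(\bx)$ to get
\begin{equation*}
\|\hat\cN_{n+1}(\bx)\|\le LP\|\hat\bW_{n+1}\|\,\|\hat\cN_n(\bx)\|+L\|\hat\bbb_{n+1}\|+\|(\sigma\circ\cP_\mu)(\mathbf 0_{n+1})\|,
\end{equation*}
substitute the induction hypothesis for $\|\hat\cN_n(\bx)\|$, distribute the factor $LP\|\hat\bW_{n+1}\|$ across the product and the sum, and absorb the new terms $L\|\hat\bbb_{n+1}\|+\|(\sigma\circ\cP_\mu)(\mathbf 0_{n+1})\|$ as the $j=n+1$ summand (whose accompanying product $\prod_{i=n+2}^{n+1}$ is empty and hence equals $1$). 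This reindexing delivers \eqref{cN-bound-hat} at level $n+1$.

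There is essentially no serious obstacle; the only subtlety is bookkeeping. One must be careful that the zero vectors $\mathbf 0_j$ live in the correct space (dimension $m_j+\mu$ in the finite-dimensional case, resp.\ in $\ell_p(\bN)$ in the semi-infinite case), so that $\cP_\mu(\mathbf 0_j)$ is well-defined and $(\sigma\circ\cP_\mu)(\mathbf 0_j)$ makes sense. Provided the relevant norms are finite, the same argument goes through verbatim in both the Euclidean and the sequence-space settings, which is why the lemma can be stated uniformly. The induction closes, and the stated bound \eqref{cN-bound-hat} holds for every $n\in\bN$.
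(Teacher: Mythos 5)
Your proposal is correct and follows essentially the same route as the paper's own proof: an induction on $n$ whose base case and inductive step both rest on subtracting and adding $\sigma(\cP_\mu(\mathbf 0))$, applying the Lipschitz constants $L$ and $P$ together with the induced matrix norm, and then distributing $LP\|\hat\bW_{n+1}\|$ over the induction hypothesis. No differences worth noting.
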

\begin{proof}
We establish inequality \eqref{cN-bound-hat} by induction on $n$. When $n=1$, by \eqref{Step1}, for all $\bx\in \bR^s$ we have that
\begin{align*}
    \|\hat\cN_1(\bx)\|&\leq \|\sigma(\cP_\mu(\hat\bW_1\bx)+\hat\bbb_1)-\sigma(\cP_\mu({\bf 0}_1))\|+\|\sigma(\cP_\mu({\bf 0}_1))\|\\
    &\leq LP\|\hat\bW_1\|\|\bx\|+L\|\hat\bbb_1\|+\|\sigma(\cP_\mu({\bf 0}_1))\|.
\end{align*}
Thus, inequality \eqref{cN-bound-hat} holds for $n=1$. We assume that inequality \eqref{cN-bound-hat} holds for $n=k$ and proceed for the case $n = k + 1$. Using the recursion formula \eqref{Recursion-hat} and the induction hypothesis, for all $\bx\in \bR^d$ we obtain that
$$
\|\hat\cN_{k+1}(\bx)\|\leq\|\sigma(\cP_\mu(\hat\bW_{k+1}\hat\cN_k(\bx))+\hat\bbb_{k+1})-\sigma(\cP_\mu({\bf 0}_{k+1}))\|+\|\sigma(\cP_\mu({\bf 0}_{k+1}))\|.
$$
Using the Lipschitz continuity of $\sigma$ and $\cP_\mu$, we find that
$$
\|\hat\cN_{k+1}(\bx)\|\leq LP\|\hat\bW_{k+1}\|\|\hat\cN_k(\bx)\|+L\|\hat\bbb_{k+1}\|+\|\sigma(\cP_\mu({\bf 0}_{k+1}))\|.
$$
Invoking the induction hypothesis in the right-hand-side of the last inequality yields
\begin{align*}
    \|\hat\cN_{k+1}(\bx)\|
    \leq & LP\|\hat\bW_{k+1}\|\left[\left(\prod_{j=1}^kLP\|
    \hat\bW_j\|\right)\|\bx\|+\sum_{j=1}^k
    \left(\prod_{i=j+1}^kLP\|\hat\bW_i\|\right)(L\|\hat\bbb_j\|+\|\sigma(\cP_\mu({\bf 0}_j))\|)\right]\\
    &+L\|\hat\bbb_{k+1}\|+\|\sigma(\cP_\mu({\bf 0}_{k+1}))\|\\
    =&\left(\prod_{j=1}^{k+1}LP\|\hat\bW_j\|\right)\|\bx\|+\sum_{j=1}^{k+1}
    \left(\prod_{i=j+1}^{k+1}LP\|\hat\bW_i\|\right)(L\|\hat\bbb_j\|+\|\sigma(\cP_\mu({\bf 0}_j))\|).
\end{align*}
Thus,  \eqref{cN-bound-hat} holds for $n=k+1$ and the induction principle ensures that inequality \eqref{cN-bound-hat} holds for all $n\in\bN$.
\end{proof}

Lemma \ref{BOUND-hat} with an additional hypothesis on the norm 
guarantees that the sequence $\|\hat\cN_n(\bx)\|$, $n\in\bN$, is uniformly bounded in a bounded set $\DD$. Such a hypothesis will be made clear in later sections in specific contexts.


\section{Uniform Convergence of Deep Neural Networks with a Fixed Width}
\setcounter{equation}{0}

In this section, we establish uniform convergence of deep neural networks with a fixed width. 

Throughout this section, we let $l\in \bN$ be fixed and suppose that the sequence of weight matrices $\bW_1\in\bR^{l\times s}$, $\bW_n\in\bR^{l\times l}$, $n\ge 2$, and the sequence of bias vectors  $\bbb_n\in\bR^l$, $n\in\bN$. We then define neural networks $\cN_n(\bx)\in \bR^l$ by \eqref{DNN} {\it without pooling} and they satisfy the recursions \eqref{Step1} and \eqref{Recursion}. We assume that the vector norm used in this section satisfies 
\eqref{vectornorm-zero} and \eqref{nondecreasingvectornorm}.



The next lemma that follows directly from Lemma \ref{BOUND-hat} provides a bound of the deep neural network by the norms of the weight matrices and bias vectors. Here, the widths of the weight matrices and bias vectors are assumed to be variable, which is somewhat more general than what we need in this section. Since the neural networks are constructed without pooling, in this case we have that $\cP_\mu$ is the identity operator and $P=1$.

\begin{lemma}\label{BOUND} If the activation function $\sigma$ is Lipschitz continuous with a Lipschitz constant $L$,
the weight matrices $\bW_n\in\bR^{m_n\times m_{n-1}}$, for $n\in\bN$ with $m_0:=s$, and the bias vectors $\bbb_n\in\bR^{m_n}$ for $n\in\bN$, 
then for all $n\in\bN$
\begin{equation}\label{cN-bound}
    \|\cN_n(\bx)\|\leq \left(\prod_{j=1}^nL\|\bW_j\|\right)\|\bx\|+\sum_{j=1}^n
    \left(\prod_{i=j+1}^nL\|\bW_i\|\right)(L\|\bbb_j\|+\|\sigma({\bf 0}_{m_j})\|),
\end{equation}
where ${\bf 0}_j\in\bR^j$ is a zero vector.
\end{lemma}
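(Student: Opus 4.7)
The plan is to recognize Lemma \ref{BOUND} as an immediate specialization of Lemma \ref{BOUND-hat} to the no-pooling setting. Concretely, take the pooling operator $\cP_\mu$ to be the identity map, so that $\mu=0$ and the Lipschitz constant $P$ equals $1$. Under this choice, the recursions \eqref{Step1-hat}--\eqref{Recursion-hat} of the general framework collapse to \eqref{Step1}--\eqref{Recursion}, so that $\hat\cN_n$ coincides with $\cN_n$, and the zero vector ${\bf 0}_j\in\bR^{m_j+\mu}$ appearing in \eqref{cN-bound-hat} becomes ${\bf 0}_{m_j}\in\bR^{m_j}$, with $(\sigma\circ\cP_\mu)({\bf 0}_j)=\sigma({\bf 0}_{m_j})$. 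Substituting $P=1$ into inequality \eqref{cN-bound-hat} then yields exactly \eqref{cN-bound}. Note that the finite-dimensional variable-width case $\bW_n\in\bR^{m_n\times m_{n-1}}$, $\bbb_n\in\bR^{m_n}$ considered here is precisely the first alternative listed in the hypotheses of Lemma \ref{BOUND-hat} with $\mu=0$.

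For a self-contained verification I would alternatively prove \eqref{cN-bound} by direct induction on $n$, following the same template as the proof of Lemma \ref{BOUND-hat}. In the base case $n=1$, starting from $\cN_1(\bx)=\sigma(\bW_1\bx+\bbb_1)$, insert and subtract $\sigma({\bf 0}_{m_1})$, then apply the Lipschitz bound \eqref{Lip-cont} and the submultiplicative property \eqref{matrixcon1} to obtain $\|\cN_1(\bx)\|\le L\|\bW_1\|\,\|\bx\|+L\|\bbb_1\|+\|\sigma({\bf 0}_{m_1})\|$, matching \eqref{cN-bound} at $n=1$. For the inductive step from $n=k$ to $n=k+1$, use \eqref{Recursion} with the same splitting through $\sigma({\bf 0}_{m_{k+1}})$ to get
\[
\|\cN_{k+1}(\bx)\|\le L\|\bW_{k+1}\|\,\|\cN_k(\bx)\|+L\|\bbb_{k+1}\|+\|\sigma({\bf 0}_{m_{k+1}})\|,
\]
and then substitute the inductive bound for $\|\cN_k(\bx)\|$; the factor $L\|\bW_{k+1}\|$ redistributes into each product $\prod_{i=j+1}^{k+1}L\|\bW_i\|$, and the isolated terms $L\|\bbb_{k+1}\|+\|\sigma({\bf 0}_{m_{k+1}})\|$ account for the $j=k+1$ summand (where the empty product equals $1$).

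There is essentially no obstacle. The only points to check are that the identity operator is Lipschitz with constant $P=1$ (trivial, since it is linear with operator norm $1$), and that the convention $\prod_{j=\mu}^\nu a_j=1$ for $\mu>\nu$ correctly handles the $j=n$ term in the summation on the right-hand side of \eqref{cN-bound}. Everything else is just the same telescoping bookkeeping already carried out in the proof of Lemma \ref{BOUND-hat}, applied with $P=1$.
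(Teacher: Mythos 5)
Your proposal is correct and matches the paper exactly: the paper states Lemma \ref{BOUND} without a separate proof, noting only that it "follows directly from Lemma \ref{BOUND-hat}" with $\cP_\mu$ the identity operator and $P=1$, which is precisely your first paragraph. The self-contained induction you sketch as an alternative is just the proof of Lemma \ref{BOUND-hat} rewritten in this special case, so nothing new is needed there either.
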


Inequality \eqref{cN-bound} in Lemma \ref{BOUND} may be translated to the boundedness of the deep neural network sequence in a bounded domain.

\begin{lemma}\label{Boundedness}
Suppose that $\sigma:\bR\to\bR$ is Lipschitz continuous with a Lipschitz constant $L$, $\DD\subseteq \bR^s$ is bounded by $D>0$, the weight matrices $\bW_n\in\bR^{m_n\times m_{n-1}}$, for $n\in\bN$ with $m_0:=s$, and the bias vectors $\bbb_n\in\bR^{m_n}$ for $n\in\bN$. If there exists a constant $c>0$ such that $\|\bbb_n\|\leq c$, for all $n\in \bN$ and $\bW_n$, $n\in\bN$, satisfy the condition
\begin{equation}\label{BasicAssumption}
    \lim_{n\to\infty}L\|\bW_n\|<1,
\end{equation}
then there exists a positive constant $c_1$ such that  
\begin{equation}\label{BundofNn}
    \sup_{\bx\in\DD}\|\cN_n(\bx)\|\leq c_1D+Lcc_1+c_1\max\{\|\sigma({\bf 0}_{m_j})\|: j\in \bN_n\}, \ \ \mbox{for all}\ \ n\in \bN,
\end{equation}
where ${\bf 0}_j\in\bR^j$ is a zero vector.
If the widths $m_n$ are bounded, then there exists a constant $\rho>0$ such that $\sup_{\bx\in\DD}\|\cN_n(\bx)\|\leq \rho$ for all $n\in \bN$.
\end{lemma}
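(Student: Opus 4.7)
The plan is to combine Lemma \ref{BOUND}, which provides an explicit three-term upper bound on $\|\cN_n(\bx)\|$ in terms of products and sums involving $L\|\bW_j\|$, with Lemma \ref{Two_Bounds}, whose two conclusions \eqref{Prod1ton} and \eqref{sum-prod} are perfectly tailored to control exactly these products and sums. The hypothesis \eqref{BasicAssumption} says $\lim_{n\to\infty} L\|\bW_n\|<1$, so setting $\alpha_n:=L\|\bW_n\|$ verifies the hypothesis \eqref{less-than1} of Lemma \ref{Two_Bounds}, yielding a constant $c_1>0$ such that both $\prod_{j=1}^{n}L\|\bW_j\|\leq c_1$ and $\sum_{j=1}^{n}\prod_{i=j+1}^{n}L\|\bW_i\|\leq c_1$ hold for every $n\in\bN$.

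Next I would apply Lemma \ref{BOUND} and estimate its three summands separately. The first term $\left(\prod_{j=1}^{n}L\|\bW_j\|\right)\|\bx\|$ is bounded by $c_1 D$ using the product bound and the hypothesis $\|\bx\|\leq D$. For the second term, the uniform bound $\|\bbb_j\|\leq c$ factors out, so it is at most $Lc\sum_{j=1}^{n}\prod_{i=j+1}^{n}L\|\bW_i\|\leq Lcc_1$. For the third term, bounding $\|\sigma({\bf 0}_{m_j})\|$ by $\max\{\|\sigma({\bf 0}_{m_j})\|:j\in\bN_n\}$ and again invoking the sum bound gives $c_1\max\{\|\sigma({\bf 0}_{m_j})\|:j\in\bN_n\}$. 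Adding the three bounds produces exactly \eqref{BundofNn}, uniformly in $\bx\in\DD$.

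For the final statement, I would observe that when the widths $m_n$ are bounded, say $m_n\leq M$ for all $n$, then $\sigma({\bf 0}_{m_j})$ is the vector with every entry equal to $\sigma(0)$ in some dimension between $1$ and $M$, so the quantity $\max\{\|\sigma({\bf 0}_{m_j})\|:j\in\bN_n\}$ is dominated by the finite constant $\max\{\|\sigma({\bf 0}_{k})\|:k\in\bN_M\}$, which is independent of $n$. Combining this with the first two bounds that are already $n$-free gives a single constant $\rho$ with $\sup_{\bx\in\DD}\|\cN_n(\bx)\|\leq \rho$ for all $n\in\bN$.

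No step looks particularly delicate; the only subtlety is that the factor $\max\{\|\sigma({\bf 0}_{m_j})\|:j\in\bN_n\}$ in \eqref{BundofNn} a priori depends on $n$ through the range of dimensions, so the passage to the uniform bound $\rho$ genuinely requires the bounded-widths assumption in order to replace this running maximum by a finite one over $\bN_M$. Otherwise, with unbounded widths, the norm of the constant vector $\sigma(0)\mathbf{1}_{m_j}$ could blow up (for instance in $\ell_p$-norms with $p<\infty$ and $\sigma(0)\neq 0$), which is precisely why Lemma \ref{Boundedness} only claims the stronger conclusion in the bounded-width regime.
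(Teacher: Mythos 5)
Your proposal is correct and follows essentially the same route as the paper: apply Lemma \ref{Two_Bounds} with $\alpha_n:=L\|\bW_n\|$ to obtain the constant $c_1$ bounding both the product and the sum of products, substitute these into the three-term estimate of Lemma \ref{BOUND}, and in the bounded-width case dominate the running maximum $\max\{\|\sigma({\bf 0}_{m_j})\|:j\in\bN_n\}$ by an $n$-independent constant. Your closing remark about why the bounded-width hypothesis is genuinely needed (the norm of the constant vector $\sigma(0)\mathbf{1}_{m_j}$ can grow with the dimension when $\sigma(0)\neq 0$) is a correct observation that the paper leaves implicit.
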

\begin{proof}
By condition \eqref{BasicAssumption}, according to Lemma \ref{Two_Bounds} with $\alpha_n:=L\|\bW_n\|$, there exists a positive constant $c_1$ such that for all $n\in \bN$,
$$
\prod_{j=1}^{n}L\|\bW_j\|\leq c_1 
$$
and
$$
\sum_{j=1}^{n}
\left(\prod_{i=j+1}^{n}L\|\bW_i\|\right)\leq c_1.
$$
The hypothesis of this lemma ensures that inequality \eqref{cN-bound} in Lemma \ref{BOUND}  holds for all $n\in\bN$. Using inequality \eqref{cN-bound} together with the bounds given above, we obtain the estimate \eqref{BundofNn}.

When the widths $m_n$ are bounded, there is a positive constant $c_2$ such that 
$$
\max\{\|\sigma({\bf 0}_{m_j})\|: j\in \bN_n\}\leq c_2\ \ \mbox{for all}\ \ n\in \bN.
$$
Therefore, \eqref{BundofNn} ensures that
$$
\sup_{\bx\in\DD}\|\cN_n(\bx)\|\leq \rho:=c_1D+Lcc_1+c_1c_2, \ \  \mbox{for all}\ \ n\in \bN,
$$
proving the lemma.
\end{proof}

We are now ready to derive the uniform convergence of deep neural networks with a fixed width from Theorem
\ref{Fixed-widths-hat}.

\begin{theorem}\label{Fixed-widths}
Suppose that $\sigma:\bR\to\bR$ is Lipschitz continuous with a Lipschitz constant $L$ and $\DD\subset\bR^s$ is bounded. 
If the sequences $\bbb_n$, $\bW_n$, $n\in\bN$, converge with \eqref{BasicAssumption},  then the neural networks $\cN_n$ converge uniformly in  $\DD$.
\end{theorem}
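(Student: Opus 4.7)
The plan is to deduce this theorem as a direct consequence of the general uniform convergence framework established in Theorem \ref{Fixed-widths-hat}, by specializing to the setting of no pooling and fixed width $l$. Concretely, I would take the pooling operator $\cP_\mu$ to be the identity on $\bR^l$ (so $\mu=0$ and the Lipschitz constant $P=1$), and set $\hat\bW_n:=\bW_n$, $\hat\bbb_n:=\bbb_n$, which makes $\hat\cN_n$ coincide with $\cN_n$ and puts us exactly in the regime covered by Theorem \ref{Fixed-widths-hat}.

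Next I would check each hypothesis of Theorem \ref{Fixed-widths-hat}. Lipschitz continuity of $\sigma$ with constant $L$ and boundedness of $\DD$ are given; convergence of $\bbb_n$ and $\bW_n$ is assumed; and condition \eqref{BasicAssumption-hat} reduces to \eqref{BasicAssumption} because $P=1$. The only nontrivial requirement is the uniform boundedness condition \eqref{UniformBoundednessofDNN}, namely the existence of $\rho>0$ with $\sup_{\bx\in\DD}\|\cN_n(\bx)\|\leq\rho$ for all $n\in\bN$.

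To verify this, I would invoke Lemma \ref{Boundedness}. Since $\bbb_n$ converges in $\bR^l$, the sequence $\|\bbb_n\|$ is bounded by some constant $c$, which is the first hypothesis of Lemma \ref{Boundedness}. Condition \eqref{BasicAssumption} is the second. Moreover, because the width is fixed at $l$, we have $m_j=l$ for all $j\in\bN$, so $\|\sigma({\bf 0}_{m_j})\|=\|\sigma({\bf 0}_l)\|$ is a single constant (independent of $j$), and the boundedness of $\max\{\|\sigma({\bf 0}_{m_j})\|:j\in\bN_n\}$ is immediate. Lemma \ref{Boundedness} then produces the desired uniform bound $\rho$ on $\sup_{\bx\in\DD}\|\cN_n(\bx)\|$.

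Having verified all hypotheses, Theorem \ref{Fixed-widths-hat} applies and yields uniform convergence of $\cN_n$ to some $\cN\in C_l(\DD)$ on $\DD$. I do not anticipate a serious obstacle here, since the theorem is essentially a packaging of the general framework; the only bookkeeping point worth care is ensuring that the fixed width of the weight matrices ($\bW_1\in\bR^{l\times s}$ and $\bW_n\in\bR^{l\times l}$ for $n\geq2$) together with the standing assumption that the vector norm satisfies \eqref{vectornorm-zero} and \eqref{nondecreasingvectornorm} makes Lemma \ref{BOUND} (and hence Lemma \ref{Boundedness}) applicable without any extension of matrices or vectors. The conclusion then follows.
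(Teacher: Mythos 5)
Your proposal is correct and follows essentially the same route as the paper: both deduce the result from Theorem \ref{Fixed-widths-hat} with $\cP_\mu$ the identity ($P=1$), using the convergence of $\bbb_n$ to get boundedness of the bias vectors and Lemma \ref{Boundedness} (with the fixed width making $\|\sigma({\bf 0}_{m_j})\|$ a single constant) to supply the uniform bound \eqref{UniformBoundednessofDNN}. Your write-up is in fact slightly more explicit than the paper's in checking that \eqref{BasicAssumption-hat} reduces to \eqref{BasicAssumption} when $P=1$.
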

\begin{proof}
Note that the neural networks $\cN_n(\bx)$ satisfy the recursions \eqref{Step1} and \eqref{Recursion}. Since the sequence $\bbb_n$, $n\in\bN$, converges, it is bounded. By hypothesis, the sequence $\|\bW_n\|$, $n\in\bN$, satisfies 
\eqref{BasicAssumption}. Moreover, the neural networks have a fixed width. By Lemma \ref{Boundedness}, we know that $\cN_n(\bx)$ is uniformly bounded by a constant $\rho$ for all $n\in\bN$ and $\bx\in\DD$. Therefore, by Theorem \ref{Fixed-widths-hat}, the neural networks $\cN_n(\bx)$, $n\in\bN$, converge uniformly to a function $\cN\in C_l(\DD)$ in $\DD$.
\end{proof}

When we restrict the activation functions to contractions, Theorem \ref{Fixed-widths} is specialized to the uniform convergence theorem established in \cite{HuangXuZhang}, since in the special case of having a contractive activation function the assumption \eqref{BasicAssumption} becomes 
\begin{equation}\label{contraction}
    \lim_{n\to\infty}\|\bW_n\|<\gamma,
\end{equation}
which was assumed in \cite{HuangXuZhang}. When $L=1$, the activation functions are non-expansive. Typical examples of non-expansive activation functions include ReLU and Leaky ReLU. In this case,  the assumption \eqref{BasicAssumption} becomes 
\begin{equation}\label{contraction}
    \lim_{n\to\infty}\|\bW_n\|<1,
\end{equation}
which ensures the uniform convergence of deep neural networks with a non-expansive activation function, according to Theorem \ref{Fixed-widths}. To our best knowledge, uniform convergence of deep neural networks even with non-expansive activation functions, and a fixed width is not available in the literature. Theorem \ref{Fixed-widths} guarantees uniform convergence of neural networks with not only contractive but also  expansive (Lipschitz continuous) activation functions.

The next result on the rate of uniform convergence of neural networks follows directly from Theorem \ref{Order-Fixed-widths-hat}.

\begin{theorem}\label{Order-Fixed-widths}
Suppose that $\sigma:\bR\to\bR$ is Lipschitz continuous with a Lipschitz constant $L$ and $\DD\subseteq \bR^s$ is bounded. If $\bbb_n$ and $\bW_n$ converge to $\bbb^*$ and $\bW^*$ exponentially with \eqref{BasicAssumption}, then the neural networks $\cN_n$ converge to a function $\cN\in C_l(\DD)$ exponentially and uniformly in $\DD$.
\end{theorem}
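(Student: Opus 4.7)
The plan is to reduce this to Theorem \ref{Order-Fixed-widths-hat} by verifying that the fixed-width, no-pooling setting of Section 4 fits the general framework of Section 3. Specifically, I would identify $\cP_\mu$ with the identity operator, so that $\mu=0$ and $P=1$, and take $\hat\bW_n=\bW_n$, $\hat\bbb_n=\bbb_n$, so that the recursions \eqref{Step1-hat}--\eqref{Recursion-hat} collapse to \eqref{Step1}--\eqref{Recursion} and $\hat\cN_n=\cN_n$. Under this identification, the condition \eqref{BasicAssumption-hat} reads $\lim_{n\to\infty}L\|\bW_n\|<1$, which is exactly the hypothesis \eqref{BasicAssumption}.

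Next I would check the remaining hypothesis of Theorem \ref{Order-Fixed-widths-hat}, namely the uniform boundedness $\|\cN_n(\bx)\|\le\rho$ for all $\bx\in\DD$ and all $n\in\bN$. This is the content of Lemma \ref{Boundedness}: the hypothesis that $\bbb_n$ converges (here exponentially) certainly implies boundedness of $\|\bbb_n\|$; the widths are fixed at $l$, hence trivially bounded; and \eqref{BasicAssumption} is assumed. Consequently Lemma \ref{Boundedness} yields the required constant $\rho>0$ with $\sup_{\bx\in\DD}\|\cN_n(\bx)\|\le\rho$ for every $n$.

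With all hypotheses of Theorem \ref{Order-Fixed-widths-hat} in force (Lipschitz activation, bounded domain, exponential convergence of $\bbb_n$ and $\bW_n$, the spectral-type bound \eqref{BasicAssumption}, and the uniform bound on $\|\cN_n\|$), the theorem directly delivers a function $\cN\in C_l(\DD)$ such that $\cN_n\to\cN$ exponentially and uniformly on $\DD$, which is precisely the statement to prove.

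I do not expect a serious obstacle here, since the work is essentially bookkeeping: matching the notation of the abstract framework to the pooling-free fixed-width setting and invoking Lemma \ref{Boundedness} to produce the uniform bound $\rho$. The only point requiring a little care is to observe that the exponential convergence of $\bbb_n$ implies the sequence is bounded (so that Lemma \ref{Boundedness} applies, even though that lemma is stated under boundedness of $\|\bbb_n\|$ rather than convergence), and that the fixed width $l$ trivially satisfies the bounded-width proviso of Lemma \ref{Boundedness}; neither is substantive.
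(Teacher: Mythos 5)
Your reduction is exactly what the paper intends: Theorem \ref{Order-Fixed-widths} is stated as following directly from Theorem \ref{Order-Fixed-widths-hat}, with the identity pooling ($P=1$) turning \eqref{BasicAssumption-hat} into \eqref{BasicAssumption} and Lemma \ref{Boundedness} supplying the uniform bound $\rho$, just as in the paper's proof of Theorem \ref{Fixed-widths}. The proposal is correct and matches the paper's approach.
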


To close this section, we present a convergence result for deep neural networks with pooling. In building a neural network, pooling is used to enhance features and reduce the dimension of features. 
Uniform convergence of deep neural networks with the average pooling and a fixed width in the vector norms $\|\cdot\|_p$, for $1\leq p\leq +\infty$, and that of deep neural networks with the max pooling in the  vector norm $\|\cdot\|_\infty$ were established in \cite{HuangXuZhang}.
Theorems \ref{Fixed-widths} and \ref{Order-Fixed-widths} may be modified for uniform convergence of deep neural networks with  pooling. The average pooling and the max pooling are most popular in application. For an integer $\mu\ge 2$, the average pooling $\cP_\mu^A$ is the linear operator from $\bR^{l+\mu}$ to $\bR^l$ defined by
\begin{equation}\label{averagepooling}
	(\cP_\mu^A\bx)_i:=\frac{1}{\mu+1}\sum_{j=0}^\mu x_{i+j},\ \ i\in \bN_l,\ \ \bx\in\bR^{l+\mu}
\end{equation}
and the max pooling $\cP_\mu^M$ is the nonlinear map from $\bR^{l+\mu}$ to $\bR^l$ defined by
\begin{equation}\label{maxpooling}
	(\cP_\mu^M\bx)_i:=\max\{x_{i+j}: j=0, 1,\dots, \mu \},\ \  i\in \bN_l,\ \ \bx\in\bR^{l+\mu}.
\end{equation}
It is known that if $1\le p\le +\infty$, then  for all $\bx\in\bR^{l+\mu}$
\begin{equation}\label{poolingproperty1}
\|\cP_\mu^A \bx\|_p\le \|\bx\|_p, \ \ \mbox{for all}\ \ \bx\in\bR^{l+\mu}, \  p\in [1, +\infty]
\end{equation}
and
\begin{equation}\label{poolingproperty2}
\|\cP_\mu^M \bx-\cP_\mu^M\by\|_p\le (\mu+1)^{1/p}\|\bx-\by\|_p, \ \ \mbox{for all}\ \ \bx,\by\in\bR^{l+\mu}.
\end{equation}
It can be seen from \eqref{poolingproperty1} that $\cP^A_\mu$ is a non-expansive linear operator, and from \eqref{poolingproperty2} that $\cP^M_\mu$ is Lipschitz continuous with the Lipschitz constant $P:=(\mu+1)^{1/p}$ with respect to the vector norm $\|\cdot\|_p$, for $1\leq p\leq \infty$. Clearly, when $1\leq p<\infty$, $\cP^M_\mu$ are expanding and when $p=\infty$, $\cP^M_\mu$ is non-expansive.
In general, by $\cP$ we denote the pooling map from $\bR^{l+\mu}$ to $\bR^l$. Suppose that $\bW_1\in\bR^{(l+\mu)\times s}$, $\bW_n\in\bR^{(l+\mu)\times l}$ for $n\ge 2$, $\bbb_n\in\bR^{l}$ for $n\in\bN$. Deep neural networks with pooling have the form
\begin{equation}\label{cnx2}
	{\cN}^\cP_n(\bx):=\left(\bigodot_{i=1}^n \sigma(\cP(\bW_i \cdot)+\bbb_i)\right)(\bx),\ \ \bx\in\bR^s.
\end{equation}
Clearly, the sequence ${\cN}^\cP_n(\bx)$, $n\in\bN$, satisfies the recursions \eqref{Step1-hat} and \eqref{Recursion-hat}. The next theorem follows directly from Theorems \ref{Fixed-widths-hat} and \ref{Order-Fixed-widths-hat}.

\begin{theorem}\label{Fixed-widths-pooling}
Suppose that $\sigma:\bR\to\bR$ is Lipschitz continuous with the Lipschitz constant $L$, $\cP: \bR^{l+\mu}\to \bR^l$ is a Lipchitz continuous pooling operator with the Lipchitz constant $P$, $\DD\subset \bR^l$ is bounded, and $1\leq p\leq+\infty$. If the sequences $\bbb_n\in \bR^l$, $\bW_n\in\bR^{(l+\mu)\times l}$, $n\in\bN$, converge in the vector norms $\|\cdot\|_p$ and matrix norm $\|\cdot\|_p$, respectively, with 
\begin{equation}\label{LP-condition}
    \lim_{n\to\infty}LP\|\bW_n\|_p<1,
\end{equation}
then the neural networks $\cN_n^\cP$ converge uniformly in $\DD$.

Furthermore, if $\bbb_n$ and $\bW_n$ converge to $\bbb^*$ and $\bW^*$ exponentially, then the neural networks $\cN_n^\cP$ converge to a function $\cN\in C_l(\DD)$ exponentially and uniformly in $\DD$.
\end{theorem}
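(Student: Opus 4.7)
The plan is to derive this theorem as a direct application of Theorems \ref{Fixed-widths-hat} and \ref{Order-Fixed-widths-hat} from the general framework of Section 3. The hypotheses of those theorems are almost all stated explicitly here—the activation function and pooling operator are Lipschitz continuous, the weight matrices and bias vectors converge, and \eqref{BasicAssumption-hat} is exactly the assumption \eqref{LP-condition}. The only missing ingredient is the uniform boundedness \eqref{UniformBoundednessofDNN} of the network outputs on $\DD$, so the bulk of the work is to verify that condition.

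First, I would observe that the neural network ${\cN}^\cP_n(\bx)$ defined by \eqref{cnx2} satisfies the recursions \eqref{Step1-hat} and \eqref{Recursion-hat} with the identifications $\hat\bW_n := \bW_n$, $\hat\bbb_n := \bbb_n$, and $\cP_\mu := \cP$. Applying Lemma \ref{BOUND-hat} then yields, for every $n\in\bN$ and $\bx\in\bR^s$,
\[
\|{\cN}^\cP_n(\bx)\|_p \le \Bigl(\prod_{j=1}^n LP\|\bW_j\|_p\Bigr)\|\bx\|_p + \sum_{j=1}^n \Bigl(\prod_{i=j+1}^n LP\|\bW_i\|_p\Bigr)\bigl(L\|\bbb_j\|_p+\|(\sigma\circ\cP)(\mathbf{0})\|_p\bigr).
\]
Since the width $l$ is fixed, $(\sigma\circ\cP)(\mathbf{0})$ is one fixed vector in $\bR^l$ and its norm is a finite constant. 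The convergence of $\bbb_n$ ensures that $\|\bbb_j\|_p$ is uniformly bounded in $j$, and the boundedness of $\DD$ gives some $D>0$ with $\|\bx\|_p\le D$ for all $\bx\in\DD$.

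Next, I would apply Lemma \ref{Two_Bounds} with $\alpha_j := LP\|\bW_j\|_p$; hypothesis \eqref{LP-condition} is precisely \eqref{less-than1}, so there is a constant $c_1>0$ such that both $\prod_{j=1}^n LP\|\bW_j\|_p \le c_1$ and $\sum_{j=1}^n \prod_{i=j+1}^n LP\|\bW_i\|_p \le c_1$ for all $n\in\bN$. Combining these bounds with the estimates above produces a single constant $\rho>0$ with $\sup_{\bx\in\DD}\|{\cN}^\cP_n(\bx)\|_p \le \rho$ for all $n\in\bN$. Condition \eqref{UniformBoundednessofDNN} is therefore verified, and Theorem \ref{Fixed-widths-hat} immediately yields the desired uniform convergence of ${\cN}^\cP_n$ on $\DD$ to some $\cN\in C_l(\DD)$. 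For the exponential statement, the same uniform bound together with the exponential convergence of $\bbb_n$ and $\bW_n$ puts us exactly in the hypotheses of Theorem \ref{Order-Fixed-widths-hat}, which then delivers the uniform exponential rate.

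I do not anticipate a serious obstacle: essentially every step is bookkeeping built on machinery already developed. The only mild subtlety is ensuring that the boundedness argument works uniformly for pooling operators that are merely Lipschitz (not necessarily linear or non-expansive); this is handled because Lemma \ref{BOUND-hat} was already formulated for a general Lipschitz $\cP_\mu$, so the constant $\|(\sigma\circ\cP)(\mathbf{0})\|_p$ absorbs any shift introduced by the pooling at the origin.
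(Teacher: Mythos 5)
Your proposal is correct and follows essentially the same route as the paper, which simply states that the theorem follows directly from Theorems \ref{Fixed-widths-hat} and \ref{Order-Fixed-widths-hat}. The only difference is that you explicitly verify the uniform boundedness condition \eqref{UniformBoundednessofDNN} via Lemmas \ref{BOUND-hat} and \ref{Two_Bounds}, a detail the paper leaves implicit (it is the pooling analogue of Lemma \ref{Boundedness}), and your verification is sound.
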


Note that condition \eqref{LP-condition} for the average pooling and the max pooling is simplified to  
$$
 \lim_{n\to\infty}L\|\bW_n\|_p<1
$$
and
$$
 \lim_{n\to\infty}L(\mu+1)^{1/p}\|\bW_n\|_p<1,
$$
respectively, in light of \eqref{poolingproperty1} and \eqref{poolingproperty2}. Theorem \ref{Fixed-widths-pooling} extends the uniform convergence result with pooling in \cite{HuangXuZhang} from contractive activation functions to Lipschitz continuous activation functions. Moreover, the result for the max pooling in  \cite{HuangXuZhang} is for $p=+\infty$ only. While  Theorem \ref{Fixed-widths-pooling} for the max pooing holds for all norms $\|\cdot\|_p$, $1\leq p\leq+\infty$.

\section{Uniform Convergence of Deep Neural Networks with Bounded Widths}

In this section, we extend the uniform convergence theorems established in the last section for deep neural networks with weight matrices of a {\it fixed} width to those with weight matrices of {\it bounded} widths.


We adopt the setting described in the beginning of section 2 and assume in this section that the matrix widths $m_n$, $n\in \bN$, of the deep neural networks are bounded. Specifically, we assume $l:=\max\{m_n: n\in \bN\}<+\infty$. Thus, $1\leq m_n\leq l$ for all $n\in\bN$ and $m_{n_0}=l$ for some $n_0\in \bN$. We define the deep neural network $\cN_n(\bx)$ by \eqref{DNN}. Clearly,  $\cN_n(\bx)$ is a vector-valued function in $\bR^{m_n}$. The size of $\cN_n(\bx)$ varies according to $n$. The goal of this section is to establish uniform convergence theorems of the sequence $\cN_n(\bx)$, $n\in\bN$. The approach used in the last section for convergence analysis is limited to deep neural networks with weight matrices of a {\it fixed} width. To address this issue, 
we augment the matrix $\bW_n\in \bR^{m_n\times m_{n-1}}$ to $\tilde{\bW}_n\in \bR^{l\times l}$ and the vector $\bbb_n\in\bR^{m_n}$ to $\tilde{\bbb}_n\in \bR^l$ by the zero-padding. That is,
we let
\begin{equation}\label{augmentW1}
\tilde{\bW}_1:=\left[\begin{array}{l}
     \bW_1  \\
     {\bf 0}_{(l-m_1)\times s}
\end{array}\right],
\end{equation}
\begin{equation}\label{augmentWi}
\tilde{\bW}_n:=\left[\begin{array}{ll}
     \bW_n & {\bf 0}_{m_n\times (l-m_{n-1})}  \\
     {\bf 0}_{(l-m_n)\times m_{n-1}}  & {\bf 0}_{(l-m_{n})\times (l-m_{n-1})}
\end{array}\right], \ \ n> 1,
\end{equation}
and 
\begin{equation}\label{Augmentb}
\tilde{\bbb}_n:=\left[\begin{array}{l}
     \bbb_n \\
     {\bf 0}_{l-m_n}
\end{array}\right].
\end{equation}
We then define the deep neural network $\tilde{\cN}_n(\bx)$ by
\begin{equation}\label{tilde-DNN}
    \tilde{\cN}_n(\bx):=\left(\bigodot_{i=1}^n \sigma(\tilde{\bW}_i \cdot+\tilde{\bbb}_i)\right)(\bx),\ \ \bx\in\bR^s.
\end{equation}
Clearly, $\tilde{\cN}_n(\bx)$ is a vector-valued function in $\bR^l$, that is, for all $n\in\bN$,  $\tilde{\cN}_n(\bx)$ have the same size $l$. Moreover,  $\tilde{\cN}_n(\bx)$ has the recursion
\begin{equation}\label{Step2}
    \tilde\cN_1(\bx):=\sigma(\tilde\bW_1 \bx+\tilde\bbb_1)
\end{equation}
and
\begin{equation}\label{Recursion2}
    \tilde{\cN}_{n+1}(\bx)=\sigma(\tilde{\bW}_{n+1}\tilde{\cN}_n(\bx)+\tilde{\bbb}_{n+1}), \ \ \bx\in \bR^s, \ \ \mbox{for all} \ \ n\in \bN.
\end{equation}

We need to define the uniform convergence of  $\cN_n(\bx)$, $n\in \bN$.

\begin{definition}\label{NConv-Ext}
We say that a sequence of vector-valued functions $\cN_n:\bR^s\to \bR^{m_n}$, $n\in \bN$, converges uniformly to a function $\cN\in C_l(\DD)$ if the sequence of the augmented vector-valued functions $\tilde{\cN}_n(\bx)$, $n\in \bN$, converges uniformly to $\cN$ in the space $C_l(\DD)$ in a norm on $\bR^l$. 
\end{definition}

Likewise, we need to define the convergence of  $\bW_n$, $n\in \bN$.

\begin{definition}\label{WConv-Ext}
We say that a sequence of matrices $\bW_n\in \bR^{m_n\times m_{n-1}}$, $n\in \bN$ converges to $\bW\in \bR^{l\times l}$ as $n\to\infty$ if  
$$
\lim_{n\to\infty}\|\tilde{\bW}_n-\bW\|=0,
$$
where $\|\cdot\|$ is the induced matrix norm of a vector norm on $\bR^l$. 
\end{definition}

Convergence of a sequence of vectors $\bbb_n\in \bR^{m_n}$, $n\in \bN$, can be understood as convergence of a sequence of $m_n\times 1$ matrices.

We need to understand the relation between the norm of a matrix and that of its augmented matrix with zero blocks.
For $\mu,\nu\in\bN$, suppose that $\bW\in \bR^{\mu\times\nu}$ and for $l\geq \max\{\mu,\nu\}$ we let
\begin{equation}\label{augmentMatrixW}
\tilde{\bW}:=\left[\begin{array}{ll}
     \bW & {\bf 0}_{\mu\times (l-\nu)}  \\
     {\bf 0}_{(l-\mu)\times \nu}  & {\bf 0}_{(l-\mu)\times (l-\nu)}
\end{array}\right].
\end{equation}
In the next lemma, we show that the matrix augmentation process described above preserves certain matrix norms.

\begin{lemma}\label{MatrixNormEqual} For $\mu,\nu\in\bN$, suppose that $l\geq \max\{\mu,\nu\}$.
If $\tilde{\bW}\in \bR^{l\times l}$ is an augmented matrix of $\bW\in \bR^{\mu\times \nu}$ defined by \eqref{augmentMatrixW} and $\|\cdot\|$ is the matrix norm induced from  a vector norm on $\bR^l$ satisfying the extension invariant \eqref{vectornorm-zero} and the monotonicity condition
\eqref{nondecreasingvectornorm},
then
\begin{equation}\label{matrixNormE}
    \|\tilde{\bW}\|=\|\bW\|.
\end{equation}
\end{lemma}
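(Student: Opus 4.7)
The plan is to prove the two inequalities $\|\tilde{\bW}\|\le\|\bW\|$ and $\|\bW\|\le\|\tilde{\bW}\|$ separately, exploiting the two assumed properties of the vector norm on $\bR^l$: the extension invariant condition \eqref{vectornorm-zero} to convert zero-padded vectors between dimensions, and the monotonicity condition \eqref{nondecreasingvectornorm} to bound a component block by the whole vector.

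For the direction $\|\tilde{\bW}\|\le\|\bW\|$, I would take an arbitrary nonzero $\bx\in\bR^l$ and split it as $\bx=[\bx_1;\bx_2]$ with $\bx_1\in\bR^\nu$ and $\bx_2\in\bR^{l-\nu}$. By the block structure \eqref{augmentMatrixW}, we have $\tilde{\bW}\bx=[\bW\bx_1;{\bf 0}_{l-\mu}]$, so applying the extension invariant \eqref{vectornorm-zero} gives $\|\tilde{\bW}\bx\|=\|\bW\bx_1\|$. Next, the zero-padded vector $[\bx_1;{\bf 0}_{l-\nu}]$ is componentwise dominated by $\bx$, so monotonicity \eqref{nondecreasingvectornorm} together with extension invariance yields $\|\bx_1\|=\|[\bx_1;{\bf 0}_{l-\nu}]\|\le\|\bx\|$. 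Hence (assuming $\bx_1\ne 0$, the case $\bx_1=0$ being trivial) $\|\tilde{\bW}\bx\|/\|\bx\|\le\|\bW\bx_1\|/\|\bx_1\|\le\|\bW\|$, and taking the supremum over $\bx$ gives the desired bound.

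For the reverse direction $\|\bW\|\le\|\tilde{\bW}\|$, I would start from an arbitrary nonzero $\by\in\bR^\nu$ and lift it to $\tilde{\by}:=[\by;{\bf 0}_{l-\nu}]\in\bR^l$. Then $\tilde{\bW}\tilde{\by}=[\bW\by;{\bf 0}_{l-\mu}]$ by the block structure, and two applications of \eqref{vectornorm-zero} give $\|\tilde{\by}\|=\|\by\|$ and $\|\tilde{\bW}\tilde{\by}\|=\|\bW\by\|$. Consequently $\|\bW\by\|/\|\by\|=\|\tilde{\bW}\tilde{\by}\|/\|\tilde{\by}\|\le\|\tilde{\bW}\|$, and taking the supremum over $\by$ completes the proof.

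There is no real obstacle here; the argument is almost a direct unpacking of the two defining properties of the norm. The only subtlety worth flagging is the implicit convention that the vector norms on $\bR^\mu$ and $\bR^\nu$ are compatible with the one on $\bR^l$ in the sense that zero-padding is an isometry, which is precisely the content of \eqref{vectornorm-zero} and is satisfied by all $\ell_p$-norms used in the paper.
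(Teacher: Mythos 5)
Your proof is correct and follows essentially the same route as the paper's: both directions rest on the identity $\tilde{\bW}[\bx;\bx']=[\bW\bx;{\bf 0}]$, with the extension invariance \eqref{vectornorm-zero} giving $\|\tilde{\bW}\tilde{\bx}\|=\|\bW\bx\|$ and the lower bound $\|\bW\|\le\|\tilde{\bW}\|$, and the monotonicity \eqref{nondecreasingvectornorm} giving $\|\bx\|\le\|\tilde{\bx}\|$ and hence the upper bound. Your explicit handling of the degenerate case $\bx_1=0$ is a minor point of extra care, not a different argument.
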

\begin{proof}
By the definition \eqref{augmentMatrixW}  of $\tilde{\bW}$, we have that
$$
\tilde{\bW}\tilde{\bx}=\left[\begin{array}{c}
    \bW\bx
          \\
        {\bf 0}_{l-\nu}
    \end{array}\right], \ \ \mbox{for all}\ \ \tilde{\bx}:=\left[\begin{array}{c}
    \bx
          \\
         \bx'
    \end{array}\right]\in \bR^l\ \ \mbox{with}\ \ \bx\in\bR^\nu, \ \bx'\in\bR^{l-\nu}.
$$
Since the vector norm $\|\cdot\|$ satisfies \eqref{vectornorm-zero}, 
we obtain that 
\begin{equation}\label{equ-matrixNorm}
    \|\tilde{\bW}\tilde{\bx}\|=\|\bW\bx\|, \ \ \mbox{for all}\ \ \tilde{\bx}:=\left[\begin{array}{c}
    \bx
          \\
         \bx'
    \end{array}\right]\in \bR^l\ \ \mbox{with}\ \ \bx\in\bR^\nu, \ \bx'\in\bR^{l-\nu}.
\end{equation}
The definition of the matrix norm induced from the vector norm together with equation \eqref{equ-matrixNorm} ensures that
\begin{align}\label{matrixNorm}
    \|\tilde{\bW}\|
&=\sup\left\{\frac{\|\bW\bx\|}{\|\tilde{\bx}\|}:\ \ \mbox{for all}\ \ \tilde{\bx}:=\left[\begin{array}{c}
    \bx
          \\
         \bx'
    \end{array}\right]\in \bR^l\ \ \mbox{with}\ \ \bx\in\bR^\nu, \ \bx'\in\bR^{l-\nu}\right\}.
\end{align}
In the right-hand-side of equation \eqref{matrixNorm}, we restrict $\bx'={\bf 0}$ and since the vector norm satisfies the extension invariant property \eqref{vectornorm-zero}, we  note that $\|\tilde{\bx}\|=\|\bx\|$ with this restriction. Therefore, we find that
\begin{equation}\label{inequ-matrixNorm}
    \|\tilde{\bW}\|
\geq \sup\left\{\frac{\|\bW\bx\|}{\|\bx\|}:\bx\in \bR^\nu\right\}=\|\bW\|.
\end{equation}
On the other hand, in light of the monotonicity condition \eqref{nondecreasingvectornorm} that the vector norm satisfies, we observe for all 
$\tilde{\bx}:=\left[\begin{array}{c}
    \bx
          \\
         \bx'
    \end{array}\right]\in \bR^l$ 
with $\bx\in\bR^\nu$ and $\bx'\in\bR^{l-\nu}$ 
that 
$$
\|\tilde{\bx}\|\geq \left\|\left[\begin{array}{c}
    \bx
          \\
         {\bf 0}
    \end{array}\right]\right\| =\|\bx\|,\ \ \mbox{where} \ \ {\bf 0}\in \bR^{l-\nu}.
$$
This inequality together with \eqref{matrixNorm} ensures that
\begin{equation}\label{inequ-matrixNorm2}
    \|\tilde{\bW}\|
\leq \sup\left\{\frac{\|\bW\bx\|}{\|\bx\|}: \bx\in \bR^\nu\right\}=\|\bW\|.
\end{equation}
Combining inequalities \eqref{inequ-matrixNorm} and  \eqref{inequ-matrixNorm2} yields
equation \eqref{matrixNormE}.
\end{proof}

Next, we present the following uniform convergence theorem for deep neural networks with a Lipschitz continuous activation function and with weight matrices of bounded widths.

\begin{theorem}
Suppose that $\sigma:\bR\to\bR$ is Lipschitz continuous with the Lipschitz constant $L$, $\DD\subset \bR^s$ is bounded, $l:=\sup\{m_i: i\in \bN\}<+\infty$ and $\|\cdot\|$ is the matrix norm induced from a vector norm satisfying the extension invariant condition \eqref{vectornorm-zero} and the monotonicity condition 
\eqref{nondecreasingvectornorm}. If the sequences $\bbb_n\in \bR^{m_n}$, $\bW_n\in\bR^{m_n\times m_{n-1}}$, $n\in\bN$, converge, and the sequence $\|\bW_n\|$, $n\in\bN$, satisfies the condition \eqref{BasicAssumption}, then the sequence  $\cN_n$, $n\in\bN$, converges uniformly in  $\DD$.
\end{theorem}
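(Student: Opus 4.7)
The plan is to reduce the bounded-widths case to the fixed-width case already handled by Theorem \ref{Fixed-widths}, using precisely the zero-padding augmentation \eqref{augmentW1}--\eqref{Augmentb} introduced just before the statement. The augmented weight matrices $\tilde{\bW}_n$ and bias vectors $\tilde{\bbb}_n$ live in $\bR^{l\times l}$ and $\bR^{l}$ respectively, and the sequence $\tilde{\cN}_n$ defined by \eqref{tilde-DNN} satisfies the fixed-width recursions \eqref{Step2} and \eqref{Recursion2}. By Definition \ref{NConv-Ext}, uniform convergence of $\cN_n$ on $\DD$ is, by fiat, the uniform convergence of $\tilde{\cN}_n$ in $C_l(\DD)$; so it suffices to verify the hypotheses of Theorem \ref{Fixed-widths} for the augmented sequences.

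The first step is to transfer the norm condition. By Lemma \ref{MatrixNormEqual}, applicable because the vector norm satisfies the extension invariant \eqref{vectornorm-zero} and monotonicity \eqref{nondecreasingvectornorm} conditions, we have $\|\tilde{\bW}_n\|=\|\bW_n\|$ for every $n\in\bN$. Hence the assumption $\lim_{n\to\infty}L\|\bW_n\|<1$ immediately yields
\begin{equation*}
\lim_{n\to\infty}L\|\tilde{\bW}_n\|<1,
\end{equation*}
which is exactly condition \eqref{BasicAssumption} applied to the augmented weights. The second step is to verify convergence of the augmented sequences: by Definition \ref{WConv-Ext}, convergence of $\bW_n$ in the variable-width sense is defined as convergence of $\tilde{\bW}_n$ in $\bR^{l\times l}$, and the analogous statement holds for $\tilde{\bbb}_n$ in $\bR^l$; so both hypotheses are given to us directly.

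The third step is to confirm the uniform boundedness of $\|\tilde{\cN}_n(\bx)\|$ on $\DD$. Since $\tilde{\cN}_n$ is the DNN associated with the fixed-width weights $\tilde{\bW}_n$ and biases $\tilde{\bbb}_n$, and since $\tilde{\bbb}_n$ converges in $\bR^l$ (hence is bounded) and $\{L\|\tilde{\bW}_n\|\}$ has limit strictly less than $1$, Lemma \ref{Boundedness} applied in the fixed-width setting (where the width is $l$, so the constant $\max\{\|\sigma({\bf 0}_{m_j})\|\}$ reduces to $\|\sigma({\bf 0}_l)\|$) supplies a constant $\rho>0$ with $\sup_{\bx\in\DD}\|\tilde{\cN}_n(\bx)\|\le\rho$ for all $n$. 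With all hypotheses of Theorem \ref{Fixed-widths} (or directly of Theorem \ref{Fixed-widths-hat} with $\cP_\mu$ the identity and $P=1$) in hand, that theorem yields a function $\cN\in C_l(\DD)$ such that $\tilde{\cN}_n\to\cN$ uniformly on $\DD$, which is the desired conclusion.

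The only genuinely delicate point is the norm-preservation Lemma \ref{MatrixNormEqual}, and it is already proved in the excerpt; everything else in the argument is translation between the variable-width data $(\bW_n,\bbb_n,\cN_n)$ and its fixed-width augmentation $(\tilde{\bW}_n,\tilde{\bbb}_n,\tilde{\cN}_n)$, which the Definitions \ref{NConv-Ext} and \ref{WConv-Ext} are set up precisely to accommodate. No structural identity such as $\sigma(0)=0$ is needed, because we work directly with $\tilde{\cN}_n$ rather than trying to view it as a zero-padded copy of $\cN_n$.
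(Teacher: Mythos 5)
Your proposal is correct and follows essentially the same route as the paper's own proof: reduce to the fixed-width case via the zero-padding augmentation, invoke Lemma \ref{MatrixNormEqual} to transfer condition \eqref{BasicAssumption} to the augmented weights, and then apply Theorem \ref{Fixed-widths}. Your explicit verification of the uniform bound on $\|\tilde{\cN}_n(\bx)\|$ via Lemma \ref{Boundedness} is a harmless redundancy, since the proof of Theorem \ref{Fixed-widths} already performs that step internally.
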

\begin{proof}
According to Definition \ref{NConv-Ext}, it suffices to prove that $\tilde{\cN}_n:\bR^s\to \bR^l$, $n\in\bN$, converges uniformly. This is done by employing Theorem \ref{Fixed-widths}. By the hypothesis of this theorem and Definition \ref{WConv-Ext}, we see that the sequences $\tilde{\bbb}_n\in \bR^l$, $\tilde{\bW}_n\in\bR^{l\times l}$, $n\in\bN$, converge. It remains to prove that  
\begin{equation}\label{Condition}
    \lim_{n\to\infty}L\|\tilde{\bW}_n\|<1.
\end{equation}
Because $\|\cdot\|$ is the matrix norm induced from a vector norm that satisfies the extension invariant condition \eqref{vectornorm-zero} and the monotonicity condition 
\eqref{nondecreasingvectornorm}, by equation \eqref{matrixNormE} of Lemma \ref{MatrixNormEqual}, we conclude that
$$
\|\tilde{\bW}_n\|=\|\bW_n\|, \ \ \mbox{for all}\ \ n\in \bN. 
$$
This equation with the hypothesis that the sequence $\|\bW_n\|$, $n\in\bN$, satisfies the condition \eqref{BasicAssumption} implies that inequality \eqref{Condition} holds. Hence, the conclusion of this theorem follows directly from Theorem \ref{Fixed-widths}.
\end{proof}

We can obtain uniform convergence results for neural networks with Lipschitz continuous activation functions and pooling, and with bounded matrix widths. Moreover, the exponential convergence result for neural networks of this type can also be obtained.

\section{Deep Neural Networks with Unbounded Widths}

We consider in this section deep neural networks with weight matrices of unbounded widths. In this case, we suppose that $\bW_n\in \bR^{m_n\times m_{n-1}}$, $\bbb_n\in\bR^{m_n}$, $n\in\bN$, and a sequence of deep neural networks $\cN_n(\bx)$, $n\in\bN$, are defined by \eqref{DNN}. Then, the sequence $\cN_n(\bx)$, $n\in\bN$, satisfies \eqref{Recursion}. We further assume that the matrix widths $m_n$, $n\in \bN$, are unbounded. That is, there exists a subsequence $m_{n_i}$, $i\in \bN$, with $\lim_{i\to\infty}m_{n_i}=+\infty$. Due to the unboundedness of the widths, the approach used in the last section is not applicable. We will extend all vectors and matrices to elements in sequence spaces $\ell_p(\bN)$ and $\ell_p(\bN^2)$, respectively, and consider convergence in the sequence spaces.


We extend matrices $\bW_n\in \bR^{m_n\times m_{n-1}}$ to $\tilde{\bW}_n\in \ell_p(\bN\times\bN_s)$ for $n=1$ and $\tilde{\bW}_n\in \ell_p(\bN^2)$ for $n>1$, by the zero-padding.
Specifically, we set
\begin{equation}\label{INFaugmentW1}
\tilde{\bW}_1:=\left[\begin{array}{l}
     \bW_1  \\
     {\bf 0}_{\infty\times s}
\end{array}\right]
\end{equation}
where ${\bf 0}_{\infty\times s}\in \ell_p(\bN\times \bN_s)$ is the zero semi-infinite matrix and 
\begin{equation}\label{INFaugmentWi}
\tilde{\bW}_n:=\left[\begin{array}{ll}
     \bW_n & {\bf 0}_{m_n\times \infty}  \\
     {\bf 0}_{\infty\times m_{n-1}}  & {\bf 0}_{\infty\times\infty}
\end{array}\right], \ \ n> 1,
\end{equation}
where ${\bf 0}_{\infty\times m_{n-1}}\in \ell_p(\bN\times \bN_{m_{n-1}})$, ${\bf 0}_{{m_n}\times \infty}\in \ell_p(\bN_{m_{n}}\times \bN)$, and ${\bf 0}_{\infty\times\infty}\in \ell_p(\bN^2)$ are the zero semi-infinite matrices. We extend vectors $\bbb_n\in\bR^{m_n}$ to $\tilde{\bbb}_n\in \ell_p(\bN)$ by
\begin{equation}\label{INFAugmentb}
\tilde{\bbb}_n:=\left[\begin{array}{c}
     \bbb_n \\
     {\bf 0}_\infty
\end{array}\right],
\end{equation}
where ${\bf 0}_\infty\in \ell_p(\bN)$ is the zero vector.
We define the deep neural network $\tilde{\cN}_n(\bx)$ by
\begin{equation}\label{tilde-DNN}
    \tilde{\cN}_n(\bx):=\left(\bigodot_{i=1}^n \sigma(\tilde{\bW}_i \cdot+\tilde{\bbb}_i)\right)(\bx),\ \ \bx\in\bR^s.
\end{equation}
Clearly, 
we have the recursion
\begin{equation}\label{Unbounded-Step1}
    \tilde\cN_1(\bx):=\sigma(\tilde\bW_1 \bx+\tilde\bbb_1)
\end{equation}
and
\begin{equation}\label{Unbounded-Recursion2}
    \tilde{\cN}_{n+1}(\bx)=\sigma(\tilde{\bW}_{n+1}\tilde{\cN}_n(\bx)+\tilde{\bbb}_{n+1}), \ \ \bx\in \bR^s, \ \ \mbox{for all} \ \ n\in \bN.
\end{equation}

It is important to understand the relation between $\tilde{\cN}_n(\bx)$ and  $\cN_n(\bx)$ for each $n\in\bN$. In this regard, we have the following fact.

\begin{lemma}\label{Extension-to-Inf}
There holds the relation
\begin{equation}\label{Unbounded-tilde_N}
    \tilde{\cN}_n(\bx)=\left[\begin{array}{c}
         \cN_n(\bx) \\
         \sigma({\bf 0}_\infty)
    \end{array} \right].
\end{equation}
\end{lemma}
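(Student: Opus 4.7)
The plan is to establish \eqref{Unbounded-tilde_N} by induction on $n$, exploiting the block structure of the zero-padded matrices $\tilde{\bW}_n$ and the component-wise action of $\sigma$ as defined in \eqref{activationF} (extended naturally to sequences in $\ell_p(\bN)$, applying $\sigma$ entry-by-entry).

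For the base case $n=1$, I would apply \eqref{Unbounded-Step1} and the definitions \eqref{INFaugmentW1} and \eqref{INFAugmentb}. Block multiplication gives
\[
\tilde\bW_1\bx+\tilde\bbb_1=\left[\begin{array}{c}\bW_1\bx+\bbb_1\\ {\bf 0}_\infty\end{array}\right],
\]
and since $\sigma$ acts componentwise, it preserves this block structure, producing $[\sigma(\bW_1\bx+\bbb_1)^\top,\sigma({\bf 0}_\infty)^\top]^\top=[\cN_1(\bx)^\top,\sigma({\bf 0}_\infty)^\top]^\top$, which matches \eqref{Unbounded-tilde_N} with $n=1$.

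For the inductive step, assume \eqref{Unbounded-tilde_N} holds at level $n$. Using the block form \eqref{INFaugmentWi} of $\tilde\bW_{n+1}$ and the induction hypothesis, the key observation is that the lower-right block of $\tilde\bW_{n+1}$ is zero, so
\[
\tilde\bW_{n+1}\tilde\cN_n(\bx)=\left[\begin{array}{cc}\bW_{n+1} & {\bf 0}\\ {\bf 0} & {\bf 0}\end{array}\right]\left[\begin{array}{c}\cN_n(\bx)\\ \sigma({\bf 0}_\infty)\end{array}\right]=\left[\begin{array}{c}\bW_{n+1}\cN_n(\bx)\\ {\bf 0}_\infty\end{array}\right];
\]
note that the tail $\sigma({\bf 0}_\infty)$ is annihilated regardless of its actual value. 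Adding $\tilde\bbb_{n+1}$ defined by \eqref{INFAugmentb} and applying $\sigma$ componentwise then yields
\[
\tilde\cN_{n+1}(\bx)=\left[\begin{array}{c}\sigma(\bW_{n+1}\cN_n(\bx)+\bbb_{n+1})\\ \sigma({\bf 0}_\infty)\end{array}\right]=\left[\begin{array}{c}\cN_{n+1}(\bx)\\ \sigma({\bf 0}_\infty)\end{array}\right],
\]
where the last equality uses the recursion \eqref{Recursion}.

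There is no real obstacle in this argument; the proof is essentially bookkeeping. The only subtlety worth flagging is the verification that multiplying the tail block $\sigma({\bf 0}_\infty)\in\ell_p(\bN)$ by the zero semi-infinite operator is well-defined and yields ${\bf 0}_\infty$, which is automatic from the definition of $\tilde\bW_{n+1}$ as an operator in $\ell_p(\bN^2)$. This ensures that the induction closes and \eqref{Unbounded-tilde_N} holds for all $n\in\bN$.
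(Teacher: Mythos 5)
Your proof is correct and follows essentially the same route as the paper's: induction on $n$, block multiplication with the zero-padded matrices, and the componentwise action of $\sigma$. Your explicit remark that the upper-right and lower-right zero blocks of $\tilde\bW_{n+1}$ annihilate the tail $\sigma({\bf 0}_\infty)$ regardless of its value is a point the paper glosses over (it writes ${\bf 0}_\infty$ in place of $\sigma({\bf 0}_\infty)$ when substituting the induction hypothesis), so your version is, if anything, slightly more careful.
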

\begin{proof}
We prove equation \eqref{Unbounded-tilde_N} by induction on $n$. When $n=1$, by \eqref{INFaugmentW1} and
by the definition of $\cN_1(\bx)$ and $\tilde{\cN}_1(\bx)$, we observe for $\bx\in \bR^s$ that
$$
\tilde{\cN}_1(\bx)=\sigma\left(\left[\begin{array}{l}
    \bW_1\bx+  \bbb_1 \\
    {\bf 0}_\infty
\end{array}\right] \right)=\left[\begin{array}{l}
    \sigma( \bW_1\bx +\bbb_1) \\
    \sigma({\bf 0}_\infty)
\end{array} \right]=\left[\begin{array}{l}
         \cN_1(\bx) \\
         \sigma({\bf 0}_\infty)
    \end{array} \right].
$$
That is, equation \eqref{Unbounded-tilde_N} holds for $n=1$. 

We assume that equation  \eqref{Unbounded-tilde_N}  holds for $n=k$ and proceed for the case $n=k+1$. By recursion \eqref{Unbounded-Recursion2}, the induction hypothesis,  the definition \eqref{INFaugmentWi} of matrix $\tilde{\bW}_{k+1}$, the definition \eqref{INFAugmentb} of vector $\tilde{\bbb}_{k+1}$, and the recursion \eqref{Recursion}, we obtain that 
\begin{align*}
    \tilde{\cN}_{k+1}(\bx)&=\sigma\left(\tilde{\bW}_{k+1}\left[\begin{array}{c}
         \cN_k(\bx) \\
         {\bf 0}_\infty
    \end{array} \right]+\tilde{\bbb}_{k+1}\right)\\
    &=\sigma\left(\left[\begin{array}{l}
    \bW_{k+1}\cN_k(\bx) \\
    {\bf 0}_\infty
\end{array}\right]+\left[\begin{array}{l}
    \bbb_{k+1} \\
    {\bf 0}_\infty
\end{array}\right]\right)\\
&=\left[\begin{array}{l}
    \sigma(\bW_{k+1}\cN_k(\bx)+ \bbb_{k+1})  \\
    \sigma({\bf 0}_\infty)
\end{array}\right]\\
&=\left[\begin{array}{l}
    \cN_{k+1}(\bx)  \\
    \sigma({\bf 0}_\infty)
\end{array}\right].
\end{align*}
Thus, equation \eqref{Unbounded-tilde_N} holds for $n=k+1$. The induction principle ensures that equation \eqref{Unbounded-tilde_N}  holds for all $n\in\bN$.
\end{proof}

Lemma \ref{Extension-to-Inf} reveals that when $\sigma(0)=0$, which is satisfied by many activation functions such as ReLU, hyperbolic tangent, the Gaussian error linear unit, the exponential linear unit, the scaled exponential linear unit and the sigmoid linear unit, we have that 
\begin{equation}\label{Unbounded-tilde_N-zero}
    \tilde{\cN}_n(\bx)=\left[\begin{array}{l}
         \cN_n(\bx) \\
         {\bf 0}_\infty
    \end{array} \right],
\end{equation}
which is in $\ell_p(\bN)$ for all $1\leq p\leq+\infty$. However, when  $\sigma(0)\neq 0$, which includes sigmoid, softplus, and Gaussian, $\tilde{\cN}_n(\bx)\notin \ell_p(\bN)$ for any $p\in [1,+\infty)$, but  $\tilde{\cN}_n(\bx)\in \ell_\infty(\bN)$. 

We now define the notion of uniform convergence of  $\cN_n\in \bR^{m_n}$, $n\in \bN$.

\begin{definition}\label{NConv-Inf} Let $\DD\subset\bR^s$ be bounded.
We say that a sequence of vector-valued functions $\cN_n:\bR^s\to \bR^{m_n}$, $n\in \bN$, converges uniformly in $\DD$ to a function $\cN:\bR^s\to \ell_p(\bN)$, for $1\leq p\leq \infty$, if the sequence of the extended functions $\tilde{\cN}_n:\bR^s\to \ell_p(\bN)$, $n\in \bN$, converges uniformly to $\cN\in C_{\ell_p(\bN)}(\DD)$. 
\end{definition}

We need to define the convergence of  $\bW_n\in \bR^{m_n\times m_{n-1}}$, $n\in \bN$.

\begin{definition}\label{WConv-Inf}
We say that a sequence of matrices $\bW_n\in \bR^{m_n\times m_{n-1}}$, $n\in \bN$, converges to $\bW\in \ell_p(\bN^2)$ as $n\to\infty$ if  
$$
\lim_{n\to\infty}\|\tilde{\bW}_n-\bW\|_{\ell_p(\bN^2)}=0,
$$
where $\tilde\bW_n$ is the extension of $\bW_n$ by \eqref{INFaugmentW1} and \eqref{INFaugmentWi}.
\end{definition}

Likewise, we need to define the convergence of  $\bbb_n\in \bR^{m_n}$, $n\in \bN$.

\begin{definition}\label{bConv-Inf}
We say that a sequence of vectors $\bbb_n\in \bR^{m_n}$, $n\in \bN$, converges to $\bbb\in \ell_p(\bN)$ as $n\to\infty$ if  
$$
\lim_{n\to\infty}\|\tilde{\bbb}_n-\bbb\|_{\ell_p(\bN)}=0,
$$
where $\tilde{\bbb}_n$ is the extension of $\bbb_n$ by \eqref{INFAugmentb}.
\end{definition}



The next lemma concerns the uniform boundedness of the sequence of the deep neural networks $\tilde\cN_n(\bx)$, $n\in\bN$.

\begin{lemma}\label{Ext-Boundedness}
Suppose that $\sigma:\bR\to\bR$ is Lipschitz continuous with a Lipschitz constant $L$, the weight matrices $\bW_n\in\bR^{m_n\times m_{n-1}}$, for $n\in\bN_n$ with $m_0:=s$, and the bias vectors $\bbb_n\in\bR^{m_n}$ for $n\in\bN_n$. Suppose that $p\in [1,+\infty]$ if  $\sigma(0)=0$ and $p=+\infty$ if $\sigma(0)\neq 0$.
If there exists a constant $c>0$ such that $\|\bbb_n\|_p\leq c$, for all $n\in \bN$ and the sequence $\|\bW_n\|$, $n\in\bN$, satisfies the condition \eqref{BasicAssumption}
and $\DD\subset \bR^s$ is bounded, then there exists a positive constant $\rho$ such that  
$$
\|\tilde{\cN}_n(\bx)\|_{\ell_p(\bN)}\leq \rho,
\ \ \mbox{for all}\ \ n\in \bN, \ \bx\in \DD.
$$ 
\end{lemma}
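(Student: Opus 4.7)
The plan is to reduce the claim to a uniform bound on the finite-dimensional networks $\cN_n(\bx)$ by using Lemma \ref{Extension-to-Inf}, then to invoke Lemma \ref{BOUND} (the bound applies here since we are in the no-pooling case with $P=1$) together with Lemma \ref{Two_Bounds} to control the products and sums that appear.

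First, Lemma \ref{Extension-to-Inf} gives
$$
\tilde{\cN}_n(\bx)=\left[\begin{array}{c}\cN_n(\bx)\\ \sigma({\bf 0}_\infty)\end{array}\right],
$$
so by the triangle inequality (or by separating coordinates and using the defining norm),
$$
\|\tilde{\cN}_n(\bx)\|_{\ell_p(\bN)}\le \|\cN_n(\bx)\|_p+\|\sigma({\bf 0}_\infty)\|_{\ell_p(\bN)}.
$$
The dichotomy in the hypothesis ensures the last term is finite and independent of $n$: it vanishes when $\sigma(0)=0$, and equals $|\sigma(0)|$ when $p=+\infty$ and $\sigma(0)\neq 0$. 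Thus it suffices to bound $\|\cN_n(\bx)\|_p$ uniformly in $n\in\bN$ and $\bx\in\DD$.

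Second, I would apply Lemma \ref{BOUND} with the vector norm $\|\cdot\|_p$ to obtain
$$
\|\cN_n(\bx)\|_p\le\Big(\prod_{j=1}^{n}L\|\bW_j\|_p\Big)\|\bx\|_p+\sum_{j=1}^{n}\Big(\prod_{i=j+1}^{n}L\|\bW_i\|_p\Big)\big(L\|\bbb_j\|_p+\|\sigma({\bf 0}_{m_j})\|_p\big).
$$
The same compatibility condition between $p$ and $\sigma(0)$ ensures $\|\sigma({\bf 0}_{m_j})\|_p$ is bounded uniformly in $j$ by some constant $M$: this is the \emph{key} place where the restriction on $p$ is used, because $\|\sigma({\bf 0}_{m_j})\|_p=m_j^{1/p}|\sigma(0)|$ would blow up if $\sigma(0)\neq 0$ and $p<+\infty$, given that the widths $m_j$ are unbounded. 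The hypothesis $\|\bbb_j\|_p\le c$ and boundedness of $\DD$ (so $\|\bx\|_p\le D$ for some $D>0$) then reduce the task to bounding the product $\prod_{j=1}^{n}L\|\bW_j\|_p$ and the telescoping sum $\sum_{j=1}^{n}\prod_{i=j+1}^{n}L\|\bW_i\|_p$.

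Finally, condition \eqref{BasicAssumption} says exactly that $\alpha_j:=L\|\bW_j\|_p$ satisfies $\lim_{j\to\infty}\alpha_j<1$, so Lemma \ref{Two_Bounds} produces a constant $c_1>0$ with $\prod_{j=1}^{n}\alpha_j\le c_1$ and $\sum_{j=1}^{n}\prod_{i=j+1}^{n}\alpha_i\le c_1$ for every $n\in\bN$. Combining these with the steps above yields
$$
\|\cN_n(\bx)\|_p\le c_1 D+c_1(Lc+M),
$$
and adding the finite constant $\|\sigma({\bf 0}_\infty)\|_{\ell_p(\bN)}$ gives the desired uniform constant $\rho$. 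The only non-mechanical step is the one already highlighted, namely ruling out growth of $\|\sigma({\bf 0}_{m_j})\|_p$ with $m_j$, which is precisely engineered by the restriction $p=+\infty$ when $\sigma(0)\neq 0$; everything else is a direct assembly of Lemmas \ref{Extension-to-Inf}, \ref{BOUND}, and \ref{Two_Bounds}.
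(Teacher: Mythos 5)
Your proposal is correct and follows essentially the same route as the paper: reduce to a uniform bound on the finite-dimensional $\cN_n(\bx)$ via Lemma \ref{Extension-to-Inf}, then bound $\|\cN_n(\bx)\|_p$ by the chain Lemma \ref{BOUND} plus Lemma \ref{Two_Bounds} (the paper packages these as Lemma \ref{Boundedness}), with the dichotomy on $p$ used exactly where you flag it, to keep $\|\sigma({\bf 0}_{m_j})\|_p$ from growing with the unbounded widths. The only cosmetic difference is that the paper uses the exact identities $\|\tilde\cN_n(\bx)\|_{\ell_p(\bN)}=\|\cN_n(\bx)\|_p$ (when $\sigma(0)=0$) and $\max\{\|\cN_n(\bx)\|_\infty,|\sigma(0)|\}$ (when $p=\infty$) instead of your sub-additive estimate, which changes nothing.
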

\begin{proof} 
When $\sigma(0)=0$, by equation \eqref{Unbounded-tilde_N} of Lemma \ref{Extension-to-Inf}, we obtain that
\begin{equation}\label{EqualNorm}
    \|\tilde{\cN}_n(\bx)\|_{\ell_p(\bN)}=\|\cN_n(\bx)\|_p,
\end{equation}
for all $p\in [1, +\infty]$. In this case, estimate \eqref{BundofNn} of Lemma \ref{Boundedness} reduces to  
$$
\sup_{\bx\in\DD}\|\cN_n(\bx)\|_p\leq c_1D+Lcc_1,\ \ \mbox{for all}\ \ n\in\bN,
$$ 
where $D>0$ is an upper bound of $\DD$.
This estimate combined with \eqref{EqualNorm} ensures that
$\|\tilde{\cN}_n(\bx)\|_{\ell_p(\bN)}$ are uniformly bounded.

When  $\sigma(0)\neq 0$, by noticing that $\|\sigma({\bf 0}_\infty)\|_{\ell_\infty(\bN)}=|\sigma(0)|$ and again by equation \eqref{Unbounded-tilde_N}, we find that
\begin{equation}\label{NormBound}
    \|\tilde{\cN}_n(\bx)\|_{\ell_\infty(\bN)}
=\max\{\|\cN_n(\bx)\|_\infty, |\sigma(0)|\}.
\end{equation}
Estimate \eqref{BundofNn} ensures that 
$$
\sup_{\bx\in\DD}\|\cN_n(\bx)\|_\infty\leq c_1D+Lcc_2+c_1|\sigma(0)|, \ \ \mbox{for all}\ \ n\in\bN.
$$

Substituting this bound into the right-hand-side of equation \eqref{NormBound} leads to the boundedness of 
$\sup_{\bx\in\DD} \|\tilde{\cN}_n(\bx)\|_{\ell_\infty(\bN)}$.
\end{proof}

We need to understand the relation between the norm of a matrix and that of its extension by the zero-padding. To this end, 
for $\mu,\nu\in\bN$, we suppose that $\bW\in \bR^{\mu\times\nu}$and we define the extension by
\begin{equation}\label{augmentMatrixW-Inf}
\tilde{\bW}:=\left[\begin{array}{ll}
     \bW & {\bf 0}_{\mu\times \infty}  \\
     {\bf 0}_{\infty\times \nu}  & {\bf 0}_{\infty\times \infty}
\end{array}\right].
\end{equation}
In the next lemma, we show that the extension \eqref{augmentMatrixW-Inf} preserves matrix norms.

\begin{lemma}\label{MatrixNormEqual-Inf} 
Let $p\in [1, +\infty]$. If $\tilde{\bW}\in \ell_p(\bN^2)$ is the extension of matrix $\bW\in \bR^{\mu\times \nu}$ defined by equation \eqref{augmentMatrixW-Inf},
then
\begin{equation}\label{matrixNormE-Inf}
    \|\tilde{\bW}\|_{\ell_p(\bN^2)}=\|\bW\|_p.
\end{equation}
\end{lemma}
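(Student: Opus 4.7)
The plan is to mirror the strategy used in Lemma \ref{MatrixNormEqual}, adapting it to the sequence-space setting. The key observation is that the $\ell_p(\bN)$ norms automatically enjoy the extension invariance and monotonicity properties that were crucial in the finite-dimensional argument; indeed, appending zeros to a vector in $\ell_p(\bN)$ leaves its $\ell_p$ norm unchanged, and the $\ell_p$ norm is monotone in the absolute values of the coordinates. So the general machinery carries over verbatim once we know how $\tilde{\bW}$ acts.

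First I would compute the action of $\tilde{\bW}$ on an arbitrary element of $\ell_p(\bN)$. Writing any $\tilde{\bx} \in \ell_p(\bN)$ as
\[
\tilde{\bx}=\left[\begin{array}{c} \bx \\ \bx' \end{array}\right],\quad \bx\in\bR^\nu,\ \bx'\in\ell_p(\bN),
\]
the block structure \eqref{augmentMatrixW-Inf} gives $\tilde{\bW}\tilde{\bx}=[\bW\bx;\ {\bf 0}_\infty]$. By the extension invariance of $\|\cdot\|_{\ell_p(\bN)}$, we then have $\|\tilde{\bW}\tilde{\bx}\|_{\ell_p(\bN)}=\|\bW\bx\|_p$.

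Second, I would establish the two inequalities. For $\|\tilde{\bW}\|_{\ell_p(\bN^2)}\ge \|\bW\|_p$, restrict the supremum defining the operator norm to inputs of the form $\tilde{\bx}=[\bx;\ {\bf 0}_\infty]$, for which $\|\tilde{\bx}\|_{\ell_p(\bN)}=\|\bx\|_p$; this yields precisely $\|\bW\|_p$ as a lower bound. For the reverse inequality $\|\tilde{\bW}\|_{\ell_p(\bN^2)}\le \|\bW\|_p$, use the coordinate-wise monotonicity of the $\ell_p$ norm to conclude that $\|\tilde{\bx}\|_{\ell_p(\bN)}\ge \|\bx\|_p$ for every decomposition, and then bound
\[
\frac{\|\tilde{\bW}\tilde{\bx}\|_{\ell_p(\bN)}}{\|\tilde{\bx}\|_{\ell_p(\bN)}}=\frac{\|\bW\bx\|_p}{\|\tilde{\bx}\|_{\ell_p(\bN)}}\le \frac{\|\bW\bx\|_p}{\|\bx\|_p}\le\|\bW\|_p,
\]
with the convention that ratios with zero denominator are excluded (if $\bx=0$ then $\bW\bx=0$ and the ratio is zero). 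Taking the supremum on the left proves equation \eqref{matrixNormE-Inf}.

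I do not anticipate a genuine obstacle, since the $p=+\infty$ case and the $1\le p<+\infty$ cases are handled uniformly by the same two properties (extension invariance and monotonicity). The only minor subtlety to be careful about is verifying that $\tilde{\bW}$ is a bounded operator in the first place—which follows a posteriori from the inequality $\|\tilde{\bW}\tilde{\bx}\|_{\ell_p(\bN)}\le \|\bW\|_p\|\tilde{\bx}\|_{\ell_p(\bN)}$ proven above and the finiteness of $\|\bW\|_p$—and that the supremum is taken over the correct set of $\tilde{\bx}\in\ell_p(\bN)$ with $\tilde{\bx}\ne 0$.
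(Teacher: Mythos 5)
Your proposal is correct and follows essentially the same route as the paper's proof: compute $\tilde{\bW}\tilde{\bx}=[\bW\bx;\,{\bf 0}_\infty]$, deduce $\|\tilde{\bW}\tilde{\bx}\|_{\ell_p(\bN)}=\|\bW\bx\|_p$, obtain the lower bound by restricting to $\tilde{\bx}=[\bx;\,{\bf 0}_\infty]$, and the upper bound from $\|\tilde{\bx}\|_{\ell_p(\bN)}\ge\|\bx\|_p$. Your explicit naming of the extension-invariance and monotonicity properties (and the remark on boundedness of $\tilde{\bW}$) only makes explicit what the paper uses implicitly for the $\ell_p$ norms.
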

\begin{proof}
By  definition \eqref{augmentMatrixW-Inf}  of the extension $\tilde{\bW}$, we have that
$$
\tilde{\bW}\tilde{\bx}=\left[\begin{array}{c}
    \bW\bx
          \\
        {\bf 0}_\infty
    \end{array}\right], \ \ \mbox{for all}\ \ \tilde{\bx}:=\left[\begin{array}{c}
    \bx
          \\
         \bx'
    \end{array}\right]\in \ell_p(\bN)\ \ \mbox{with}\ \ \bx\in\bR^\nu, \ \bx'\in\ell_p(\bN).
$$
We obtain that 
\begin{equation}\label{equ-matrixNorm-Inf}
    \|\tilde{\bW}\tilde{\bx}\|_{\ell_p(\bN)}=\|\bW\bx\|_p, \ \ \mbox{for all}\ \ \tilde{\bx}:=\left[\begin{array}{c}
    \bx
          \\
         \bx'
    \end{array}\right]\in \ell_p(\bN)\ \ \mbox{with}\ \ \bx\in\bR^\nu, \ \bx'\in\ell_p(\bN).
\end{equation}
The definition of the norm $\|\tilde\bW\|_{\ell_p(\bN^2)}$ together with \eqref{equ-matrixNorm-Inf} ensures that
\begin{align}\label{matrixNorm-Inf}
    \|\tilde{\bW}\|_{\ell_p(\bN^2)}
&=\sup\left\{\frac{\|\bW\bx\|_p}{\|\tilde{\bx}\|_{\ell_p(\bN)}}:\ \ \mbox{for all}\ \ \tilde{\bx}:=\left[\begin{array}{c}
    \bx
          \\
         \bx'
    \end{array}\right]\in \ell_p(\bN)\ \ \mbox{with}\ \ \bx\in\bR^\nu, \ \bx'\in\ell_p(\bN)\right\}.
\end{align}
In the right-hand-side of equation \eqref{matrixNorm-Inf}, we restrict $\bx'={\bf 0}$ and note that $\|\tilde{\bx}\|_{\ell_p(\bN)}=\|\bx\|_p$ with this restriction. Thus, we find that
\begin{equation}\label{inequ-matrixNorm-Inf}
     \|\tilde{\bW}\|_{\ell_p(\bN^2)}
\geq \sup\left\{\frac{\|\bW\bx\|_p}{\|\bx\|_p}:\bx\in \bR^\nu\right\}=\|\bW\|_p.
\end{equation}
On the other hand, we observe for all 
$ \tilde{\bx}:=\left[\begin{array}{l}
    \bx
          \\
         \bx'
    \end{array}\right]\in \ell_p(\bN)$ with $\bx\in\bR^\nu, \ \bx'\in \ell_p(\bN)$ that 
$$
\|\tilde{\bx}\|_{\ell_p(\bN)}\geq \left\|\left[\begin{array}{l}
    \bx
          \\
         {\bf 0}_\infty
    \end{array}\right]\right\|_{\ell_p(\bN)} =\|\bx\|_p.
$$
This inequality together with \eqref{matrixNorm-Inf} ensures that
\begin{equation}\label{inequ-matrixNorm2-Inf}
    \|\tilde{\bW}\|_{\ell_p(\bN^2)}
\leq \sup\left\{\frac{\|\bW\bx\|_p}{\|\bx\|_p}: \bx\in \bR^\nu\right\}=\|\bW\|_p.
\end{equation}
Combining inequalities \eqref{inequ-matrixNorm-Inf} and  \eqref{inequ-matrixNorm2-Inf} yields
equation \eqref{matrixNormE-Inf}.
\end{proof}

With the help of Lemmas \ref{Ext-Boundedness} and \ref{MatrixNormEqual-Inf}, we have the following uniform convergence result for neural networks with weight matrices of unbounded widths.

\begin{theorem}\label{UniformConvInf}
Suppose that $\sigma:\bR\to\bR$ is Lipschitz continuous with the Lipschitz constant $L$,  and $p\in [1,+\infty]$ if  $\sigma(0)=0$ and $p=+\infty$ if $\sigma(0)\neq 0$. If the sequences $\bbb_n\in \bR^{m_n}$, $n\in\bN$, and $\bW_n\in\bR^{m_n\times m_{n-1}}$, $n\in\bN$, converge,
the sequence $\|\bW_n\|$, $n\in\bN$, satisfies the condition \eqref{BasicAssumption},
and $\DD\subset \bR^s$ is bounded, 
then the sequence of neural networks $\cN_n$ converges uniformly in  $\DD$ to a function in $C_{\ell_p(\bN)}(\DD)$.

In addition, if $\bbb_n$ and $\bW_n$ converge to $\bbb^*$ and $\bW^*$ exponentially, then  the sequence of neural networks $\cN_n$ converges to a function $\cN\in C_{\ell_p(\bN)}(\DD)$ exponentially and uniformly in $\DD$.
\end{theorem}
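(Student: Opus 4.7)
The plan is to reduce the theorem to the general framework of Theorem \ref{Fixed-widths-hat} and Theorem \ref{Order-Fixed-widths-hat} by working with the zero-padded extensions $\tilde{\cN}_n$, $\tilde{\bW}_n$, $\tilde{\bbb}_n$ introduced in \eqref{INFaugmentW1}--\eqref{INFAugmentb}. According to Definition \ref{NConv-Inf}, uniform convergence of $\cN_n$ in the sense claimed is by definition the uniform convergence of $\tilde{\cN}_n$ in $C_{\ell_p(\bN)}(\DD)$. Since the extended networks satisfy the recursions \eqref{Unbounded-Step1} and \eqref{Unbounded-Recursion2}, which coincide with \eqref{Step1-hat} and \eqref{Recursion-hat} in the no-pooling case (that is, $\cP_\mu$ is the identity on $\ell_p(\bN)$ with Lipschitz constant $P=1$ and $\mu=0$), the abstract framework of Section 3 applies once its four hypotheses are verified.

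First, I would verify the convergence of the extended bias and weight sequences. By Definitions \ref{WConv-Inf} and \ref{bConv-Inf}, the hypothesis that $\bW_n$ and $\bbb_n$ converge is exactly the statement that $\tilde{\bW}_n \to \bW^*$ in $\ell_p(\bN^2)$ and $\tilde{\bbb}_n \to \bbb^*$ in $\ell_p(\bN)$. Next, by Lemma \ref{MatrixNormEqual-Inf}, we have $\|\tilde{\bW}_n\|_{\ell_p(\bN^2)} = \|\bW_n\|_p$ for every $n$, so the hypothesis \eqref{BasicAssumption} translates directly into the framework condition \eqref{BasicAssumption-hat} with $P=1$, namely $\lim_{n\to\infty} L\|\tilde{\bW}_n\|_{\ell_p(\bN^2)} < 1$.

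Third, I would establish the uniform boundedness \eqref{UniformBoundednessofDNN} required by Theorem \ref{Fixed-widths-hat}. The convergence of $\tilde{\bbb}_n$ in $\ell_p(\bN)$ forces $\|\bbb_n\|_p = \|\tilde{\bbb}_n\|_{\ell_p(\bN)}$ to be bounded, so Lemma \ref{Ext-Boundedness} supplies a constant $\rho > 0$ with $\sup_{\bx\in\DD}\|\tilde{\cN}_n(\bx)\|_{\ell_p(\bN)} \le \rho$ for all $n$, provided the dichotomy on $p$ is respected: $p \in [1,+\infty]$ when $\sigma(0)=0$ and $p=+\infty$ when $\sigma(0)\neq 0$. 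This dichotomy is precisely where the asymptotic tail $\sigma({\bf 0}_\infty)$ appearing in Lemma \ref{Extension-to-Inf} either vanishes or requires the $\ell_\infty$ norm to absorb it. With all four hypotheses of Theorem \ref{Fixed-widths-hat} now in place, that theorem yields uniform convergence of $\tilde{\cN}_n$ to some $\cN \in C_{\ell_p(\bN)}(\DD)$, which is the first conclusion.

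For the second conclusion, if $\bbb_n \to \bbb^*$ and $\bW_n \to \bW^*$ exponentially, then so do $\tilde{\bbb}_n$ and $\tilde{\bW}_n$ in $\ell_p(\bN)$ and $\ell_p(\bN^2)$ respectively, by the norm-preservation identities above. Applying Theorem \ref{Order-Fixed-widths-hat} in the no-pooling case with the same bound $\rho$ delivers exponential uniform convergence. The main delicate point throughout is the role of $\sigma(0)$: since the zero-padded tail of $\tilde{\cN}_n$ is $\sigma({\bf 0}_\infty)$ by Lemma \ref{Extension-to-Inf}, any $p < +\infty$ is incompatible with $\sigma(0)\neq 0$, which is exactly why the hypothesis on $p$ is stated as it is, and verifying that the extension of $\cN_n$ indeed lies in $\ell_p(\bN)$ in both sub-cases is the one step requiring care rather than mere bookkeeping.
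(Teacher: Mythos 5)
Your proposal is correct and follows essentially the same route as the paper: reduce to the extended networks $\tilde{\cN}_n$ via Definition \ref{NConv-Inf}, obtain uniform boundedness from Lemma \ref{Ext-Boundedness}, transfer condition \eqref{BasicAssumption} through the norm identity of Lemma \ref{MatrixNormEqual-Inf}, and invoke Theorems \ref{Fixed-widths-hat} and \ref{Order-Fixed-widths-hat} with $\cP_\mu$ the identity. Your additional remarks on the $\sigma(0)$ dichotomy and the explicit verification of each framework hypothesis are consistent with, and slightly more detailed than, the paper's argument.
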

\begin{proof}
By Definition \ref{NConv-Inf}, it suffices to prove that the sequence $\tilde{\cN}_n(\bx)$, $n\in \bN$, converges uniformly in $C_{\ell_p(\bN)}(\DD)$.  When $p$ satisfies the condition of this theorem according to $\sigma(0)$, by Lemma \ref{Ext-Boundedness}, $\sup_{\bx\in\DD}\|\tilde\cN_n(\bx)\|_{\ell_p(\bN)}$ is bounded for all $n\in\bN$. Because the sequence $\|\bW_n\|$, $n\in\bN$, satisfies the condition \eqref{BasicAssumption},
according to Lemma \ref{MatrixNormEqual-Inf},  we have that
$$
\lim_{n\to\infty}L\|\tilde\bW_n\|_{\ell_p(\bN)}<1.
$$
Thus, Theorems \ref{Fixed-widths-hat} with $\cP_\mu$ being the identity operator ensures that the sequence 
$\tilde\cN_n(\bx)$, $n\in\bN$, converges uniformly in  $C_{\ell_p(\bN)}(\DD)$.

The second part of this theorem follows from Theorem  \ref{Order-Fixed-widths-hat} with $\cP_\mu$ being the identity operator.
%
\end{proof}

Theorem \ref{UniformConvInf} may be extended to neural networks with Lipschitz continuous poolings and unbounded matrix widths.



\section{Uniform Convergence of Convolutional Neural Networks}

In this section, we establish uniform convergence results of  convolutional neural networks (CNNs). Pointwise convergence of CNNs with the ReLU activation function was investigated in \cite{XuZhang2022} by considering CNNs as deep neural networks with weight matrices of increasing widths.
We study uniform convergence of CNNs with Lipschitz continuous activation functions by considering two types of matrix extensions.
We first consider the matrix extension by the zero-padding which is described in the last section, and apply Theorem \ref{UniformConvInf} to CNNs with Lipschitz continuous activation functions to obtain a uniform convergence result of CNNs. We then consider the extension of the weight matrices to semi-infinite matrices by the constant-padding along the diagonals, which is natural for CNNs, and present a uniform convergence theorem for CNNs, with a weaker hypothesis on the filter mask.

We now recall the construction of CNNs.
Given a vector $\bx:=[x_1, x_2,\dots,x_l]^\top\in\bR^l$ and a filter mask $\bw:=[w_0,w_1,\dots,w_\tau]^\top$, the convolution $\bx*\bw$ of $\bx$ with $\bw$ is a vector in $\bR^{l+\tau}$ defined by
$$
(\bx*\bw)_i:=\sum_{j=\max(0,i-l)}^{\min(i-1,\tau)}\bw_j\bx_{i-j},\ \  i\in \bN_{l+\tau}.
$$
For $n\in \bN$, by $\tau_n$ we denote a sequence of positive integers.
For each $n\in \bN$, given a filter mask $\bw^{(n)}:=[w^{(n)}_0, w^{(n)}_1, \dots, w^{(n)}_{\tau_n}]^\top\in\bR^{\tau_n+1}$ and a bias vector $\bbb_n\in \bR^{m_n}$, we construct a CNN by
\begin{equation}\label{CNN-Convolution}
    \cN_n(\bx):=\sigma(\cN_{n-1}(\bx)*\bw^{(n)}+\bbb_n),
\end{equation}
with $\cN_0(\bx):=\bx$. In \eqref{CNN-Convolution}, $\sigma: \bR\to \bR$ is a Lipschitz continuous activation function with the Lipschitz constant $L$.

One can express the convolution $\bx*\bw$ via multiplication of $\bx$ with an $(l+\tau)\times l$ Toeplitz matrix 
$$
T_{ij}:=\left\{
\begin{aligned}
w_{i-j},&\quad 1\leq j\leq i\leq j+\tau\leq l+\tau,\\
0,&\quad \mbox{otherwise},
\end{aligned}
\right.
$$
in the form
$$
\bx*\bw=T\bx.
$$
Clearly, matrix $T$ has a specific form
$$
T=\left[
\begin{array}{ccccccc}
     w_0&0&0&0&\cdots &\cdots &0 \\
     w_1&w_0&0&0&\cdots&\cdots &0\\
     \vdots&\ddots&\ddots&\ddots&\ddots&\ddots&\vdots\\
     w_\tau&w_{\tau-1}&\cdots&w_0&0&\cdots&0\\
     0&w_\tau&\cdots&w_1&w_0&0\cdots&0\\
     \vdots&\ddots&\ddots&\ddots&\ddots&\ddots&\vdots\\
     \cdots&\cdots&0&w_\tau&w_{\tau-1}&\cdots&w_0\\
     \cdots&\cdots&\cdots&0&w_\tau\cdots&\cdots&w_1\\
     \vdots&\ddots&\ddots&\ddots&\ddots&\ddots&\vdots\\
     0&\cdots&\cdots&\cdots0& \cdots&w_\tau&w_{\tau-1}\\
     0&\cdots&\cdots&\cdots&\cdots&\cdots0&w_\tau
\end{array}
\right].
$$
That is, we map the filter mask $\bw$ to the Toeplitz matrix $T$ whose diagonal and the first to the $\tau$th sub-diagonals are the components of $\bw$.
The CNNs $\cN_n$ may be written in a form of the DNN with increasing widths:
\begin{equation}\label{cnnwidth}
    m_0:=s, \ \ m_n:=m_{n-1}+\tau_n,\ \ n\in\bN.
\end{equation}
Introducing weight matrices $\bW_n\in\bR^{m_n\times m_{n-1}}$ defined by
\begin{equation}\label{cnnweightmatrix}
(\bW_n)_{ij}:=\left\{
\begin{aligned}
w^{(n)}_{i-j},&\quad 1\leq j\leq i\leq j+\tau_n\leq m_{n-1}+\tau_n,\\
0,&\quad \mbox{otherwise},
\end{aligned}
\right.
\end{equation}
we have for $n\in \bN$ that
\begin{equation}\label{CNN-Convolution2}
    \cN_n(\bx)=\sigma(\bW_n\cN_{n-1}(\bx)+\bbb_n), \ \ \bx\in \bR^s.
\end{equation}
Clearly, from \eqref{cnnwidth}, the CNNs $\cN_n$ are deep neural networks with unbounded widths.

We first consider uniform convergence of CNNs $\cN_n$ in the sense of Definition \ref{NConv-Inf}.
To this end, we extend $\bW_1$, $\bW_n$, $n\geq2$, and $\bbb_n$, $n\in\bN$, to $\tilde{\bW}_1\in\ell_p(\bN\times \bN_s)$, $\tilde{\bW}_n\in\ell_p(\bN\times\bN)$, $n\geq2$,  and $\tilde{\bbb}_n\in\ell_p(\bN)$, $n\in\bN$, respectively,
by \eqref{INFaugmentW1}, \eqref{INFaugmentWi} and \eqref{INFAugmentb}. We then define $\tilde{\cN}_n(\bx)$ by
\eqref{Unbounded-Recursion2} and note that $\tilde{\cN}_n(\bx)$ is an extension of $\cN_n(\bx)$, that results from the zero-padding. We confine ourselves to filter masks of a fixed length.  Uniform convergence of CNNs in this sense follows directly from Theorem \ref{UniformConvInf}.

We need the following preliminary result.

\begin{lemma}\label{Condition-on-Mask} Suppose that $\tau$ is a positive integer and $\tau_n=\tau$, for all $n\in \bN$.
If $w_i^{(n)}$  satisfies the condition
\begin{equation}\label{Mask-condition}
    w_i^{(n)}= o\left(1\right), \ \  \mbox{for}\ \ i=0,1,\dots,\tau,
\end{equation}
then there hold

(i) $\tilde{\bW}_n$, $n\in\bN$, is a Cauchy sequence;

(ii) $\lim_{n\to+\infty}L\|\bW_n\|_p<1$ for all $p\in [1, +\infty]$, where $L$ is the Lipschitz constant of $\sigma$.
\end{lemma}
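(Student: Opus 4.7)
The plan is to reduce both conclusions to a single bound of the form $\|\bW_n\|_p\le \sum_{i=0}^{\tau}|w_i^{(n)}|$ valid uniformly in $p\in[1,+\infty]$, after which the hypothesis $w_i^{(n)}=o(1)$ forces $\|\bW_n\|_p\to 0$ and everything falls out.

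First I would exploit the Toeplitz structure \eqref{cnnweightmatrix}. Since $\tau_n=\tau$ is fixed, every column of $\bW_n$ contains exactly the $\tau+1$ entries $w_0^{(n)},w_1^{(n)},\dots,w_\tau^{(n)}$ (possibly truncated at the boundary rows) and every row contains a subset of the same $\tau+1$ numbers. Letting $S_n:=\sum_{i=0}^{\tau}|w_i^{(n)}|$, the standard identifications of the induced $\ell_1$- and $\ell_\infty$-norms with, respectively, the maximum absolute column sum and maximum absolute row sum give
\begin{equation*}
\|\bW_n\|_1\le S_n,\qquad \|\bW_n\|_\infty\le S_n.
\end{equation*}

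Next, I would interpolate between the two endpoints. Either by the Riesz--Thorin theorem or, more directly, by the Hölder inequality applied to $(\bW_n\bx)_i=\sum_j (\bW_n)_{ij}x_j$,
\begin{equation*}
|(\bW_n\bx)_i|^p\le \Bigl(\sum_j|(\bW_n)_{ij}|\Bigr)^{p-1}\sum_j|(\bW_n)_{ij}|\,|x_j|^p,
\end{equation*}
one obtains $\|\bW_n\|_p\le \|\bW_n\|_1^{1/p}\|\bW_n\|_\infty^{1-1/p}\le S_n$ for every $p\in[1,+\infty]$. Because $\tau$ does not depend on $n$ and each $w_i^{(n)}$ tends to zero, $S_n\to 0$, hence $\|\bW_n\|_p\to 0$ as $n\to\infty$, uniformly in $p\in[1,+\infty]$.

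Item (ii) is then immediate, since $L\|\bW_n\|_p\to 0<1$. For item (i), I would invoke Lemma \ref{MatrixNormEqual-Inf} to pass from the finite-dimensional operator norm to the semi-infinite one, obtaining $\|\tilde{\bW}_n\|_{\ell_p(\bN^2)}=\|\bW_n\|_p\to 0$. Thus $\tilde{\bW}_n$ converges to the zero element of $\ell_p(\bN^2)$ and is a fortiori Cauchy. The only non-bookkeeping step is the interpolation bound that makes the conclusion uniform in $p$; the rest is a direct bookkeeping of the Toeplitz pattern and of the definitions from Section 6.
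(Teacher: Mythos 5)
Your proof is correct, and for item (ii) it is essentially the paper's argument: bound the induced $\ell_1$- and $\ell_\infty$-norms by the absolute mask sum $S_n:=\sum_{i=0}^{\tau}|w_i^{(n)}|$ using the Toeplitz structure, interpolate to get $\|\bW_n\|_p\le S_n$ for all $p\in[1,+\infty]$ (the paper cites Riesz--Thorin; your Schur/H\"older route gives the same bound), and let $S_n\to0$ do the rest. For item (i) you take a genuinely shorter path. The paper estimates the difference $\|\tilde{\bW}_{n+m}-\tilde{\bW}_n\|_{\ell_p(\bN^2)}$ directly; because $\bW_{n+m}$ and $\bW_n$ have different sizes, the zero-padded difference is not a single Toeplitz matrix in $w^{(n+m)}-w^{(n)}$, and the paper's bound picks up the extra term $\sum_{k=0}^{\tau}|w_k^{(n+m)}|$ before applying Mitjagin's interpolation theorem to the difference. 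You avoid this bookkeeping entirely by observing that $\|\tilde{\bW}_n\|_{\ell_p(\bN^2)}=\|\bW_n\|_p\le S_n\to0$ (via Lemma \ref{MatrixNormEqual-Inf}), so the sequence converges to the zero operator and is a fortiori Cauchy. Both arguments ultimately rest on the same fact --- the full masks, not merely their increments, tend to zero --- and your identification of the limit as $0$ is consistent with how the lemma is consumed in Theorem \ref{UniformConvInfCNN}. What your route buys is the elimination of one interpolation step and of the two-term difference estimate; what the paper's route buys is a template that would survive the weaker hypothesis of merely convergent (not vanishing) masks, which is exactly the relaxation carried out later with the constant-padded extension in Lemma \ref{Assumption-on-Masks}.
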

\begin{proof}
Suppose that $n$ and $m$ are arbitrary positive integers with $n>1$. A direct computation leads to 
\begin{equation}\label{WCauchy1}
    \|\tilde{\bW}_{n+m}-\tilde{\bW}_n\|_{\ell_1(\bN)}\leq \sum_{k=0}^\tau|w_k^{(n+m)}-w_k^{(n)}|+\sum_{k=0}^\tau|w_k^{(n+m)}|
\end{equation}
and
\begin{equation}\label{WCauchyInf}
\|\tilde{\bW}_{n+m}-\tilde{\bW}_n\|_{\ell_\infty(\bN)}\leq \sum_{k=0}^\tau|w_k^{(n+m)}-w_k^{(n)}|+\sum_{k=0}^\tau|w_k^{(n+m)}|.
\end{equation}
By the interpolation theorem of Mitjagin \cite{Mit}, it follows from inequalities \eqref{WCauchy1} and \eqref{WCauchyInf} for all $p\in [1,\infty]$ that
\begin{equation}\label{WCauchyInf_p}
\|\tilde{\bW}_{n+m}-\tilde{\bW}_n\|_{\ell_p(\bN)}\leq \sum_{k=0}^\tau|w_k^{(n+m)}-w_k^{(n)}|+\sum_{k=0}^\tau|w_k^{(n+m)}|.
\end{equation}
Let $\epsilon>0$ be arbitrarily small. According to hypothesis \eqref{Mask-condition}, there exists $N\in \bN$ such that for all $n, m>N$, the right-hand-side of \eqref{WCauchyInf_p} is bounded by $\epsilon$, which ensures that the sequence $\tilde\bW_n$, $n\in \bN$, is Cauchy in spaces $\ell_p(\bN)$, for $p\in[1, +\infty]$.

It remains to prove (ii). According to the Riesz-Thorin interpolation theorem, we observe for all $p\in [1,+\infty]$ that 
$$
\|\bW_n\|_p\leq \sum_{k=0}^\tau|w_k^{(n)}|.
$$
Invoking hypothesis \eqref{Mask-condition} in the right-hand-side of the above estimate, we conclude for all $p\in [1,+\infty]$ that for $0<\epsilon_0<1$, there exists $N\in\bN$ with $L\leq N$ such that for all $n>N$
$$
L\|\bW_n\|_p\leq L\sum_{k=0}^\tau|w_k^{(n)}|<\epsilon_0<1,
$$
%
which implies (ii).
\end{proof}

Lemma \ref{Condition-on-Mask} enables us to derive the next theorem.

\begin{theorem}\label{UniformConvInfCNN}
Suppose that $\sigma:\bR\to\bR$ is Lipschitz continuous with a Lipschitz constant $L$,  $\tau_n=\tau$, for all $n\in \bN$, with $\tau$ being a positive integer, and $\DD$ is a bounded set in $\bR^s$. If the sequences $\bbb_n\in \bR^{m_n}$, $n\in\bN$, converges and $w_i^{(n)}$  satisfies the condition
\eqref{Mask-condition}, then the CNNs $\cN_n$ converge uniformly on  $\DD$ to a function in $C_{\ell_p(\bN)}(\DD)$, for $p\in [1,+\infty]$ if  $\sigma(0)=0$ and $p=+\infty$ if $\sigma(0)\neq 0$.
\end{theorem}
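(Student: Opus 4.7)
The plan is to recognize the CNNs $\cN_n$ as deep neural networks with weight matrices of unbounded widths via the Toeplitz representation \eqref{CNN-Convolution2} and \eqref{cnnweightmatrix}, and then apply Theorem \ref{UniformConvInf} directly. The three hypotheses to verify for that theorem are: convergence of $\bbb_n$ in the sense of Definition \ref{bConv-Inf}; convergence of $\bW_n$ in the sense of Definition \ref{WConv-Inf}; and the spectral-type bound $\lim_{n\to\infty} L\|\bW_n\|_p < 1$, that is, condition \eqref{BasicAssumption}. The activation's Lipschitz continuity and boundedness of $\DD$ are given by hypothesis, and the split on $p$ according to whether $\sigma(0)=0$ matches the dichotomy in Theorem \ref{UniformConvInf}.

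The convergence of $\bbb_n$ is assumed. For the weight matrices, I would invoke Lemma \ref{Condition-on-Mask}(i), which under hypothesis \eqref{Mask-condition} shows that $\tilde{\bW}_n$ is Cauchy in $\ell_p(\bN^2)$ for every $p\in[1,+\infty]$. Since $\ell_p(\bN^2)$ is a Banach space, completeness yields a limit $\bW^\ast\in\ell_p(\bN^2)$, which by Definition \ref{WConv-Inf} is exactly convergence of $\bW_n$. The bound \eqref{BasicAssumption} is furnished verbatim by Lemma \ref{Condition-on-Mask}(ii), which gives $\lim_{n\to\infty}L\|\bW_n\|_p<1$ for all $p\in[1,+\infty]$ from the $o(1)$ decay of each mask coefficient $w_i^{(n)}$.

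Having verified every hypothesis, Theorem \ref{UniformConvInf} immediately yields uniform convergence of $\cN_n$ on $\DD$ to a function in $C_{\ell_p(\bN)}(\DD)$, choosing $p\in[1,+\infty]$ when $\sigma(0)=0$ and $p=+\infty$ otherwise. I do not foresee a serious obstacle: Lemma \ref{Condition-on-Mask} already does the technical work of translating the mask condition \eqref{Mask-condition} into the matrix-norm conditions \eqref{BasicAssumption} and Cauchy-ness in $\ell_p(\bN^2)$, while Theorem \ref{UniformConvInf} absorbs the rest. The only point requiring care is to keep the padding convention for $\tilde\bW_n$, $\tilde\bbb_n$ consistent with \eqref{INFaugmentW1}--\eqref{INFAugmentb} so that Definitions \ref{NConv-Inf}--\ref{bConv-Inf} apply without ambiguity, and to note that the constancy of $\tau_n=\tau$ is what allows Lemma \ref{Condition-on-Mask} to control all $\tau+1$ mask coefficients uniformly in $n$.
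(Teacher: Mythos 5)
Your proposal is correct and follows exactly the paper's own route: the paper likewise invokes Lemma \ref{Condition-on-Mask} to verify the convergence of the weight matrices and condition \eqref{BasicAssumption}, and then concludes directly from Theorem \ref{UniformConvInf}. Your version merely spells out the bookkeeping (completeness of $\ell_p(\bN^2)$, the padding conventions) that the paper leaves implicit.
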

\begin{proof}
The hypothesis of this theorem together with Lemma \ref{Condition-on-Mask} ensures that the assumption of 
Theorem \ref{UniformConvInf} is satisfied. Hence, the result of this theorem follows directly from Theorem \ref{UniformConvInf}.
\end{proof}

The exponential convergence of the CNNs that we present next follows directly from the second part of Theorem \ref{UniformConvInf}.

\begin{theorem}\label{UniformConvRateInfCNN}
Suppose that $\sigma:\bR\to\bR$ is Lipschitz continuous with a Lipschitz constant $L$,  $\tau_n=\tau$, for all $n\in \bN$, with $\tau$ being a positive integer, and $\DD$ is a bounded set in $\bR^s$.  If the sequences $\bbb_n\in \bR^{m_n}$, $n\in\bN$, converges exponentially and $w_i^{(n)}$  satisfies the condition
\begin{equation}\label{Mask-condition-Exp}
    w_i^{(n)}={\cal O}\left(r^n\right), \ \  \mbox{for some }\ \ 0<r<1 \mbox{ and for all}\ \ i=0,1,\dots,\tau,
\end{equation}
then the neural networks $\cN_n$ converge uniformly and exponentially on  $\DD$ to a function in $C_{\ell_p(\bN)}(\DD)$, for $p\in [1,+\infty]$ if  $\sigma(0)=0$ and $p=+\infty$ if $\sigma(0)\neq 0$.
\end{theorem}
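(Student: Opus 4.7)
The plan is to deduce this theorem directly from the exponential convergence clause of Theorem \ref{UniformConvInf}, by verifying that every hypothesis of that theorem is automatically satisfied in the CNN setting under the stronger decay condition \eqref{Mask-condition-Exp}. Since $w_i^{(n)}=\mathcal{O}(r^n)$ with $r\in(0,1)$ implies in particular $w_i^{(n)}=o(1)$, Lemma \ref{Condition-on-Mask} already guarantees that $\tilde{\bW}_n$ is Cauchy in $\ell_p(\bN^2)$ and that $\lim_{n\to\infty}L\|\bW_n\|_p<1$, which is exactly hypothesis \eqref{BasicAssumption}. What remains is to upgrade the Cauchy property to \emph{exponential} convergence of $\tilde{\bW}_n$ to a limit $\bW^*\in\ell_p(\bN^2)$ in the sense of Definition \ref{WConv-Inf}.

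Next I would identify the limit and its rate. Re-examining estimate \eqref{WCauchyInf_p} from the proof of Lemma \ref{Condition-on-Mask}, we have for every $n,m\in\bN$
$$
\|\tilde{\bW}_{n+m}-\tilde{\bW}_n\|_{\ell_p(\bN^2)}\leq \sum_{k=0}^\tau|w_k^{(n+m)}-w_k^{(n)}|+\sum_{k=0}^\tau|w_k^{(n+m)}|,
$$
and assumption \eqref{Mask-condition-Exp} makes the right-hand side $\mathcal{O}(r^n)$ uniformly in $m$. Letting $m\to\infty$ componentwise shows that each $w_k^{(n)}\to 0$ exponentially, so the natural candidate limit is $\bW^*={\bf 0}\in\ell_p(\bN^2)$. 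Substituting $\bW^*={\bf 0}$ and invoking the same Riesz--Thorin/Mitjagin argument used in Lemma \ref{Condition-on-Mask} gives
$$
\|\tilde{\bW}_n-\bW^*\|_{\ell_p(\bN^2)}=\|\tilde{\bW}_n\|_{\ell_p(\bN^2)}\leq \sum_{k=0}^\tau|w_k^{(n)}|=\mathcal{O}(r^n),
$$
so $\tilde{\bW}_n$ indeed converges to $\bW^*$ at geometric rate $r$.

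Finally, assembling the pieces, the bias sequence $\bbb_n$ converges exponentially by hypothesis, the weight matrices $\bW_n$ converge exponentially with limit $\bW^*={\bf 0}$ by the previous step, and condition \eqref{BasicAssumption} holds by Lemma \ref{Condition-on-Mask}(ii); moreover, the admissible range of $p$ matches the one dictated by whether $\sigma(0)=0$. Hence the hypotheses of the second (exponential) conclusion of Theorem \ref{UniformConvInf} are fully met, and that theorem yields exponential and uniform convergence of $\cN_n$ in $C_{\ell_p(\bN)}(\DD)$ to a limit function, completing the proof. The only substantive step is pinning down $\bW^*$ and its exponential rate from \eqref{Mask-condition-Exp}, but because the bound \eqref{WCauchyInf_p} is already uniform in $m$, this reduces to a short bookkeeping calculation rather than a genuine obstacle.
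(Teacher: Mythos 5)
Your proposal is correct and follows essentially the same route as the paper: reduce to the second (exponential) clause of Theorem \ref{UniformConvInf} by checking its hypotheses via Lemma \ref{Condition-on-Mask}. The only difference is that you make explicit what the paper asserts in one line --- that \eqref{Mask-condition-Exp} forces $\tilde{\bW}_n\to\bW^*={\bf 0}$ at rate $\mathcal{O}(r^n)$ via the bound $\|\tilde{\bW}_n\|_{\ell_p(\bN^2)}\le\sum_{k=0}^\tau|w_k^{(n)}|$ --- which is a welcome clarification rather than a departure.
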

\begin{proof}
Condition \eqref{Mask-condition-Exp} implies that hypothesis \eqref{Mask-condition} is satisfied, which in turn ensures that Lemma \ref{Condition-on-Mask} holds true. Moreover, it implies that the sequence $\bW_n$, $n\in \bN$, converges exponentially. The result of this theorem follows directly from the second part of Theorem \ref{UniformConvInf}.
\end{proof}

Hypothesis \eqref{Mask-condition} on the filter mask may be relaxed if a different extension of matrices $\bW_n$, $n\ge 2$, is adopted. 
We next present a relaxation on the hypothesis \eqref{Mask-condition} of the filter mask. 
To this end, instead of using the zero-padding adopted previously for matrix extension, we adopt the constant padding along each of the diagonal and sub-diagonals.
Specifically, we extend  $\bW_n$, $n\geq2$, to  $\bar{\bW}_n\in\ell_p(\bN^2)$
by 
\begin{equation}\label{ConstantPadding}
    (\bar{\bW}_n)_{ij}:=\begin{cases}
    w^{(n)}_{i-j},& 0\leq i-j\leq \tau_n\\
    0, & \mbox{otherwise}.
    \end{cases}
\end{equation}
Note that the matrix $\bar\bW_n$ is a semi-infinite Toeplitz matrix whose diagonal and the first to the $\tau_n$-th sub-diagonals are the components of $\bw^{(n)}:=[w^{(n)}_0, w^{(n)}_1, \dots, w^{(n)}_{\tau_n}]^\top$.

We define $\bar{\cN}_n(\bx)$ by
\eqref{tilde-DNN} with $\tilde{\bW}_n$ being replaced by $\bar{\bW}_n$ and $\bar{\cN}_n(\bx)$ is an extension of $\cN_n(\bx)$, different from  $\tilde{\cN}_n(\bx)$. Then,  $\bar{\cN}_n(\bx)$, $n\in\bN$, satisfy the recursions \eqref{Step1-hat} and \eqref{Recursion-hat}.

\begin{definition}\label{CNN-NConv-Inf} 
We say that a sequence of vector-valued functions $\cN_n:\bR^s\to \bR^{m_n}$, $n\in \bN$, converges uniformly in $\DD$ to a function $\cN:\bR^s\to \ell_p(\bN)$ if the functions $\bar{\cN}_n:\bR^s\to \ell_p(\bN)$, $n\in \bN$, converges uniformly to $\cN$ in the space $C_{\ell_p(\bN)}(\DD)$. 
\end{definition}

We next study the uniform convergence of $\cN_n:\bR^s\to \bR^{m_n}$, $n\in \bN$. We first transfer a condition on the masks to the weight matrices.

\begin{lemma}\label{Assumption-on-Masks}
If  a sequence of masks $[w^{(n)}_0, w^{(n)}_1, \dots, w^{(n)}_k]\in\bR^{k+1}$, $n\in\bN$, satisfies the conditions that for each $k=0,1,\dots,\tau$, the sequence $w_k^{(n)}$, $n\in\bN$, converges in $\bR$ and
\begin{equation}\label{Mask-Condition}
    \lim_{n\to\infty}L\sum_{k=0}^\tau|w^{(n)}_k|<1,
\end{equation}
then 
$\lim_{n\to\infty}\|\bar\bW_n\|_{\ell_p(\bN^2)}$ exists and
\begin{equation}\label{CNN-BasicAssumption}
    \lim_{n\to\infty}L\|\bar\bW_n\|_{\ell_p(\bN^2)}<1.
\end{equation}
\end{lemma}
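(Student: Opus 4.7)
The plan is to reduce the statement to two facts about the semi-infinite banded Toeplitz matrices $\bar\bW_n$: (a) a uniform bound of the operator norm by the $\ell^1$-norm of the mask, valid for every $p\in[1,+\infty]$; and (b) convergence of $\bar\bW_n$ in $\ell_p(\bN^2)$ to a Toeplitz limit. Once these are in hand, the conclusion follows by continuity of the operator norm and by taking limits in hypothesis \eqref{Mask-Condition}.

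First I would establish the key operator-norm bound
\begin{equation*}
\|\bar\bW_n\|_{\ell_p(\bN^2)}\le \sum_{k=0}^{\tau}|w_k^{(n)}|,\ \ \mbox{for all}\ \ p\in [1,+\infty].
\end{equation*}
For $p=1$, the norm is the supremum of absolute column sums, and for each column of the banded Toeplitz matrix $\bar\bW_n$ this sum is at most $\sum_{k=0}^\tau|w_k^{(n)}|$. For $p=\infty$, by the supremum of absolute row sums, the same bound holds. The extension to every $p\in(1,+\infty)$ follows by the Mitjagin/Riesz--Thorin interpolation theorem already cited in the proof of Lemma \ref{Condition-on-Mask}. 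This is essentially the discrete Young inequality for convolution.

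Second, I would use the componentwise convergence of the masks to show that $\bar\bW_n$ is Cauchy, hence convergent, in $\ell_p(\bN^2)$. Since for each $k\in\{0,1,\dots,\tau\}$ the sequence $w_k^{(n)}$ converges in $\bR$ to some limit $w_k^*$, the difference $\bar\bW_{n+m}-\bar\bW_n$ is again a semi-infinite banded Toeplitz matrix with $\tau+1$ nonzero diagonals and mask entries $w_k^{(n+m)}-w_k^{(n)}$. Applying the bound from the first step to this difference gives
\begin{equation*}
\|\bar\bW_{n+m}-\bar\bW_n\|_{\ell_p(\bN^2)}\le \sum_{k=0}^{\tau}|w_k^{(n+m)}-w_k^{(n)}|,
\end{equation*}
which tends to $0$ uniformly in $m$ as $n\to\infty$. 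Hence $\bar\bW_n$ converges in $\ell_p(\bN^2)$ to a limit $\bar\bW^*$ which is itself Toeplitz with mask entries $w_k^*$, and $\|\bar\bW_n\|_{\ell_p(\bN^2)}\to \|\bar\bW^*\|_{\ell_p(\bN^2)}$ by continuity of the norm. This proves the existence of $\lim_{n\to\infty}\|\bar\bW_n\|_{\ell_p(\bN^2)}$.

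Finally, applying the first step bound to $\bar\bW^*$ itself and using \eqref{Mask-Condition} yields
\begin{equation*}
L\lim_{n\to\infty}\|\bar\bW_n\|_{\ell_p(\bN^2)}=L\|\bar\bW^*\|_{\ell_p(\bN^2)}\le L\sum_{k=0}^{\tau}|w_k^*|=\lim_{n\to\infty}L\sum_{k=0}^{\tau}|w_k^{(n)}|<1,
\end{equation*}
which is exactly \eqref{CNN-BasicAssumption}. The main obstacle is the operator-norm bound of the first step; everything else is routine once that inequality is available. Although Young's inequality is classical, the excerpt only mentioned the Mitjagin interpolation theorem in the proof of Lemma \ref{Condition-on-Mask}, so I would justify the $p=1$ and $p=+\infty$ endpoints explicitly by summing columns and rows of the banded Toeplitz structure and then invoke interpolation for the intermediate $p$.
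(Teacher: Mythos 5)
Your proposal is correct and follows essentially the same route as the paper's proof: endpoint computations of the $\ell_1$ and $\ell_\infty$ operator norms of the banded Toeplitz matrices (column sums and row sums), the Mitjagin interpolation theorem for intermediate $p$, Cauchyness of $\bar\bW_n$ from the componentwise convergence of the masks, and passage to the limit in the inequality $\|\bar\bW_n\|_{\ell_p(\bN^2)}\le\sum_{k=0}^\tau|w_k^{(n)}|$ combined with hypothesis \eqref{Mask-Condition}. The only cosmetic difference is that you identify the limit operator $\bar\bW^*$ explicitly and bound its norm, whereas the paper takes the limit directly in the norm inequality; both are equivalent.
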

\begin{proof}
First, we prove that 
$\lim_{n\to\infty}\|\bar\bW_n\|_{\ell_p(\bN^2)}$ exists. Since for each $k=0,1,\dots,\tau$, the sequence $w_k^{(n)}$, $n\in\bN$, converges in $\bR$, it is a Cauchy sequence in $\bR$. We next show that the sequence $\bar\bW_n$, $n\in\bN$, is Cauchy in $\ell_p(\bN^2)$.
To this end, we suppose that $n, m\in\bN$ are arbitrary. A direct computation leads to 
\begin{equation}\label{barWCauchy1}
    \|\bar{\bW}_{n+m}-\bar{\bW}_n\|_{\ell_1(\bN)}=\sum_{k=0}^\tau|w_k^{(n+m)}-w_k^{(n)}|
\end{equation}
and
\begin{equation}\label{barWCauchyInf}
\|\bar{\bW}_{n+m}-\bar{\bW}_n\|_{\ell_\infty(\bN)}=\sum_{k=0}^\tau|w_k^{(n+m)}-w_k^{(n)}|.
\end{equation}
Again, by the interpolation theorem of Mitjagin \cite{Mit}, it follows from equations \eqref{barWCauchy1} and \eqref{barWCauchyInf} for all $p\in [1,\infty]$ that
\begin{equation}\label{barWCauchyInf_p}
\|\bar{\bW}_{n+m}-\bar{\bW}_n\|_{\ell_p(\bN)}\leq\sum_{k=0}^\tau|w_k^{(n+m)}-w_k^{(n)}|.
\end{equation}
Let $\epsilon>0$ be arbitrarily small. Since for each $k=0,1,\dots,\tau$, the sequence $w^{(n)}_k$ converges, there exists $N\in \bN$ such that for all $n, m>N$, the right-hand-side of \eqref{barWCauchyInf_p} is bounded by $\epsilon$, which ensures that $\bar\bW_n$, $n\in \bN$, is Cauchy. Moreover, since for all $n,m\in\bN$
$$
\left|\|\bar\bW_n\|_{\ell_p(\bN^2)}-\|\bar\bW_{n+m}\|_{\ell_p(\bN^2)}\right|\leq \|\bar\bW_n-\bar\bW_{n+m}\|_{\ell_p(\bN^2)},
$$
the sequence $\|\bar\bW_n\|_{\ell_p(\bN^2)}$, $n\in\bN$, is Cauchy in $\bR$. Therefore, this sequence has a limit in $\bR$.

Likewise, by direct computation, we find that
\begin{equation}\label{barNorm1}
    \|\bar{\bW}_n\|_{\ell_1(\bN^2)}=\sum_{k=0}^\tau|w_k^{(n)}|
\end{equation}
and
\begin{equation}\label{barNormInf}
\|\bar{\bW}_n\|_{\ell_\infty(\bN^2)}=\sum_{k=0}^\tau|w_k^{(n)}|.
\end{equation}
Again, by the interpolation theorem of Mitjagin, it follows from equations \eqref{barNorm1} and \eqref{barNormInf} for all $p\in [1,\infty]$ that
\begin{equation}\label{barNormInf_p}
\|\bar{\bW}_n\|_{\ell_p(\bN^2)}\leq\sum_{k=0}^\tau|w_k^{(n)}|.
\end{equation}
Convergence of the sequence $\|\bar{\bW}_n\|_{\ell_p(\bN^2)}$, $n\in\bN$, together with inequality \eqref{barNormInf_p} and
hypothesis \eqref{Mask-Condition} leads to \eqref{CNN-BasicAssumption}.
\end{proof}

We next establish the uniform boundedness of $\bar\cN_n(\bx)$, $n\in\bN$.

\begin{lemma}\label{CNN-Boundedness}
Suppose that $\sigma:\bR\to\bR$ is Lipschitz continuous with a Lipschitz constant $L$ and $\DD\subseteq \bR^s$ is bounded. If there exists a constant $c>0$ such that $\|\tilde\bbb_n\|_{\ell_p(\bN)}\leq c$, for all $n\in \bN$ and the sequence of masks $[w^{(n)}_0, w^{(n)}_1, \dots, w^{(n)}_k]\in\bR^{k+1}$, $n\in\bN$, satisfies condition that for each $k=0,1,\dots,\tau$, the sequence $w_k^{(n)}$, $n\in\bN$, converges in $\bR$ and \eqref{Mask-Condition}, then there exist positive constants $c_1, c_2$ such that  
\begin{equation}\label{CNN-BundofNn}
    \|\bar\cN_n(\bx)\|_{\ell_p(\bN)}\leq c_1c_2+Lcc_2+c_2\max\{\|\sigma({\bf 0}_\infty)\|_{\ell_p(\bN)}: j\in \bN_n\}, \ \ \mbox{for all}\ \ n\in \bN,\ \bx\in \DD.
\end{equation}
Moreover, there exists a constant $\rho>0$ such that
$\|\bar\cN_n(\bx)\|_{\ell_p(\bN)}\leq \rho$, for all $n\in\bN$ and $\bx\in\DD$, for $p\in[1,+\infty]$ if $\sigma(0)=0$ and for $p=+\infty$ if $\sigma(0)\neq 0$.
\end{lemma}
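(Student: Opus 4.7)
The plan is to imitate the proof of Lemma \ref{Boundedness} from Section 4, but carried out in the sequence-space setting with the constant-padding extensions $\bar\bW_n$ in place of $\bW_n$. The essential ingredients are already in place: Lemma \ref{BOUND-hat} gives a universal bound on $\|\bar\cN_n(\bx)\|$ in terms of norms of $\bar\bW_n$ and $\tilde\bbb_n$, Lemma \ref{Assumption-on-Masks} translates the mask hypothesis \eqref{Mask-Condition} into the operator-norm condition $\lim_{n\to\infty} L\|\bar\bW_n\|_{\ell_p(\bN^2)}<1$, and Lemma \ref{Two_Bounds} then converts this asymptotic condition into a uniform bound on the products and sums of products that appear.

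Concretely, I would first apply Lemma \ref{BOUND-hat} with the pooling operator taken to be the identity (so $\mu=0$, $P=1$), the weight matrices $\hat\bW_n=\bar\bW_n$ and the bias vectors $\hat\bbb_n=\tilde\bbb_n$; since $\bar\cN_n$ satisfies the recursions \eqref{Step1-hat} and \eqref{Recursion-hat} with these choices, I obtain
$$
\|\bar\cN_n(\bx)\|_{\ell_p(\bN)} \leq \Big(\prod_{j=1}^n L\|\bar\bW_j\|_{\ell_p(\bN^2)}\Big)\|\bx\| + \sum_{j=1}^n\Big(\prod_{i=j+1}^n L\|\bar\bW_i\|_{\ell_p(\bN^2)}\Big)\big(L\|\tilde\bbb_j\|_{\ell_p(\bN)}+\|\sigma({\bf 0}_\infty)\|_{\ell_p(\bN)}\big),
$$
where all zero vectors in Lemma \ref{BOUND-hat} now collapse to the single zero sequence ${\bf 0}_\infty\in\ell_p(\bN)$. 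Next, Lemma \ref{Assumption-on-Masks} supplies $\lim_{n\to\infty}L\|\bar\bW_n\|_{\ell_p(\bN^2)}<1$, and Lemma \ref{Two_Bounds} applied to $\alpha_n:=L\|\bar\bW_n\|_{\ell_p(\bN^2)}$ then produces a constant $c_2>0$ that simultaneously bounds $\prod_{j=1}^n\alpha_j$ and $\sum_{j=1}^n\prod_{i=j+1}^n\alpha_i$ for every $n\in\bN$. Combining these bounds with the hypothesis $\|\tilde\bbb_j\|_{\ell_p(\bN)}\leq c$ and the boundedness of $\DD$, from which one extracts a constant $c_1>0$ with $\|\bx\|\leq c_1$ for all $\bx\in\DD$, yields exactly the asserted estimate \eqref{CNN-BundofNn}.

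The ``moreover'' assertion then reduces to a dichotomy on $\sigma(0)$. If $\sigma(0)=0$, the constant sequence $\sigma({\bf 0}_\infty)$ is identically zero and hence has $\ell_p(\bN)$ norm equal to $0$ for every $p\in[1,+\infty]$, so one may take $\rho:=c_1c_2+Lcc_2$. If $\sigma(0)\neq 0$, the sequence $\sigma({\bf 0}_\infty)=(\sigma(0),\sigma(0),\ldots)$ has finite $\ell_p(\bN)$ norm only when $p=+\infty$, in which case the norm equals $|\sigma(0)|$ and one sets $\rho:=c_1c_2+Lcc_2+c_2|\sigma(0)|$. The only step that requires some care is the bookkeeping for the first layer, since $\bar\bW_1$ acts from $\bR^s$ into $\ell_p(\bN)$ rather than within $\ell_p(\bN^2)$; this is handled exactly as $\tilde\bW_1$ in \eqref{INFaugmentW1} via the corresponding operator norm, and no essentially new estimate is needed. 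I do not expect any genuine obstacle: every nontrivial step is already packaged into the lemmas cited above, and the proof is a direct assembly of them.
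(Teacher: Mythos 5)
Your proposal is correct and follows essentially the same route as the paper's own proof: invoke Lemma \ref{Assumption-on-Masks} to obtain \eqref{CNN-BasicAssumption}, apply Lemma \ref{BOUND-hat} with $\cP_\mu$ the identity, and then bound the resulting products and sums (via Lemma \ref{Two_Bounds}) to get \eqref{CNN-BundofNn} and the uniform bound $\rho$ according to whether $\sigma(0)=0$. The paper's proof is merely more terse, leaving the role of Lemma \ref{Two_Bounds} and the $\sigma(0)$ dichotomy implicit, both of which you correctly spell out.
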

\begin{proof}
By Lemma \ref{Assumption-on-Masks}, we have that inequality \eqref{CNN-BasicAssumption} holds true. This together with Lemma \ref{BOUND-hat} with $\cP_\mu$ being the identity operator ensures that estimate \eqref{CNN-BundofNn} is satisfied. It follows from \eqref{CNN-BundofNn} that $\|\bar\cN_n(\bx)\|_{\ell_p(\bN)}$ is bounded for all $n\in\bN$ and $\bx\in\DD$, for $p\in[1,+\infty]$ if $\sigma(0)=0$ and for $p=+\infty$ if $\sigma(0)\neq 0$.
\end{proof}

With the above preparation, we are ready to present our second uniform convergence theorem for CNNs.

\begin{theorem}\label{CNN-UniformConvergence}
Suppose that $\sigma:\bR\to\bR$ is Lipschitz continuous with a Lipschitz constant $L$ and $\DD\subseteq \bR^s$ is an arbitrary bounded set. If the sequence $\bbb_n$, $n\in\bN$, converges, and for each $k=0,1,\dots, \tau$,   $\{w^{(n)}_k\}$ is convergent and satisfies hypothesis \eqref{Mask-Condition}
then the neural networks $\cN_n$ converge uniformly to a function in $C_{\ell_p(\bN)}(\DD)$ for all $p\in[1,\infty]$ if $\sigma(0)=0$ and for $p=+\infty$ if $\sigma(0)\neq 0$.

Moreover, if the sequences $\bbb_n$, and $w^{(n)}_k$, $n\in\bN$, for all $k=0,1,\dots, \tau$, converge exponentially, then the uniform convergence of $\cN_n$ to a function in $C_{\ell_p(\bN)}(\DD)$ is exponential,  for all $p\in[1,\infty]$ if $\sigma(0)=0$ and for $p=+\infty$ if $\sigma(0)\neq 0$.
\end{theorem}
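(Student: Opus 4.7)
The plan is to reduce Theorem \ref{CNN-UniformConvergence} to the general framework developed in Section 3, specifically to Theorem \ref{Fixed-widths-hat} for the first assertion and Theorem \ref{Order-Fixed-widths-hat} for the exponential rate, applied to the constant-padded extensions $\bar{\cN}_n(\bx)$ with pooling operator $\cP_\mu$ taken to be the identity (so $P=1$ and $\mu=0$). By Definition \ref{CNN-NConv-Inf}, uniform convergence of $\cN_n$ is by construction equivalent to uniform convergence of $\bar{\cN}_n$ in $C_{\ell_p(\bN)}(\DD)$, so it suffices to verify the hypotheses of Theorem \ref{Fixed-widths-hat} for $\bar{\cN}_n$.

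First, I would verify convergence of the weight and bias sequences. For the weights, Lemma \ref{Assumption-on-Masks} applies directly: pointwise convergence of the filter mask coefficients $w_k^{(n)}$ together with \eqref{Mask-Condition} implies that $\bar{\bW}_n$ is Cauchy in $\ell_p(\bN^2)$ and therefore converges to some $\bW^*\in\ell_p(\bN^2)$, and moreover that
\[
\lim_{n\to\infty} L\|\bar{\bW}_n\|_{\ell_p(\bN^2)}<1,
\]
which is exactly hypothesis \eqref{BasicAssumption-hat} with $P=1$. For the biases, convergence of $\bbb_n$ in the sense of Definition \ref{bConv-Inf} means that the zero-padded extensions $\tilde{\bbb}_n$ converge in $\ell_p(\bN)$; in particular, $\|\tilde{\bbb}_n\|_{\ell_p(\bN)}$ is bounded, say by $c>0$. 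Next, I would invoke Lemma \ref{CNN-Boundedness} to obtain the uniform bound
\[
\sup_{\bx\in\DD}\|\bar{\cN}_n(\bx)\|_{\ell_p(\bN)}\leq \rho,\quad n\in\bN,
\]
for the ranges of $p$ dictated by whether $\sigma(0)=0$; note that this lemma is tailored precisely to the split in $p$ that appears in the theorem statement. All hypotheses of Theorem \ref{Fixed-widths-hat} (with $l=\ell_p(\bN)$ in spirit and $\cP_\mu$ the identity) are then satisfied, and the first assertion follows.

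For the exponential convergence, I would refine each step above under the additional hypothesis that $\bbb_n$ and each $w_k^{(n)}$ converge at an exponential rate. The bound \eqref{barWCauchyInf_p} established in the proof of Lemma \ref{Assumption-on-Masks} gives
\[
\|\bar{\bW}_{n+m}-\bar{\bW}_n\|_{\ell_p(\bN^2)}\leq \sum_{k=0}^{\tau}|w_k^{(n+m)}-w_k^{(n)}|,
\]
so exponential convergence of the finitely many mask sequences $\{w_k^{(n)}\}_n$ transfers directly to exponential convergence of $\bar{\bW}_n$ in $\ell_p(\bN^2)$ to its limit $\bW^*$. Likewise, exponential convergence of $\bbb_n$ in the sense of Definition \ref{bConv-Inf} means exponential convergence of $\tilde{\bbb}_n$ in $\ell_p(\bN)$. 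Together with the uniform boundedness from Lemma \ref{CNN-Boundedness} and \eqref{CNN-BasicAssumption}, the hypotheses of Theorem \ref{Order-Fixed-widths-hat} are met with $P=1$, yielding exponential uniform convergence of $\bar{\cN}_n$ on $\DD$ and hence of $\cN_n$ in the sense of Definition \ref{CNN-NConv-Inf}.

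I do not anticipate a substantial obstacle, since the hard analytic work (telescoping the recursion, estimating the three terms $J_{n,1}, J_{n,2}, J_{n,3}$, and extracting exponential rates from geometric decay) has already been performed in Lemma \ref{ThreeTerms-hat} and Theorems \ref{Fixed-widths-hat}--\ref{Order-Fixed-widths-hat}. The only points requiring a little care are (i) checking that the constant-padded CNN $\bar{\cN}_n$ indeed satisfies the recursion \eqref{Step1-hat}--\eqref{Recursion-hat} in $\ell_p(\bN)$ with $\cP_\mu=\mathrm{id}$, which is immediate from the definition of $\bar{\bW}_n$ via \eqref{ConstantPadding} and the fact that $\bar{\bW}_n$ acts as an $\ell_p(\bN)\to\ell_p(\bN)$ operator; and (ii) ensuring the bifurcation in $p$ is handled correctly, which is automatic from Lemma \ref{CNN-Boundedness}.
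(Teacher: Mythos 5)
Your proposal is correct and follows essentially the same route as the paper: the paper's proof likewise invokes Lemma \ref{Assumption-on-Masks} and Lemma \ref{CNN-Boundedness} to verify the hypotheses, and then concludes by applying Theorems \ref{Fixed-widths-hat} and \ref{Order-Fixed-widths-hat} with $\cP_\mu$ the identity operator. Your write-up simply makes explicit the verification steps (convergence of $\bar{\bW}_n$, the limit condition \eqref{CNN-BasicAssumption}, boundedness of $\tilde{\bbb}_n$, and the transfer of exponential rates through \eqref{barWCauchyInf_p}) that the paper leaves implicit.
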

\begin{proof}
Since for all $k=0,1,\dots, \tau$, the sequence  $w^{(n)}_k$, $n\in\bN$, satisfies hypothesis \eqref{Mask-Condition}, we see that Lemmas
\ref{Assumption-on-Masks} and \ref{CNN-Boundedness} hold true.
This theorem follows from Theorems \ref{Fixed-widths-hat} and \ref{Order-Fixed-widths-hat} with $\cP_\mu$ being the identity operator.
\end{proof}

Theorem \ref{CNN-UniformConvergence} may be further extended to CNNs with pooling and we leave this to the interested reader.

{\small
\bibliographystyle{amsplain}

}

\end{document}